\documentclass[11pt]{article}

\usepackage[letterpaper, left=1.25in, right=1.25in, top=1.2in, bottom=1.2in]{geometry}
\usepackage[colorlinks=true, citecolor=blue, linkcolor=blue, urlcolor=blue]{hyperref}
\usepackage[dvipsnames]{xcolor}

\usepackage{natbib} 
\usepackage{caption}
\usepackage{subcaption}

\usepackage{amsmath, amssymb, mathtools, bbm}
\usepackage{amsfonts}
\usepackage{bm} 
\usepackage{physics}  
\usepackage{mathpazo}

\usepackage{amsthm}

\theoremstyle{plain}
\newtheorem{theorem}{Theorem}[section]
\newtheorem{lemma}[theorem]{Lemma}

\theoremstyle{definition}

\newtheorem{assumption}[theorem]{Assumption}

\theoremstyle{remark}

\usepackage{multirow} 
\usepackage{float}

\usepackage{tikz}
\usetikzlibrary{shapes, arrows.meta, positioning, calc}

\DeclareMathOperator*{\argmax}{arg\,max}

\let\Pr\undefined
\DeclareMathOperator{\Pr}{\mathbb{P}}
\newcommand{\indep}{\mathbin{\perp\!\!\!\!\!\:\perp}}

\newcommand{\ind}{\mathbf{1}}

\usepackage{graphicx}
\usepackage{booktabs}
\usepackage{microtype}
\usepackage{setspace}
\usepackage{placeins}
\usepackage{comment}

\makeatletter
\newcommand{\assumplabel}[2]{%
   \protected@write \@auxout {}{\string\newlabel{#1}{{#2}{\thepage}{#2}{#1}{}}}%
   \hypertarget{#1}{#2}%
}
\makeatother

\usepackage{longtable}

\newcommand{\sur}{S}
\newcommand{\surlow}{s}
\newcommand{\dimsur}{d_{\mathcal{S}}}

\newcommand{\first}{F}

\newcommand{\geo}{G}
\newcommand{\geolow}{g}
\newcommand{\dimgeo}{d_{\mathcal{G}}}

\newcommand{\targ}{R}
\newcommand{\targlow}{r}
\newcommand{\dimtarg}{d_{\mathcal{R}}}

\newcommand{\coarsegrp}{C}
\newcommand{\coarsegrplow}{c}
\newcommand{\dimcoarsegrp}{d_\mathcal{C}}

\newcommand{\listmem}{L}

\newcommand{\emb}{E}
\newcommand{\emblow}{e}
\newcommand{\dimemb}{d_{\mathcal{E}}}

\newcommand{\bin}{B}
\newcommand{\binlow}{b}
\newcommand{\dimbin}{d_{\mathcal{B}}}

\newcommand{\outcome}{Y}
\newcommand{\outcomelow}{y}

\newcommand{\Prhat}
{\widehat{\Pr}}

\newcommand{\rhovec}{\bm{\rho}}
\newcommand{\rhovechat}{\widehat{\bm{\rho}}}
\newcommand{\pivec}{\bm{\phi}}
\newcommand{\pivechat}{\widehat{\bm{\phi}}}
\newcommand{\mmat}{\bm{M}}
\newcommand{\mmathat}{\widehat{\bm{M}}}

\newcommand{\Pimat}{\bm{\Pi}}
\newcommand{\betamat}{\bm{\beta}}

\newcommand{\Plisttarg}{\pi}

\newcommand{\Plisttargmat}{\bm{\Pi}}

\usepackage{algorithm}
\usepackage{algpseudocode}
\usepackage{float}

\newcommand{\blind}{0}
\newcommand{\tit}{Probabilistic Race and Ethnicity Prediction Using Group-Specific Name Lists}
\title{{\bf\tit}}
\if0\blind

\author{Kyla Chasalow\thanks{PhD Candidate, Department of Statistics, Harvard University.} \and Noah Dasanaike\thanks{PhD Candidate, Department of Government, Harvard University.} \and Kosuke Imai\thanks{Edith and Benjamin Geisinger Professor, Department of Government and
      Department of Statistics, Harvard University.  1737 Cambridge
      Street, Institute for Quantitative Social Science, Cambridge MA
      02138.  Email: \href{mailto:imai@harvard.edu}{imai@harvard.edu}
      URL:
      \href{https://imai.fas.harvard.edu}{https://imai.fas.harvard.edu}}}
\date{\today}

\begin{document}

\maketitle

\begin{abstract}
Statistically valid estimation of racial and ethnic disparities often requires inferring the probability that an individual belongs to a particular racial or ethnic group given only their name and geographic location. The standard approach, Bayesian Improved Surname Geocoding (BISG), relies on group population frequencies for each name. Although the U.S. Census Bureau provides such information for common names and a limited set of racial categories, comparable data do not exist for many racial and ethnic groups and are rarely available outside the U.S. We propose the list-powered BISG ($\ell$BISG) method, which can be used to derive calibrated group probabilities from group-specific name lists.  These lists may be compiled based on expert knowledge or generated synthetically using large language models (LLMs), and thus may be subject to unknown biases. Representing names as embeddings, we treat list membership as a proxy prediction task and apply a correction based on proximal inference to recover the target group probabilities. We validate the method on U.S. voter files with self-reported race, on the full-count 1900 U.S. Census, and on the Lebanese voter registry. We find that LLM-generated name lists yield accurate and well-calibrated probabilities as well as precise disparity estimates comparable to those obtained using methods that require name–race data.  
Thus, $\ell$BISG substantially broadens the applicability of probabilistic race and ethnicity prediction to settings where name–race data are unavailable. \\

\bigskip
\noindent {\bf Keywords:} Bayesian Improved Surname Geocoding, embeddings, large language models, proximal inference, racial disparity
\end{abstract}
\newpage

\section{Introduction}

Measuring racial and ethnic disparities across social, economic, and political domains requires information about individuals' race in administrative and other records \citep[e.g.,][]{adjayegbewonyo2014bisg, fraga2018turnout, greenwald2024regulatory}. However, race is often unavailable because it is withheld by law or omitted due to privacy concerns and the sensitivity of collecting such information \citep{fiscella2006geocoding}. As a result, researchers must infer individuals' racial or ethnic group membership from the information that is available, most commonly their name and place of residence.

Bayesian Improved Surname Geocoding (BISG) combines surname-based racial information, derived from the U.S. Census Bureau’s surname list, with the racial composition of an individual’s place of residence to estimate individual race probabilities. The method assumes that surname and geolocation are conditionally independent given unobserved race \citep{elliott2009using, imaikhanna2016, Imai_etal_2022}. BISG has been extended to incorporate first names, resulting in the Bayesian Improved First Name Surname Geocoding (BIFSG) method \citep{Voicu2018}.

BISG has become the standard approach for racial demographic inference across many domains. Its applications range from fair lending enforcement \citep{cfpb2014proxy, greenwald2024regulatory} to studies of health disparities \citep{adjayegbewonyo2014bisg} and racial turnout gaps in voting \citep{fraga2018turnout}, and it is supported by widely used software implementations \citep{wru}. BISG returns a probability for each individual belonging to each group, which is essential for downstream statistical analyses. Probabilistic outputs enable unbiased estimation of group disparities in settings where deterministic classifications can introduce bias \citep{McCartan02102025}.

However, BISG requires data that exist in few settings: a surname-by-race frequency table (and BIFSG additionally requires an analogous table for first names) and the geographic prevalence of each racial group. For these reasons, BISG is most commonly applied in the United States. The U.S. Census Bureau publishes the former tables only for ``frequently occurring" names -- names that appear at least 100 times in the Census returns -- and provides these tables for a limited set of racial categories \citep{census2016surnames}. Detailed geographic distributions of groups are similarly confined to these few racial categories. Consequently, individuals with uncommon names receive only an uninformative geography-based prior, and the group memberships for which BISG can be plausibly estimated are few. The required data are unavailable for more detailed ethnic groups and for most countries outside the United States.

We propose list-powered BISG ($\ell$BISG), which estimates calibrated race probabilities without requiring a name-by-race frequency table (Section~\ref{sec-keymethod}).  We consider two settings: a setting in which geographic prevalence is available for all groups of interest and a limited information setting in which geographic prevalence is either unavailable or available only for coarser groups (Section~\ref{sec-recovergeography}).  The proposed $\ell$BISG relies on relatively short, approximate group-specific name lists, which can be compiled from expert knowledge or generated synthetically using large language models (LLMs) (Section~\ref{sec-lists}). 

To leverage these lists, we first embed names into a vector space, in which morphologically and linguistically similar names are close to one another \citep{ye2017nationality, wang2024e5, Dasanaike2026zeroshot, DasanaikeImai2026}. We then apply a proximal inference approach \citep{KurokiPearl2014} to recover race probabilities from a model that predicts whether a name embedding belongs to a given list. We also show that if the information about geographic prevalence is available, then $\ell$BISG can recover individual race probabilities by using embeddings alone without name lists so long as spatial clustering of the embeddings is informative about group membership. However, name lists play an essential role in diagnostics of $\ell$BISG as well as in its application in the absence of geographic prevalence information.

Like standard BISG \citep{Voicu2018}, $\ell$BISG naturally extends to incorporating group-specific first name lists, yielding what we call $\ell$BIFSG. Moreover, because $\ell$BISG produces calibrated group probabilities, it can also substitute for BISG in downstream analyses. For instance, $\ell$BISG can be used with the BIRDiE model \citep{McCartan02102025}, which enables valid estimation of group disparities with respect to a given outcome under the assumption that the outcome is independent of name given race and geography.

We evaluate the empirical performance of $\ell$BISG using three datasets with individual-level ground truth (Section~\ref{sec-validation}): (1) the Florida and North Carolina voter files, which include self-reported race for registered voters; (2) the complete-count 1900 U.S. Census, which enumerates Chinese and Japanese as separate ethnic groups \citep{Nelson2025}; and (3) the 2018 Lebanese voter file, scraped from government records, which contains individuals' self-reported religious sect. Because these settings include ethnic groups and religious sects, we use the terms race, ethnicity, and group interchangeably throughout the paper. Across these settings, $\ell$BISG achieves prediction accuracy and probability calibration comparable to those of BISG, and otherwise performs well even where the requisite data for BISG are unattainable. We also show that this strong performance carries over to downstream analyses, with $\ell$BISG yielding group disparity estimates nearly identical to those obtained using BISG under the BIRDiE framework \citep{McCartan02102025}. 

By eliminating the need for name–race frequency tables and requiring only easily constructed name lists, $\ell$BISG substantially broadens the scope of race and ethnicity imputation.

\paragraph{Related literature.}

A growing literature seeks to overcome the limitations of the Census surname list that BISG requires by training machine learning models to infer race or ethnicity directly from names. Character-level and transformer-based models predict race from the sequence of letters in a name \citep{sood2018predicting, parasurama2021racebert}; interpretable models leverage patterns in name popularity \citep{jain2022importance}; and embedding-based approaches learn representations of names from social and communication data without using race labels \citep{ye2017nationality, ye2019secret}. More recently, LLMs have been used to infer ethnicity, religious sect, or caste from names without task-specific training, both in the United States and elsewhere \citep{Dasanaike2026zeroshot}.

These approaches typically assign each individual to a single group and are often proposed as alternatives to BISG. However, classification errors make na\"{\i}vely treating these predictions as ground truth problematic, leading to biased estimates in downstream disparity analyses \citep{chen2019fairness,xin2026proxy}. Rather than replacing BISG, $\ell$BISG leverages the outputs of these and other classifiers to recover calibrated probabilistic predictions, which can then be used for statistically valid downstream inference \citep{McCartan02102025}.

A number of existing methods extend BISG itself. As noted above, \citet{Voicu2018} incorporates first-name information through an additional Bayes factor (BIFSG), while \citet{Imai_etal_2022} develops a fully Bayesian version (fBISG) that accounts for potential measurement error in the Census surname data. \citet{rosenman2023names} augments the Census surname list with name dictionaries derived from Southern voter files, and \citet{DasanaikeImai2026} replaces the uninformative prior for uncommon surnames with embedding-based predictions (eBISG). Lastly, \citet{greengard2025calibrated} improves the calibration of BISG probabilities to relax BISG's usual surname-geography conditional independence assumption. Nevertheless, all of these methods rely on a name-race frequency table. In contrast, $\ell$BISG eliminates this requirement while remaining compatible with the same downstream analyses and naturally extending to incorporate first-name information (Sections~\ref{sec-firstnames}~and~\ref{sec-val-voterfiles}).

Curated name lists have long been used to identify and sample members of specific populations. Distinctive surname lists have been developed for American Jews \citep{himmelfarb1983surnames}, Asian Americans \citep{lauderdale2000asian}, and, with formal validation, South Asian and Chinese populations in administrative health data \citep{shah2010surname}. These approaches often use curated lists as deterministic filters, classifying individuals solely on the basis of whether their names appear on the list (see \cite{chasalow2026sampling} for an exception). In contrast, $\ell$BISG treats a list as a noisy proxy and generalizes beyond the listed names using name embeddings, allowing previously unseen names to receive informative probabilistic predictions.

Finally, name-based prediction is increasingly used to measure representation. Recent applications include inferring the race of public officials \citep{leevelez2025} and evaluating machine-learning variants of BISG for redistricting \citep{deluca2024power}. At the same time, prediction errors have been shown to distort the downstream analyses that rely on these predictions \citep{argylebarber2024, greenwald2024regulatory}. By making calibrated probabilistic predictions available for a much broader range of groups and settings, $\ell$BISG extends the scope of these applications while supporting statistically valid downstream inference.

\section{The List-powered BISG Method}\label{sec-keymethod}

We now introduce our $\ell$BISG methodology for obtaining the necessary probabilistic predictions when information about the frequency of names by group is unavailable. In this section, we assume that information about the geographic distribution of the group variable $\targ$ is available, meaning that $\Pr(\targ=\targlow\mid \geo=\geolow)$ are known from census data or have been estimated from surveys for each geography $\geolow$ (see Section~\ref{sec-recovergeography} for an extension to the setting in which such information is unavailable or available only for coarser groups). 

However, for surnames, we only suppose that for each group $\targlow$, we have a list of names that are distinctive, common, or typical of that group. These lists may even have some names in common, though more overlap will make groups less distinguishable. In some cases, one group may represent a residual or ``other'' group not of interest for analysis. Although helpful to have a list for this group if possible, it will not be strictly necessary for $\ell$BISG. In Section~\ref{sec-lists}, we further discuss desirable properties of such lists and how one may obtain them. For now, let $\listmem_\targlow = \listmem_\targlow(\sur)$ be a binary indicator for group $\targlow$ which is $1$ if 
name $\sur$ is in that group's list and $0$ otherwise. 


\subsection{The value of probabilistic prediction in downstream analysis}\label{sec-valueofprediction}

Before introducing $\ell$BISG, we briefly explain why probabilistic prediction of an individual’s race is essential for valid statistical estimation of racial disparities.
Suppose we observe a random sample of $n$ individuals from a population. For each individual, we observe their geographic location $\geo_i  \in \mathcal{G}$, name $\sur_i \in \mathcal{S}$ (a first name, surname, or both), and outcome $\outcome_i \in \mathcal{Y}$, but not their membership $\targ_i \in \mathcal{R}$ in a racial, ethnic, or other group of interest. Let $\dimgeo:=|\mathcal{G}|$, $\dimsur:=|\mathcal{S}|$, and $\dimtarg:=|\mathcal{R}|$ denote the numbers of possible values of these variables in the population, and assume each is finite. Our objective is to predict $\targ_i$ from each individual's name $S_i$ and geographic location $G_i$, and then use these probabilistic predictions to estimate quantities such as disparities in the outcome $\outcome$ across groups.

A common approach is to assign each individual to the group with the highest predicted probability based on their name and geographic location. However, this hard classification ignores classification error and can bias downstream analyses. To illustrate, consider the case in which both $\targ$ and $\outcome$ are binary. Suppose each individual is assigned to the racial group with the highest probability, i.e.,
$$
\targ^* = \argmax_{\targlow \in \{0,1\}} \Pr(\targ=\targlow \mid \sur,\geo).
$$
We refer to this as the oracle classifier because it assumes the true conditional probabilities are known, thereby eliminating estimation error and model misspecification. 

Now, suppose the outcome is independent of name conditional on race and geography, $\outcome \indep \sur \mid \targ, \geo$, which is a key identification assumption for downstream analyses of racial disparities \citep[see][]{McCartan02102025}.
Appendix \ref{app-threshbias} shows that if we estimate the racial disparity in geography $\geo=\geolow$ using the oracle classification $\targ^*$, the resulting bias is given by,
{\small \begin{align*} 
&[\Pr(\outcome=\outcomelow\mid \targ^*=1,\geolow)-\Pr(\outcome=\outcomelow\mid \targ^*=0,\geolow)]-[\Pr(\outcome=\outcomelow\mid \targ=1,\geolow)-\Pr(\outcome=\outcomelow\mid \targ=0,\geolow)]\\
=&-[\Pr(\outcome=\outcomelow\mid \targ=1,\geolow)-\Pr(\outcome=\outcomelow\mid \targ=0,\geolow)]
\times [\Pr(\targ=1\mid \sur\in\mathcal{C}_{0g},\geolow)+\Pr(\targ=0\mid \sur\in\mathcal{C}_{1g},\geolow)],
\end{align*}}where $\mathcal{C}_{rg}$ denotes the set of names classified as $\targ^*=r$ in geography $\geolow$. Related results appear in \cite{chen2019fairness} and \cite{xin2026proxy}.

The bias in each geography is therefore proportional to the total misclassification rate and has the opposite sign of the true racial disparity. Consequently, hard classification systematically attenuates the estimated disparity toward zero: if the true racial disparity is positive, it is underestimated, whereas if it is negative, it is overestimated. Moreover, the bias is zero if and only if the classification is perfect or the true racial disparity is zero.

Empirically, we find that the misclassification rate can be sizable.  For a brief illustration, we use the Florida voter file from Section~\ref{sec-val-voterfiles}, which contains self-reported race. We compute standard BISG probabilities $\Pr(\targ\mid\sur,\geo)$ for every voter using the Census surname table and the block-level race shares. We estimate the racial composition of the electorate by assigning each registrant to their single most likely race and summing across individuals. 

Compared to the self-reported population composition, the hard classification over-predicts the White share by $5.8$ percentage points (ppts) and under-predicts the Black share by $2.2$ ppts. Using the average estimated probability instead of the hard classifications reduces these errors to $-1.4$ and $-0.8$ ppts, respectively, and also causes the mean absolute error of the estimated racial composition, averaged over all five categories, to drop from $2.3$ to $1.0$ ppts.

For downstream estimation, \cite{McCartan02102025} proposes the BIRDiE framework, showing that if the outcome is conditionally independent of name given race and geography (i.e., $\outcome \indep \sur \mid \targ,\geo$), racial disparities can be estimated without bias using calibrated racial probabilities, $\Pr(\targ=\targlow\mid\sur,\geo)$, rather than imputing race through classification. This assumption is often more credible than the assumption required by the standard weighting approach, which assumes that race is conditionally independent of the outcome given name and geography (i.e., $\outcome \indep \targ \mid \sur, \geo$) \citep[see][for details]{McCartan02102025}. Under either assumption, accurate estimation of calibrated race probabilities is essential for racial disparity analysis.  In our empirical validations (Section~\ref{sec-validation}), we use the BIRDiE approach with $\ell$BISG probabilities.

\subsection{Using name embeddings for proxy prediction}\label{sec-nameembed}

Name lists contain information about the underlying patterns and structures associated with names from a particular group. For example, lists could suggest that surnames ending in ``dez'' are often Hispanic, whereas names starting with ``zh'' are more likely to be Chinese. Such regularities imply that names not appearing on a group-specific list but sharing similar structural or linguistic characteristics may also be associated with that group.

Dense text embeddings provide a natural way to formalize these intuitions mathematically and at scale. They represent names as vectors in a high-dimensional space, placing structurally, linguistically, or otherwise similar names closer together based on patterns learned from the corpora used to train the embedding model \citep{ye2017nationality, ye2019secret, wang2024e5}. As a result, the embeddings derived from a group-specific name list encode information about the similarity among names both on and off the list, allowing us to identify names that are likely to belong to the same group even if they do not explicitly appear in the original list.

Suppose that for each name $\sur$, we obtain a $\dimemb$-dimensional text embedding vector $\emb=\emb(\sur)$ that captures its linguistic and structural features. \citet{DasanaikeImai2026} demonstrates the value of an embedding-based BISG approach (eBISG), which uses Census name tables to learn $\Pr(\targ=\targlow\mid \emb,\geo)$ and generalizes the resulting model to rare names whose frequencies are unavailable.

In our setting, however, no such name–group frequency data are available. Instead, we formulate a new proxy prediction task. For each group-specific name list, represented by the indicator $\listmem_{\targlow}$, we model the probability that a name with embedding $\emb$ belongs to the list. Although list membership given name is deterministic (i.e., $\listmem_{\targlow}\in \{0,1\}$), the conditional probability of list membership given its embedding, $\Pr(\listmem_{\targlow}=1\mid \emb,\geo)$, which we call the {\it list score},
is generally nontrivial. We therefore estimate this mapping from embeddings to the probability of list membership using standard supervised machine learning methods.

Unfortunately, this proxy prediction task does not directly recover the target probabilities $\Pr(\targ=\targlow\mid \emb,\geo)$. Although $\Pr(\listmem_{\targlow}=1\mid \emb,\geo)$ is generally positively associated with $\Pr(\targ=\targlow\mid \emb,\geo)$, the two quantities are not equal. In Appendix~\ref{app-rawf}, we show that this list score is not well calibrated, which means it is likely to yield biased downstream estimates. To address this issue, we next show how to recover the target probabilities from the proxy predictions.

\subsection{Identifying the target probabilities from proxy prediction}\label{sec-meth-getRprob}

Before introducing the $\ell$BISG method, we state the assumptions required for identification. The standard identifying assumption underlying BISG is that, conditional on group membership, name and geography are independent \citep{elliott2009using}.  
\begin{assumption}[BISG] \label{a-bisg}
$$
\sur \indep \geo \mid \targ.
$$
\end{assumption}

Because the embedding is a deterministic function of the name, i.e., $\emb=\emb(\sur)$, this assumption implies the identifying assumption used by eBISG \citep{DasanaikeImai2026}, namely that embedding and geography are conditionally independent given group membership.
\begin{assumption}[eBISG] \label{a-ebisg}
$$
\emb \indep \geo \mid \targ.
$$
\end{assumption}

Under Assumption~\ref{a-ebisg}, Bayes Rule implies
\begin{align}
        \Pr(\targ = \targlow \mid \emb, \geo)
        & \propto \label{eq-keybayesrule}\Pr(\emb\mid \targ=\targlow)\Pr(\targ=\targlow\mid \geo).
\end{align}
Although we assume that $\Pr(\targ=r\mid\geo)$ is identified from an external source such as Census data, no analogous data are available to estimate the embedding distribution $\Pr(\emb\mid\targ=\targlow)$.

The proposed $\ell$BISG method is based on the observation that under Assumption~\ref{a-ebisg}, the following equality holds for each $\targlow$ and $g$,
\begin{equation}\label{eq-keysystem1}
    \Pr(\emb = \emblow \mid \geo=\geolow) = \sum_{\targlow=1}^{\dimtarg} \Pr(\emb=\emblow \mid \targ=\targlow )\Pr(\targ = \targlow \mid \geo=\geolow). 
\end{equation}
Thus, for each value of $e$, we have a system of $\dimgeo$ linear equations with $\dimtarg$ unknowns.  Under the following rank condition, we can uniquely identify $\Pr(\emb=\emblow \mid \targ=\targlow)$ \citep{KurokiPearl2014}.
    
\begin{assumption}[Geographic prevalence rank condition] \label{a-georank}
    Let $\mmat$ be the $\dimgeo \times \dimtarg$ matrix with $\mmat_{g,r}=\Pr(\targ=\targlow\mid G=g)$. Assume $\mmat$ is full column rank, for which a necessary condition is $\dimgeo \geq \dimtarg$.
\end{assumption} 
    
Concretely, for each $\emblow$, let $\rhovec_\emblow$ be the vector of $\Pr(\emb=\emblow\mid G=g)$ and let $\pivec_\emblow$ be the vector of $\Pr(\emb=\emblow\mid \targ=\targlow)$. Then, the target probabilities are found by regressing $\rhovec_\emblow$ on $\mmat$:
\begin{equation}
    \pivec_\emblow = (\mmat^\top \mmat)^{-1}\mmat^\top\rhovec_{\emblow}.
\end{equation}
\noindent Thus, the identification underlying the proposed $\ell$BISG method leverages the association between geographic group patterns and name prevalence patterns. 

In practice, given $n$ observations of $(\emb_i,\geo_i)$, we can also view this as OLS regression of $\Pr(\emb_i\mid \geo_i)$ on the length $\dimtarg$ vector of predictors $\Pr(R_i=r\mid \geo_i)$. Let $\bm{W}$ be the $\dimgeo\times \dimgeo$ diagonal matrix with the number of observations from each geography $\geolow$, i.e., $n_{\geolow}$, on the diagonal. Then, this gives the weighted regression:
\begin{equation}
    \pivec_\emblow = (\mmat^\top\bm{W} \mmat)^{-1}\mmat^\top\bm{W}\rhovec_{\emblow}.
\end{equation}
The weighted regression is preferable because it accounts for differences in estimation uncertainty of $\rhovec_{\emblow}$ due to different numbers of observations across geographies.

\subsection{Leveraging name lists for dimension reduction}\label{sec-dimreduction}

While the previous result establishes identification, directly estimating the conditional distribution $\Pr(\emb=\emblow\mid\geo=\geolow)$ is challenging because the embedding vector $\emb$ is high-dimensional. 
Our practical solution is to replace the high-dimensional embedding $\emb$ with an informative lower-dimensional representation $\bin=\bin(\emb)$ that satisfies the following conditional independence assumption:
\begin{assumption}[$\ell$BISG]\label{a-lbisg}
$$
\bin \indep \geo \mid \targ.
$$
\end{assumption}
Intuitively, $\bin$ is obtained by clustering or otherwise reducing the dimension of $\emb$ while retaining information relevant for predicting the target group. Because $\bin$ is a deterministic many-to-one function of $\emb$, Assumption~\ref{a-lbisg} is weaker than the eBISG assumption (Assumption~\ref{a-ebisg}), and therefore also weaker than the classic BISG assumption (Assumption~\ref{a-bisg}).

Under Assumption~\ref{a-lbisg}, equation~\eqref{eq-keysystem1} becomes,
\begin{equation}\label{eq-keysystem2}
    \Pr(\bin=\binlow\mid\geo=\geolow) =  \sum_{\targlow=1}^{\dimtarg} \Pr(B=b\mid\targ=\targlow) \Pr(\targ=\targlow\mid G=g).
\end{equation}
When $\bin$ is a low-dimensional categorical variable, the left-hand side, $\Pr(\bin=\binlow\mid\geo=\geolow)$, can be estimated directly from the data. Hence, this system consists of $\dimbin\times\dimgeo$ equations with $(\dimbin-1)\times\dimtarg$ unknowns, while the rank condition for identification remains unchanged (Assumption~\ref{a-georank}).  Solving the resulting system identifies $\Pr(\bin=\binlow\mid\targ=\targlow)$ for all $\binlow$ and $\targlow$. Bayes' rule then yields the desired target posterior probabilities,
$$
\Pr(\targ=\targlow\mid\bin=\binlow,\geo=\geolow)
\propto
\Pr(\bin=\binlow\mid\targ=\targlow)
\Pr(\targ=\targlow\mid\geo=\geolow),
$$
where the proportionality constant implies these must be normalized so that the posterior probabilities of group membership sum to one over $\targlow$. 

While any choice of $B$ yields consistent estimates of probabilities of group membership under Assumptions~\ref{a-georank}~and~\ref{a-lbisg}, the efficiency of the resulting estimator depends on how informative $B$ is about the target variable $R$. In particular, greater correlation between $\Pr(B=b\mid\geo=\geolow)$ and $\Pr(\targ=\targlow\mid\geo=\geolow)$ leads to more precise $\Pr(\bin=\binlow\mid\targ=\targlow)$, more precise final $\ell$BISG probabilities, and hence more efficient estimation.   

To construct such a variable $B$, we leverage the name lists and the proxy list membership prediction task described in Section~\ref{sec-nameembed}. Specifically, let $f_{\targlow,\emblow} = \Pr(\listmem_{\targlow}=1\mid\emb=\emblow)$ denote a list score, and define $\bm{f}_{\emblow}=(f_{\emblow, 1},\ldots,f_{\emblow, \dimtarg})^\top$ as the vector of list scores for a name with embedding $\emb$. We then apply the $K$-means clustering algorithm to the log-odds transformation of $\bm{f}_{\emblow}$, partitioning these vectors into $K\geq\dimtarg$ clusters with similar patterns of list scores. When the name lists are informative, $B$ retains substantial predictive information about the target variable. In this case, equation~\eqref{eq-keysystem2} yields a system of $\dimgeo \times K$ equations with $\dimtarg \times (K-1)$ unknowns. The intuition for why it can be helpful to use more than $\dimtarg$ clusters is that different subgroups of group $\targlow$ may form distinct sub-clusters. For example, we observe this among Asian surnames in the Florida data example (Appendix~\ref{app-embed}). 

\begin{algorithm}[!t]
\caption{{$\ell$BISG}}
\label{alg-lbisg}
\begin{algorithmic}[1]
\Require
\begin{tabular}[t]{@{}l@{}}
Target dataset of $n$ observations $\{\sur_i,\geo_i\}_{i=1}^n$,\\
Geographic prevalence estimates $\Prhat(\targ=\targlow\mid \geo=\geolow)$,\\
Name lists $\listmem_{\targlow}(\sur)$ for every group $\targlow=1,\ldots,\dimtarg$,\\
Pre-trained embedding model $\emb(\sur)$,\\
Number of folds $J \geq 2$,\\
Number of clusters $K\geq 2$.
\end{tabular}
\Statex
\State Obtain name embeddings for each observation $i$: $\emb_i=\emb(\sur_i)$
\State Randomly partition the distinct names into $J$ folds $\mathcal{I}_1,\ldots,\mathcal{I}_J$ containing approximately equal numbers of people
\For{$j=1,\ldots,J$}
\State Fit a predictive model separately for membership in each list $\listmem_{\targlow}$ using all
\Statex\hspace{\algorithmicindent}observations other than those in $\mathcal{I}_{j}$ and estimate $\hat{f}_{\emblow, \targlow}=\Prhat(\listmem_{\targlow}=1\mid \emb=\emblow)$
\State Compute the predicted probabilities for each observation $i \in \mathcal{I}_{j}$:
\Statex\hspace{\algorithmicindent}\hspace{\algorithmicindent}$\hat{\bm{f}}_{E_i}=(\hat{f}_{E_i,1},\ldots, \hat{f}_{E_i,\dimtarg})^\top$
\EndFor
\State Construct $K$ clusters through $K$-means on the log-odds scores $\bm{z}_i$, and assign each observation $i$ to a cluster with cluster membership denoted by $\bin$:
\Statex\hspace{\algorithmicindent}$z_{i\targlow}=\log\{\tilde{f}_{i\targlow}/(1-\tilde{f}_{i\targlow})\}$, \; $\tilde{f}_{i\targlow}=\min\{\max\{\hat{f}_{\emb_i,\targlow},\,\epsilon\},\,1-\epsilon\}$, \; $\epsilon=10^{-6}$
\Statex\hspace{\algorithmicindent}$\bin_i=\bin(\emb_i)\in\{1,\dots,K\}$
\State Calculate empirical cluster proportions within each geography:
\Statex \hspace{\algorithmicindent} \hspace{\algorithmicindent}$\hat{\bm{\rho}}_{b,g}=\Prhat(\bin=\binlow \mid \geo=\geolow)$ with $\hat{\bm{\rho}}_b=(\hat{\bm{\rho}}_{b,1},...,\hat{\bm{\rho}}_{b,\dimgeo})^\top$
\State Obtain probability estimates $\Prhat(\bin=\binlow \mid \targ=\targlow)$ as:
\Statex \hspace{\algorithmicindent}\hspace{\algorithmicindent}$\pivechat_\binlow
=
(\mmathat^\top\bm{W}\mmathat)^{-1}\mmathat^\top\bm{W}\rhovechat_{\binlow}$
\Statex where $\mmathat_{\geolow,\targlow}=\Prhat(\targ=\targlow \mid \geo=\geolow)$, and $\bm{W}=\text{diag}(n_1,...,n_{\dimgeo})$. If there are negative estimates in $\pivechat_\binlow$, set them to zero and re-normalize the remaining estimates over $\binlow$
\State For every $i$, compute the target posterior probabilities:
\Statex \hspace{\algorithmicindent}\hspace{\algorithmicindent}$\Prhat(\targ_i=\targlow \mid \bin_i,\geo_i)
\propto
\Prhat(\bin=\bin_i\mid\targ=\targlow)\,
\Prhat(\targ=\targlow \mid \geo=\geo_i)$
\end{algorithmic}
\end{algorithm}

We formally present the $\ell$BISG method in Algorithm~\ref{alg-lbisg}. When fitting the list membership prediction models in Step~4, we randomly split names into folds and use out-of-sample list scores for each name to reduce overfitting on training names. In our applications, we fit these models using neural networks, for which we choose the number of training epochs by cross-validation within the training data before fitting each model to all the training data. 

\subsection{Choosing the number of clusters}

We choose the number of clusters $K$ to make $\bin$ as informative about $\targ$ as possible without overfitting: too few clusters leave $\bin$ unable to isolate less common groups, whereas too many clusters introduce sparsity, such that $\Pr(\bin\mid\targ)$ cannot be estimated reliably. Unfortunately, because clustering is not supervised by $\targ$, we cannot select $K$ by held-out accuracy in the usual way.  Instead, we select $K$ by how much geography improves the prediction of clusters in held-out geographic units relative to the overall cluster base rate. This relies on the fact that, under the $\ell$BISG assumptions of Section~\ref{sec-dimreduction},  $\Pr(\bin \mid \geo)$ and $\Pr(\targ \mid \geo)$ co-vary across geography (see equation \eqref{eq-keysystem2}). 

\begin{algorithm}[!t]
\caption{Choosing the number of clusters $K$}
\label{alg-selectK}
\begin{algorithmic}[1]
\Require
\begin{tabular}[t]{@{}l@{}}
List-score vectors $\hat{\bm{f}}_{E_i}$ and geographies $\geo_i$ from Step~5 of Algorithm~\ref{alg-lbisg},\\
Geographic prior $\Prhat(\targ=\targlow\mid \geo=\geolow)$,\\
Candidate set $\mathcal{K}$ for the number of clusters,\\ Number of folds $J\ge 3$
\end{tabular}
\Statex
\State Partition the geographic units into $J$ folds $\mathcal{Z}_1,\ldots,\mathcal{Z}_J$ containing approximately equal numbers of people, where $\mathcal{Z}_j$ is the set of observations whose geography is in fold $j$.
\For{$K \in \mathcal{K}$}
\For{$j=1,\ldots,J$}
\State Run Step~7 of Algorithm~\ref{alg-lbisg} using all observations that are not in $\mathcal{Z}_j$, and assign \Statex \hspace{\algorithmicindent}\hspace{\algorithmicindent}each observation $i \in \mathcal{Z}_j$ to the cluster whose centroid is closest to its log-odds \Statex \hspace{\algorithmicindent}\hspace{\algorithmicindent}scores $\bm{z}_i$, denoted $\bin_i$
\State Run Steps~8--9 of Algorithm~\ref{alg-lbisg} using all observations that are not in $\mathcal{Z}_j$ to obtain, \Statex \hspace{\algorithmicindent}\hspace{\algorithmicindent} $\pivechat_\binlow^{(-j)}$ and then, for each geography in fold $j$, compute 
\Statex \hspace{\algorithmicindent}\hspace{\algorithmicindent}\hspace{\algorithmicindent}$\Prhat^{(-j)}(\bin = \binlow \mid \geo=\geolow) :=\widehat{\bm M}_g \pivechat_\binlow^{(-j)}$
\Statex \hspace{\algorithmicindent}\hspace{\algorithmicindent}where $\widehat{\bm M}_g$ is the vector of $\widehat{\Pr}(\targ \mid \geo=\geolow)$
\State Estimate $\Prhat^{(-j)}(\bin=\binlow)$ as the share of observations not in $\mathcal{Z}_j$ that are in cluster $\binlow$
\EndFor
\State Compute the held-out log-likelihood ratio of cluster membership with and without
\Statex \hspace{\algorithmicindent}geography
\begin{equation}
Q(K):=\frac{1}{n} \sum_{j=1}^J \sum_{i\in \mathcal{Z}_j}\left[\log \Prhat^{(-j)}(\bin=\bin_i\mid\geo=\geo_i)-\log\Prhat^{(-j)}(\bin=\bin_i)\right] \label{e-georatio}
\end{equation}
\EndFor
\State \Return $K^{*}=\argmax_{K\in\mathcal{K}} Q(K)$
\end{algorithmic}
\end{algorithm}

Algorithm~\ref{alg-selectK} summarizes our selection procedure.
We use grouped cross-validation, partitioning the geographic units into $J \geq 3$ folds with approximately equal sample sizes. For each fold $j$, we use the observations in the remaining $J-1$ folds to construct the clusters following Step~7 of Algorithm~\ref{alg-lbisg} and then to estimate $\Pr(\bin\mid\targ)$ through the regression in Steps~8--9 applied just to the training folds. For each observation $i$ in held-out fold $j$, we then compute two out-of-sample probabilities for its assigned cluster $\bin_i$. The first, $\Prhat^{(-j)}(\bin=\bin_i\mid\geo=\geo_i)$, is the estimated probability of that cluster conditional on the geographic unit containing observation $i$. Note that this estimate is the fitted value from the regression, not a direct proportion (see Step~5). The second, $\Prhat^{(-j)}(\bin=\bin_i)$, is the proportion of observations in the other $J-1$ folds assigned to the same cluster. 

As formalized by $Q(K)$ in equation~\eqref{e-georatio} of Algorithm~\ref{alg-selectK}, our criterion for selecting $K$ is the difference between the multinomial log-likelihood based on the estimated conditional probabilities, $\Prhat^{(-j)}(\bin=\bin_i\mid\geo=\geo_i)$, and the corresponding log-likelihood based on the estimated marginal probabilities, $\Prhat^{(-j)}(\bin=\bin_i)$. Maximizing the conditional log-likelihood alone is unsuitable because prediction is generally easier with fewer clusters, causing the criterion to favor small values of $K$. The marginal probability also captures this tendency, so comparing the two cancels it, isolating the added predictive value of geography coming from $R$'s relationship to both $G$ and $B$.

In other words, because $\Pr(\bin\mid\geo)$ depends on $\Pr(\targ\mid\geo)$ and $\Pr(\bin\mid \targ)$, the difference in log-likelihoods tends to be larger when the cluster variable $\bin$ contains more information about $\targ$. If $K$ is too large, however, the clusters become too sparse to estimate $\Pr(\bin\mid\targ)$ reliably relative to $\Pr(\bin)$. Selecting the value of $K$ that maximizes $Q(K)$ therefore balances informativeness against estimation instability. Appendix~\ref{app-selectM} demonstrates that this selection procedure performs well in the Florida voter-file analysis.

\subsection{Incorporating multiple name lists for each group}\label{sec-firstnames}

The $\ell$BISG framework naturally accommodates multiple name lists (e.g., first-name and surname lists) for each group. Let $\emb_\first$ and $\emb_\sur$ denote the first-name and surname embeddings, respectively. Applying $\ell$BISG separately to each field yields the calibrated probabilities $\Pr(\targ \mid \bin_\first, \geo)$ and $\Pr(\targ \mid \bin_\sur, \geo)$, where $\bin_\first$ and $\bin_\sur$ denote the first-name and surname clusters.

The standard assumption underlying the incorporation of first names into BISG (known as BIFSG) is that first and surnames are conditionally independent given race and geography \citep{Voicu2018}. For $\ell$BISG$, $ we instead require only the following weaker assumption:
\begin{assumption}[$\ell$BIFSG]
$$
\bin_\first \indep \bin_\sur \mid \targ,\geo.
$$
\end{assumption}
\noindent Because the clusters are deterministic functions of the embeddings, this assumption is weaker than conditional independence of the names themselves. Under the assumption,
$$
\Pr(\targ \mid \bin_\first, \bin_\sur, \geo) \propto
\frac{\Pr(\targ \mid \bin_\first, \geo)
\Pr(\targ \mid \bin_\sur, \geo)}
{\Pr(\targ \mid \geo)},
$$
where the proportionality constant normalizes the probabilities to sum to one across all groups.

\subsection{Discussion}\label{sec-theory-discussion}

We emphasize that throughout, $\ell$BISG imposes no assumptions on the quality of the list or the ranking.  Indeed, when compared to the standard BISG method, the only additional assumption required for $\ell$BISG is Assumption~\ref{a-georank}, which is typically satisfied if there is some geographic variation in group prevalence. The caveat is that only good lists are able to improve upon the geographic baseline $\Pr(\targ=1\mid G)$.

In the $\ell$BISG algorithm, one could even cluster directly on embeddings $E$ without using name lists -- $\ell$BISG might then stand for ``low-dimensional embedding powered BISG.''  However, such an unsupervised approach relies upon the embeddings being primarily an encoding of group membership so that they are spatially separated in $\mathbb{R}^{\dimemb}$ into clusters that correspond to the group of interest. By comparison, the proposed semi-supervised approach first projects the embeddings to a lower-dimensional metric specifically constructed for the group prediction task. Neither approach produces clusters with known labels, but the list-based clusters tend to construct more informative clusters. Note that in Section~\ref{sec-recovergeography} below, where the geographic prevalence of different groups must be recovered, the name lists are required.

Empirically, we find that the lists sometimes carry important information that can improve on clustering alone. For example, in the Florida voter file application presented in Section~\ref{sec-val-voterfiles}, clustering on the list scores rather than on the raw embeddings raises the average precision for the rare Asian group from $0.15$ to $0.25$ at $K=500$. The two approaches perform similarly for the other groups and at the larger number of clusters that Algorithm~\ref{alg-selectK} selects in the full sample (see Appendix~\ref{app:cluster-embeddings}). The same result extends to the Lebanese voter file in Section~\ref{sec-val-lebanon}, where clustering on the list scores rather than the raw embeddings raises the average precision for the Armenian Orthodox, the smallest sect, from $0.64$ to $0.69$ at $K=100$. These two results suggest that lists may be especially useful for smaller groups that fully unsupervised clustering struggles to identify.

The clustering step of $\ell$BISG can also be viewed as using the embeddings to update and extend our original lists. For example, if Hernandez originally appeared on the Hispanic list and Fernandez did not, their closeness in embedding space hopefully will mean that Fernandez is also included in the Hispanic cluster. In this way, we generalize initial, possibly quite small, lists to the entire embedding space.

Finally, researchers often have lists for only a few groups of interest rather than a full partition of the population. Because the subgroup probabilities sum to one, it is sufficient to estimate them for the $\dimtarg-1$ named groups, leaving a residual class as the complement. That said, forming an explicit residual list for that class, for example by gathering names distinctive of the overall population but not of any group of interest, can be worthwhile.  In Section~\ref{sec-recovergeography}, we show that the provision of such a name list becomes necessary once geographic prevalence must be recovered rather than assumed known.

Another advantage of $\ell$BISG is that it naturally accommodates overlapping name lists, in which the same name appears on multiple lists. Specifically, each probability $\Pr(\listmem_{\targlow}\mid\emb=\emblow)$ is estimated as a separate prediction task, which treats each name separately for each target group. Consequently, a name appearing on multiple lists simply receives a high list score for each corresponding task, and this may be reflected in the clustering, which is performed on a function of the full vector of list scores, $\hat{\bm{f}}_{\emblow}$. Although overlap makes the target groups less distinguishable, it does not otherwise affect the validity of the method.

\section{Recovering Geographic Prevalence}
\label{sec-recovergeography}

The $\ell$BISG method developed in Section~\ref{sec-keymethod} assumes access to group geographic prevalence rates $\Pr(\targ=\targlow\mid \geo)$. In this section, we propose methods for recovering geographic prevalence when it is either entirely unavailable (Section~\ref{sec-recovercoarsegeo}) or only available for coarse groups relative to finer subgroups of interest (Section~\ref{sec-subgroupextension}). Compared to the previous section, in these settings, the name lists are necessary because we recover group geographic prevalence from the rates at which people in a given geographic unit have surnames on each list.

Because these settings assume less available data, they also require additional assumptions to make progress.  Unlike the preceding application of $\ell$BISG, the methods in this section require (1) that lists lack any geographic selection bias and (2) that there is perfect separation across group name lists: no member of one subgroup may have a name on another group’s list. Under this additional assumption, we can first recover geographic probabilities and then apply the $\ell$BISG method described in Section~\ref{sec-keymethod}. 

The assumption of perfect separation means that the method introduced in this section is applicable only to subgroups whose names are distinct from one another. In Appendix \ref{app-geography-recovery}, we develop sensitivity analysis tools for relaxing the perfect-separation assumption and assessing how this affects the empirical results. However, we recommend using these sensitivity analyses only when the expected violations are fairly small and homogeneous across groups. 

\subsection{List construction must not depend on geography}\label{sec-nogeographicbias}

Our first additional assumption is that conditional on group membership, list membership is independent of geography.
\begin{assumption}[List construction] \label{a-nolistsel}
\begin{equation}\label{eq-nogeosel}
\listmem_{\targlow’} \indep \geo \mid \targ=\targlow
\qquad
\text{for all }\targlow,\targlow’
\end{equation}
\end{assumption}
\noindent This assumption is implied by the standard BISG assumption (Assumption~\ref{a-bisg}) provided that list membership is determined solely by name and selected from a full pool of names from across geographic units. 

The assumption may be violated, however, if only the weaker eBISG or $\ell$BISG assumptions hold (Assumption \ref{a-ebisg} and \ref{a-lbisg}, which concern functions of names) so that at the resolution of the original raw names, some geographic specificity also affects the way the list is compiled. For example, suppose a list is compiled from a geographically unrepresentative source, such as the membership rolls of a regional organization. In that case, among individuals in the same group, list membership may be associated with geography (see Section~\ref{sec-liststrategies} for a discussion of list construction and evaluation).





For the prevalence recovery problem, Assumption \ref{a-nolistsel} implies that $\Pr(\listmem_{\targlow'}=1\mid \targ=\targlow,\geo)$ does not vary with $\geo$. Hence the observable list prevalence by geography $\Pr(\listmem_{\targlow'}=1\mid \geo)$ satisfies, by the law of total probability,
\begin{equation}\label{eq-geo-recover}
\Pr(\listmem_{\targlow’} = 1 \mid \geo) = \sum_{\targlow=1}^{\dimtarg}
\Pr(\listmem_{\targlow’}=1\mid\targ=\targlow)\Pr(\targ = \targlow \mid \geo),
\end{equation}
where the list indicator plays the role that embeddings played in equation~\eqref{eq-keysystem1}. If the within-subgroup list membership rates $\Pr(\listmem_{\targlow’}=1\mid\targ=\targlow)$ were known, this would form a system of linear equations with unknowns $\Pr(\targ=\targlow\mid \geo)$ that is identified under rank conditions. 

Unfortunately, equation~\eqref{eq-geo-recover} cannot recover both the within-group list rates and the group geographic prevalence simultaneously. We do also know that
\begin{equation}
   \sum_{\targlow=1}^{\dimtarg} \Pr(\targ = \targlow \mid \geo) = 1,\label{eq-sumto1}
\end{equation}
but this is not enough to uniquely identify the system. To make equation \eqref{eq-sumto1} more useful, however, we will now, unlike before, require that $\dimtarg$ groups have their own lists (i.e., $\dimtarg-1$ lists is not sufficient).  In other words, the set of groups considered must be exhaustive so we are not inferring one $\Pr(\targ=\targlow\mid \geo)$ after recovering the rest.

\subsection{Proposed solution given perfect separation across group name lists}\label{sec-recovercoarsegeo}

The within-group and across-group list membership rates $\Pr(\listmem_{\targlow'}=1\mid \targ=\targlow)$ reflect how well the lists separate the groups. When $r\neq r'$, a positive cross-group-rate indicates that there are members of one group $\targlow$ whose names appear on the list for another group $\targlow'$. This can happen for two reasons: (1) poor list quality and (2) inherent lack of group separation by names. For example, if trying to separate English and Irish groups, positive cross-group-rates could be caused by using too-expansive lists not designed for the purpose of separating these groups. At the same time, historical and linguistic ties between the two groups may also make these groups inherently difficult to separate.

To start, we assume that lists perfectly separate the groups: no member of one group has a name on the list of another.
\begin{assumption}[Exclusive lists.]\label{a-groupsep}
    $\Pr(\listmem_{\targlow'}=1\mid \targ=\targlow)=0$ for all $\targlow\neq \targlow'$
\end{assumption}
\noindent Assumption \ref{a-groupsep} allows incomplete coverage (i.e., $\Pr(\listmem_{\targlow}=1\mid \targ=\targlow)<1$). It also remains possible for there to be members of groups $\targlow'$ and $\targlow$ that share the same name, but those names cannot appear on either group's list of distinctive names.  A necessary, but not sufficient, condition for satisfying this perfect separation assumption is that the name lists do not overlap. 

Under this assumption, equation \eqref{eq-geo-recover} becomes simply
\begin{equation}\label{eq-geo-recover2}
\Pr(\listmem_{\targlow} = 1 \mid \geo) = \Pr(\listmem_{\targlow}=1\mid\targ=\targlow)\Pr(\targ = \targlow \mid \geo).
\end{equation}
Let $\betamat$ be the $\dimgeo\times\dimtarg$ matrix of list rates $\Pr(\listmem_{\targlow}=1\mid\geo)$ in each geography. We assume it is full rank, which requires there to be some variation across geographies in list prevalence.
\begin{assumption}\label{a-betarank}(List prevalence rank condition) Assume $\betamat$ is full column rank.
\end{assumption}
\noindent A necessary, but not sufficient, condition is $\dimgeo \geq \dimtarg$. Under this assumption, we show in Appendix~\ref{app-geography-recovery} that equations~\eqref{eq-sumto1}~and~\eqref{eq-geo-recover2} uniquely identify the geographic rates for each group $\targlow$ as,
\begin{equation}\label{eq-georecoverysolution}
 \Pr(\targ = \targlow\mid \geo=\geolow) = \Pr(\listmem_{\targlow} = 1 \mid \geo=g)((\betamat^\top\betamat)^{-1}\betamat^\top \bm{1}_{\dimgeo})_{\targlow}
\end{equation}

To interpret this formula, imagine directly using list-prevalence as an estimate of geographic prevalence under Assumption~\ref{a-groupsep} that anyone whose name is on a group's list is from that group. But, this approach is likely to be biased because the names of some people are not on any list.  Equation~\eqref{eq-georecoverysolution} corrects this undercoverage by up-weighting the list prevalence. Under Assumption~\ref{a-groupsep}, $\betamat^\top \bm{1}_{\dimgeo}$ equals the vector of overall list coverages (i.e., the fraction of people in each geography whose names are on some list) for each geographic area. Under perfect coverage ($\betamat^\top \bm{1}_{\dimgeo}=\bm{1}_{\dimgeo}$), equation~\eqref{eq-georecoverysolution} reduces to $ \Pr(\targ = \targlow\mid \geo=\geolow) = \Pr(\listmem_{\targlow} = 1 \mid \geo=g)$. 

We acknowledge that Assumption~\ref{a-groupsep} may not hold exactly in practice. In Appendix~\ref{app-geography-recovery}, we provide sensitivity analysis tools for probing how results change under small violations of Assumption~\ref{a-groupsep}. Specifically, we show how to adjust the denominator of equation~\eqref{eq-georecoverysolution} if the cross-probabilities instead take a small value $\lambda>0$. However, this sensitivity analysis still does not capture what would happen for larger heterogeneous violations of Assumption~\ref{a-groupsep}. 

We caution that performance can be poor under heterogeneous non-separation in name lists between groups. To make progress in such cases, researchers will need to gather geographic information about groups and apply the $\ell$BISG method described in Section~\ref{sec-keymethod}. Alternatively, they might consider pooling strongly overlapping groups (e.g., English and Irish) and focusing on recovery of geographic prevalence for the remaining more distinct groups.

\subsection{Extension to the subgroup problem}\label{sec-subgroupextension}

We next consider another common but challenging setting, in which geographic prevalence data are not available for groups of interest, but such information does exist for coarser groups. For example, the U.S. Census provides geographic prevalence for the broad Asian category but not for finer subgroups such as Chinese, Korean, Japanese, or South Asian. This again creates a barrier to directly applying the $\ell$BISG method of Section~\ref{sec-keymethod} with higher-resolution name lists, but we can leverage the coarser information in a way that generalizes the method and assumptions described above. 

First, we define what we mean by coarse group and subgroup. Let $\coarsegrp\in \mathcal{C}$ denote a coarsened group variable such that each original group $\targ$ belongs to exactly one coarse group $\coarsegrplow = \coarsegrplow(\targlow)$. Let $\dimtarg(\coarsegrplow)$ denote the number of groups within coarse group $\coarsegrplow$ with $\dimtarg=\sum_{\coarsegrplow}\dimtarg(\coarsegrplow)$ the total number of groups. The challenge is again that geographic prevalence rates at the resolution of interest $\Pr(\targ=\targlow\mid \geo)$ are unavailable, but now, we do have the geographic baseline for each coarse group $\Pr(\coarsegrp=\coarsegrplow\mid \geo)$.

We continue to assume that name lists are available for the original higher-resolution groups with list indicator $\listmem_{\targlow}$.  In addition, we also require that these groups exhaustively span each coarser group.  This means that $\dimtarg(\coarsegrplow)-1$ groups that leave a residual category group unrepresented by a list is not sufficient. This assumption ensures that the unknown sub-group rates sum to the coarse group baseline, i.e., $\Pr(\coarsegrp=\coarsegrplow \mid \geo)=\sum_{\targlow=1}^{\dimtarg(\coarsegrplow)} \Pr(\targ=\targlow\mid \geo)$. 
If, in practice, there are not $\dimtarg(\coarsegrplow)$ lists, but researchers do have access to an overall list for the coarse group, then they can add a residual class whose list is the coarse group's name list minus every name claimed by any of the subgroup lists. The residual name list then enters the embedding step as an ordinary list, contributing its own score to $\hat{\bm{f}}_{\emblow}$. 

Our solution to the subgroup prevalence recovery problem relies on the same assumptions as earlier. Under Assumption \ref{a-nolistsel} and the requirement that the groups exhaustively span each coarse group, we have the following system of equations.
\begin{align}
\Pr(\listmem_{\targlow’} = 1 \mid \geo) &= \sum_{\coarsegrplow=1}^{\dimcoarsegrp}
\sum_{\targlow : \coarsegrplow(\targlow) = \coarsegrplow} \Pr(\listmem_{\targlow’}=1\mid\targ=\targlow)\Pr(\targ = \targlow \mid \geo)\label{eq-subgeo-recover}\\
 \Pr(\coarsegrp = \coarsegrplow \mid \geo)&= \sum_{\targlow :\coarsegrplow(\targlow) = \coarsegrplow} \Pr(\targ = \targlow \mid \geo)\label{eq-subsumtogroup}
\end{align}
Again, this system is not sufficient to identify $\Pr(\targ=\targlow\mid \geo)$. If we also add the perfect separation Assumption~\ref{a-groupsep}, then we show in Appendix~\ref{app-geography-recovery} that 
\begin{equation}\label{eq-subgroup-geo-recov}
    \Pr(\targ = \targlow\mid \geo=\geolow) = \Pr(\listmem_{\targlow} = 1 \mid \geo=g)((\betamat_\coarsegrplow^\top\betamat_\coarsegrplow)^{-1}\betamat_\coarsegrplow^\top \mmat_\coarsegrplow \bm{1}_{\dimcoarsegrp})_{\targlow}
\end{equation}
\noindent where $\betamat_\coarsegrplow$ is the $\dimgeo\times\dimtarg(\coarsegrplow)$ matrix of list rates $\Pr(\listmem_{\targlow}=1\mid\geo)$ for coarse group $\coarsegrplow$ in each geography, and where $\mmat_\coarsegrplow$ is the length $\dimgeo$ vector of coarse group prevalence rates $\Pr(\coarsegrp=\coarsegrplow\mid \geo=\geolow)$. This formula takes the group list prevalence in geography $\geolow$ and adjusts it by a term which captures the average relationship between the group list prevalence and coarse group prevalence across geographies. In practice, we estimate this adjustment by non-negative least squares with weighting of each geography by its expected number of coarse group members, followed by a normalization step to ensure final estimates sum to exactly $\Pr(\coarsegrp=\coarsegrplow\mid \geo)$ (see Appendix~\ref{app-nonnegative-ls} for details). 

In Appendix \ref{app-geography-recovery}, we again provide a sensitivity analysis extension. We now show how to adjust the denominator of equation \eqref{eq-subgroup-geo-recov} if the cross-probabilities uniformly take a small value $\lambda>0$ within each coarse group and uniformly take a value $\lambda \geq \gamma\geq 0$ across coarse groups.

\subsection{Discussion}

The case where geographic information is missing represents a common but difficult problem. In such situations, high-quality name lists become not just a useful tool for better clustering of names but essential for being able to identify meaningful probabilities at all. The key intuition is that high-quality lists which are not geographically skewed contain some information about the geographic distribution of their members. For well-separated groups with high-quality lists, the above method allows us to recover $\Pr(\targ=\targlow\mid \geo)$ even when there is no prior geographic information at all. When subgroups are of interest, and yet only the geographic prevalence of a coarse group is available, high-quality lists allow us to recover subgroup geographic prevalence as well. Once recovered, these estimates of geographic prevalence can be used by applying the $\ell$BISG method of Section~\ref{sec-keymethod} to again obtain the target posterior probabilities.

There may be further ways to weaken some of the assumptions made in this section, especially if more prior information is known about which groups are likely to be poorly separated or at what resolution list geographic selection mechanisms occur. For example, 
one way to weaken Assumption \ref{a-nolistsel} is to apply it only within a coarser geographic unit (e.g., within each county when the original geographic unit is census tract). In that case, all subsequent steps should be applied separately within each region. However, even if this conditioning means there is no further relationship between list and high-resolution geographic unit, list quality may be poor in regions whose names were originally under-represented, leading to potentially low values of $\Pr(\listmem_{\targlow}=1\mid \geo)$ or $\Pr(\listmem_{\coarsegrplow}=1\mid \geo)$ and unstable results for those regions. It may then be necessary to try to build separate name lists for each region.

\section{Obtaining and Evaluating Name Lists}\label{sec-lists}

In this section, we briefly discuss how to obtain and evaluate group-specific name lists. 

\subsection{Generating name lists}\label{sec-liststrategies}

There are different ways to build name lists. First, lists can be curated from domain knowledge or existing records. For example, they may be constructed from prior surveys, censuses, or genealogical records; membership rolls of group-affiliated organizations; obituaries associated with group-specific funeral homes \citep{chasalow2026sampling}; curated Wikipedia pages \citep{sood2018predicting}; or the activities of identified users on social media \citep{wood2018predicting}. Curated lists can be highly distinctive, but their coverage depends on the source. Sources tied to a particular place may violate the no-geographic-selection assumption (Assumption~\ref{a-nolistsel}) that the geographic prevalence recovery method of Section~\ref{sec-recovergeography} relies on.

A second approach, which makes $\ell$BISG applicable even when no frequency table or curated record exists, is to prompt an LLM to generate surnames typical of a target group, thereby leveraging the world knowledge encoded during pre-training \citep{Dasanaike2026zeroshot}. This approach is broadly applicable across many types of groups and targets. Because the model returns the names it judges to be typical, the resulting lists may be common but also include surnames that are not distinctive. 

To improve distinctiveness, one can apply a \emph{within-group exclusivity filter} by removing any surname that appears on more than one group's list. This is necessary, though not sufficient, for the geographic prevalence recovery method described in Section~\ref{sec-recovergeography}. In addition, one can prompt an LLM to generate surnames that are distinctively associated with the group of interest. The generated list can then be refined by asking a possibly different model to assess the distinctiveness of each name and retaining only those that are judged to be truly distinctive. 

\subsection{Desirable properties of name lists}\label{sec-listdesirable}

What makes a good name list? Ideally, a name list contains names that are both \emph{distinctive} and \emph{common} within the group of interest. Distinctive names have high positive predictive value for the target group (i.e., $\Pr(\targ=\targlow \mid \sur=\surlow)$ is high), whereas common names are held by a large share of the target group (i.e., $\Pr(\sur=\surlow \mid \targ=\targlow)$ is high). 

In addition, a name list should ideally cover different subgroups within the group of interest. For example, a list of Asian names that entirely excludes Japanese names may have distinctive and common names for other subgroups, like Chinese, but not capture the full diversity of the group. If the excluded subgroup is concentrated in particular geographic areas, the list also violates the no-geographic-selection assumption (Assumption~\ref{a-nolistsel}) required for the geography prevalence recovery method discussed in Section~\ref{sec-nogeographicbias}.

For illustration, we generate a name list for each of the four racial groups in the Florida voter file analyzed in Section~\ref{sec-val-voterfiles} using an LLM, Gemini~3.0 Flash. This corresponds to a realistic setting in which researchers must create lists on their own without relying on name frequency tables.  To cover the subgroups within each racial group, we use separate prompts for different national origins. For the White list, for instance, we use three prompts, each requesting surnames of individuals with English and Irish descent; German descent; and Italian, Polish, and other European descent. We split Asian into Chinese; Indian; Vietnamese and Filipino; and Korean and Japanese. Appendix~\ref{app-prompts} provides the exact prompts.

For each of these prompts, we make six separate model calls at temperature $1.0$, and request $300$ surnames per call. With three prompts each, the White, Black, and Hispanic groups have eighteen calls and $5{,}400$ requested surnames, while the four Asian prompts give a total of $7{,}200$ surnames. Within each group, we pool the results of all calls across prompts, rank the unique surnames by the number of calls that returned them, and then retain the top $1{,}000$ surnames.  This means that a surname ranks higher when the language model produces it consistently.  Because that score is a count over eighteen calls, or twenty-four for Asian surnames, many surnames are ranked in a tie.  Since the top 1,000 threshold often falls inside a tied band, we break those ties by the order in which the model first returned each surname.

We find that for $\ell$BISG on the Florida voter file, the gain in discrimination from longer lists is essentially complete at a few hundred surnames per group (see the top panel of Figure~\ref{fig:listsize} in Appendix~\ref{app-listgen}). The same analysis on the North Carolina voter file (the bottom panel of the same appendix figure) produces a similar result for the White and Black groups. For the smaller Asian group in North Carolina, however, discrimination continues to improve beyond a few hundred surnames.

\begin{figure}[!t]\centering
\includegraphics[width=\textwidth]{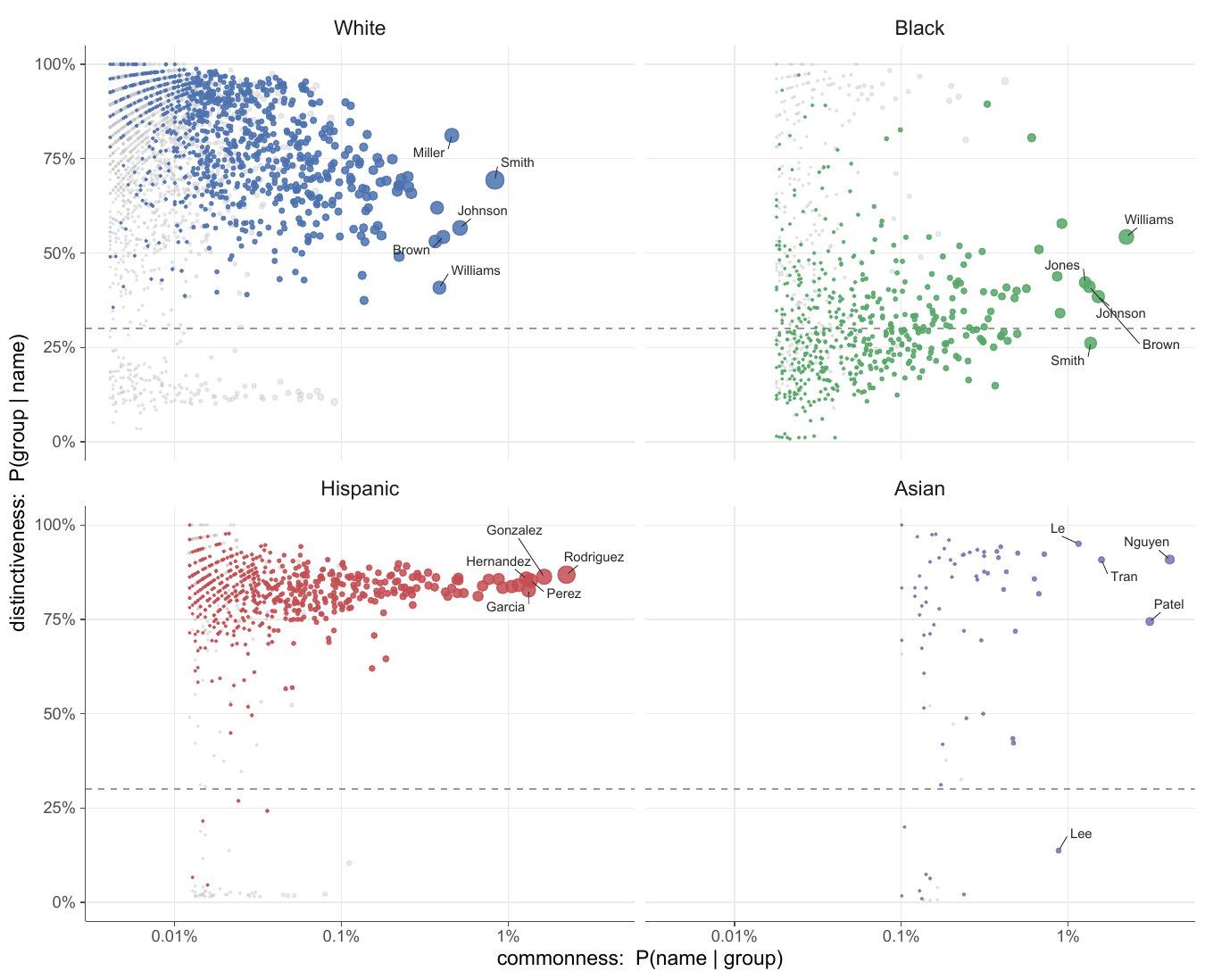}
\caption{Florida surnames plotted by log commonness $\Pr(\sur \mid \targ)$ ($x$-axis) against distinctiveness $\Pr(\targ \mid \sur)$ ($y$-axis) for the four race groups. Each point is a surname held by at least twenty-five members of the group, sized by how many voters hold it. Colored points are those included in the list of $1{,}000$ surnames generated by an LLM for that group. Labeled points are the most common surnames in each group. The dashed line marks the distinctiveness floor of $0.3$ used to build lists in Section~\ref{sec-val-voterfiles}.}
\label{fig:listprops}
\end{figure}

Figure~\ref{fig:listprops} plots the surnames held by Floridians of each race along distinctiveness ($y$-axis) and commonness ($x$-axis). Each point represents a surname held by at least 25 individuals, and the colored points indicate the surnames included in the LLM-generated list of $1{,}000$ surnames for that group.

The four racial groups exhibit distinct patterns. White surnames span a broad range of both commonness and distinctiveness. Hispanic surnames form a dense band characterized by high distinctiveness and a range of commonness, making them well suited for both identifying and representing the group. Black surnames are generally less distinctive because they are more frequently shared with other groups, in particular white, although there are some quite common names. Asian surnames resemble Hispanic surnames in being both relatively common and highly distinctive, with some exceptions including ``Lee,'' which is relatively common but not distinctive at all. Overall, the LLM appears to generally select surnames that are both distinctive and common. The exception is Black surnames, for which it prioritizes commonness over distinctiveness.

Coverage matters for another reason. The embedding step extends the information in a list to unlisted names that are close in embedding space. A list which provides a broader coverage of the group's name distribution thus yields more accurate probabilities for unlisted names. In Appendix~\ref{app-embed}, we visually inspect the coverage of the LLM surname lists of Section~\ref{sec-val-voterfiles} in the embedding space of the Florida surnames. The high list scores for each group fall in distinct regions of this space, with Hispanic in particular demonstrating strong clustering. These regions include many surnames that are not on any list.

\subsection{Measure of list quality when geographic prevalence is available}\label{sec-listqualitymeas}

In the absence of ground-truth data, the relationship between list membership and true group membership cannot be directly measured. However, when geographic prevalence is known, as assumed in Section~\ref{sec-keymethod} and lists satisfy Assumption~\ref{a-nolistsel} of no geographic selection, then it is possible to recover probabilities $\Plisttarg_{\targlow, \targlow'} := \Pr(\listmem_{\targlow'} = 1 \mid \targ = \targlow)$ using the same strategy as the one used for recovering   $\Pr(\bin=\binlow \mid \targ=\targlow)$. 

Specifically, we solve the following linear system of $\dimgeo\dimtarg$ equations with $\dimtarg^2$ unknowns,
\begin{equation}\label{eq-system1} \Pr(\listmem_{\targlow'}=1\mid \geo = \geolow) = \sum_{\targlow = 1}^{\dimtarg} \Plisttarg_{\targlow,\targlow'}\Pr(\targ = \targlow \mid \geo = \geolow).
\end{equation}
This is equivalent to equation \eqref{eq-geo-recover} above, but applied in the setting where $\Pr(\targ=\targlow\mid \geo=\geolow)$ prevalence rates are available. This system again has a unique solution if there is sufficient variation in these rates to satisfy Assumption~\ref{a-georank}.

Let $\bm{\Plisttargmat}$ be the $\dimtarg \times \dimtarg$ matrix of these values with $\Plisttargmat_{\targlow,\targlow'}=\pi_{\targlow,\targlow'}$. One measure of the quality of $\Plisttargmat$ is the following diagonal dominance:
\begin{equation}
 D(\Plisttargmat) =  \frac{1}{\dimtarg}
\sum_{\targlow=1}^{\dimtarg}
\left[\Plisttarg_{\targlow,\targlow}
-
\frac{1}{\dimtarg-1}
\sum_{\targlow'\neq\targlow}
\Plisttarg_{\targlow,\targlow'}\right].
\end{equation}
which is maximized at the ideal $\pi_{\targlow,\targlow'}=\ind(\targlow=\targlow')$ with value $1$ and minimized at a pathological scenario of $\pi_{\targlow,\targlow'}=\ind(\targlow\ne\targlow')$ with value $-1$. Positive values indicate a tendency for diagonal $\pi_{\targlow,\targlow}$ values to be larger than the off-diagonal values. In the binary group case where $\targ=1$ indicates the group of interest and $\targ=0$ is simply the residual group with no list, we can solve for $\pi_1=\Pr(\listmem = 1 \mid \targ=1)$ and $\pi_0 = \Pr(\listmem=1 \mid \targ = 0)$ and measure quality as $\pi_1-\pi_0$.

Although the $\ell$BISG method neither requires Assumption~\ref{a-nolistsel} nor presupposes any particular level of list quality, estimating $\Plisttargmat$ when possible is a useful way to assess list quality when true labels are unavailable for computing other evaluation metrics. Indeed, in Sections~\ref{sec-val-voterfiles} and~\ref{sec-val-lebanon} below, we show that the estimates of $D(\Plisttargmat)$ are close to their ground truths. 

Recovering $\Pimat$ also makes it possible to estimate the precision of the list without access to ground truth labels. 
Applying Bayes' rule to column $\targlow$, with the marginal group shares $q_{\targlow'}=\Pr(\targ=\targlow')$ obtained from the geographic prevalence, the precision is, $\Pr(\targ=\targlow \mid \listmem_{\targlow}=1) = q_{\targlow}\pi_{\targlow,\targlow}\big/\sum_{\targlow'} q_{\targlow'}\pi_{\targlow',\targlow}$.  In the validation exercises below, we report both these estimated and ground-truth coverage and precision.

\section{Empirical Validation}\label{sec-validation}

We empirically validate $\ell$BISG in three diverse settings with known ground truth. First, we use voter files from Florida and North Carolina, where self-reported race is available.  We compare the performance of $\ell$BISG with that of the standard BISG based on Census name frequency tables.  We show that $\ell$BISG performs as well as the standard BISG despite not using the name-race frequency data. 
Second, we use the full-count 1900 U.S. Census to predict Chinese and Japanese subgroups within the Asian population. This evaluates the performance of the methodological extension proposed in Section~\ref{sec-recovergeography} when geographic prevalence is not available for subgroups of interest. This analysis also corresponds to a level of granularity for which no name frequency tables naturally exist.  Third, we consider the prediction of religious sect in Lebanon, illustrating the applicability of $\ell$BISG outside the conventional race-based BISG setting in the United States. 

In each application, we evaluate the calibration of the predicted probabilities, their discriminatory power, and the bias and RMSE of downstream disparity estimates.

\subsection{Comparison with standard BISG using US voter files}\label{sec-val-voterfiles}

\paragraph{Setup.}
We obtain Florida and North Carolina voter files from L2 Inc., a leading national non-partisan firm and the oldest organization in the United States that supplies voter data and related technology to candidates, political parties, pollsters, and consultants for use in campaigns.  These voter files contain the self-reported race of each registrant alongside their geocoded residential address, turnout in recent elections, and political party registration.

We collapse the race field into five categories, White, Black, Hispanic, Asian, and other, the last of which includes American Indian and Alaska Native. We use the Florida file of March 7, 2026 and the North Carolina file of April 11, 2026. We exclude registrants whose race is not recorded in the voter file and those without Census geocodes, comprising $494{,}443$ and $627{,}520$ of Florida's $15{,}017{,}471$ voters and $649{,}911$ and $318{,}860$ of North Carolina's $7{,}249{,}018$, respectively. Lastly, we draw random samples of one million registrants from the remainder in each state.

We apply $\ell$BISG and standard BISG using the published 2020 Census block-level race shares as the geographic priors. For registrants whose blocks have zero population, we use the tract prior instead.\footnote{These blocks have zero population in the Census table partly because of the noise the Census Bureau adds to block counts to protect privacy. This accounts for $1.1\%$ of the Florida sample and $0.9\%$ of the North Carolina sample. We use the county prior for registrants whose blocks are absent from the Census table, totaling $0.02\%$ of the Florida sample and $0.13\%$ of the North Carolina sample.} In the downstream analysis, we use these predictions to estimate the Democratic share of each racial group aggregated at the county level, based on party registration.

To estimate the individual race probabilities using $\ell$BISG, we draw on two sets of name lists in turn. The first name list is an oracle one based on the 2020 Census surname table, which reports the racial composition of every surname occurring at least $100$ times in the Census. Admittedly, the $\ell$BISG method is targeted at applications where such tables are not available. However, because this list is directly derived from the Census data, it allows us to evaluate the consequences of losing frequency information on the quality of individual race probability estimates and downstream analysis by comparing the resulting estimates with those based on the standard surname-table-based BISG. 

As described in Section~\ref{sec-listdesirable}, a useful list has names that are sufficiently common and yet distinctive, with clear group membership. For each group $\targlow$, we construct the corresponding Census list by filtering to surnames that are at least $30\%$ likely to belong to that group, i.e., $\Pr(\targ=\targlow \mid \sur=\surlow) \geq 0.30$.\footnote{We set the floor at $0.30$ because it is a smallest value at which the four lists are close to disjoint. For example, $7.0\%$ of Florida registrants have a surname belonging to more than one list at a floor of $0.20$, whereas this proportion is only $0.04\%$ at a floor of $0.30$.  Raising the floor further reduces coverage among Black registrants while leaving the other three lists almost unchanged (see Table~\ref{tab:threshold} in Appendix~\ref{app-listgen}).}  We then rank each surname by how many group members have that surname, beyond the number expected in a random set of the same sample size, i.e., $\Pr(S = s \mid R = r) - \Pr(S = s)$. Lastly, we take the 1,000 highest-ranked surnames for each group. Because we construct each list independently across groups, a surname may qualify for more than one list. However, this is rare, as only seven surnames appear on two lists after filtering to the top 1,000.

We obtain the second set of name lists from an LLM, Gemini~3.0 Flash, as described in Section~\ref{sec-listdesirable}. For $\ell$BIFSG, we construct first-name lists in the same two ways, from the 2020 Census first-name table by the rule above and from the LLM with the same prompts. The LLM lists represent a more realistic generation approach for settings where $\ell$BISG is being used exactly because the surname frequency tables are unavailable.

Finally, we embed every distinct observed surname and first name using the multilingual E5-large model \citep{wang2024e5} and estimate the list score $f_{e,r}$ for each list separately using a multilayer perceptron.\footnote{The model has two hidden layers of $256$ and $128$ units, ReLU activations, and dropout of $0.3$.  We use PyTorch with Adam at a learning rate of $10^{-3}$ and weight decay $10^{-5}$ under binary cross-entropy.} Each model is fitted using five folds so that list scores are computed out-of-sample (see Steps 3--6 of Algorithm~\ref{alg-lbisg}). Within each training set (four folds), we choose the number of epochs by cross-validation, leading to one or two epochs.

We then apply the $\ell$BISG method via Algorithm~\ref{alg-lbisg}, selecting the number of clusters with Algorithm~\ref{alg-selectK}, which returns $K=7{,}500$ for all four surname lists and both LLM first-name lists, and $K=5{,}000$ for both Census first-name lists.\footnote{We use the candidate grid ranging from $10$ to $20{,}000$ clusters, with values of $5{,}000$, $7{,}500$, $10{,}000$, $15{,}000$, and $20{,}000$ above $3{,}000$.} We fit standard BISG using the same 2020 Census surname table and geographic priors.

\begin{table}[!t]\centering
\begin{tabular}{llrrrrr}
\hline
 & & & \multicolumn{2}{c}{Coverage} & \multicolumn{2}{c}{Precision} \\
Group & List & In Census & Ground truth & Estimate & Ground truth & Estimate \\
\hline
White    & Census & $100\%$ & $0.28$ & $0.30$ & $0.79$ & $0.74$ \\
         & LLM    & $100\%$ & $0.31$ & $0.32$ & $0.71$ & $0.64$ \\
Black    & Census & $100\%$ & $0.33$ & $0.37$ & $0.46$ & $0.46$ \\
         & LLM    & $100\%$ & $0.53$ & $0.57$ & $0.25$ & $0.24$ \\
Hispanic & Census & $100\%$ & $0.50$ & $0.53$ & $0.82$ & $0.97$ \\
         & LLM    & $99\%$ & $0.50$ & $0.53$ & $0.81$ & $0.97$ \\
Asian    & Census & $100\%$ & $0.42$ & $0.41$ & $0.64$ & $0.73$ \\
         & LLM    & $97\%$ & $0.39$ & $0.33$ & $0.17$ & $0.17$ \\
\hline
\end{tabular}
\caption{Comparison of LLM-based and Census-based surname lists on the Florida registrants. The column labeled as ``In Census'' shows the share of list names that appear in the 2020 Census surname table at all. ``Coverage'' is the share of the group's registrants with a surname on the list, and ``Precision'' the share of the registrants with a list surname who belong to the group, each computed from the self-reported race (ground truth) and estimated from the tract system with the tract-level race shares as the geographic prior and no individual labels (Section~\ref{sec-listqualitymeas}).}
\label{tab:lists}
\end{table}

\paragraph{Quality of name lists.} Before we examine the performance of $\ell$BISG, we investigate the quality of LLM name lists relative to that of the Census-based lists, which can be thought of as a gold standard. For these list-quality diagnostics, we aggregate list membership and Census race shares at the tract level.  As shown in the column labeled ``In Census'' in Table~\ref{tab:lists}, between $97$ and $100\%$ of the LLM surnames generated for each group appear in the Census table, indicating that the model reliably produces plausible names. We note that the Census released the 2020 surname table in April 2026 and that Gemini~3.0 Flash was trained only on data through January 2025, so the validation data is not included in the training data. The 2000 and 2010 surname tables have been public for much longer, and so the model may be aware of them.

The two name list constructions diverge in coverage and precision, in particular for the Black and Asian lists. For Black registrants, the LLM covers $0.53$ of the group against $0.33$ for the Census list. It has a precision of $0.25$ against $0.46$, indicating that the model returns more common but less distinctive surnames when compared to the Census lists. The LLM is worse on both metrics for Asian registrants, $0.39$ against $0.42$ with respect to coverage and $0.17$ against $0.64$ for precision. For White and Hispanic registrants, the two lists are close on both measures. 

In addition, the LLM lists overlap across racial groups much more often than the lists based on the Census surname data: $26.9\%$ of Florida registrants have a surname belonging to more than one LLM list, compared to only $0.04\%$ for the Census-based lists.  This makes sense because our Census-based name list is intentionally constructed to increase distinctiveness. 

We also report the measure of list quality proposed in Section~\ref{sec-listqualitymeas}, based on the diagonal dominance $D(\Pimat)$ of the matrix whose $(\targlow, \targlow')$ entry $\pi_{\targlow,\targlow'}$ is the rate at which members of group $\targlow$ carry a name on list $\targlow'$ (see Figure~\ref{fig:pi-quality}). Estimated from the tract-level list hit rates and the published tract race shares alone, $D(\Pimat)$ is $0.36$ for the Census lists and $0.34$ for the LLM lists. This suggests that the former is of slightly higher quality than the latter.  These estimates are substantively similar to their ground truths based on self-reported race, which are $0.34$ and $0.30$ for the Census and LLM lists, respectively, indicating that the proposed quality measure is reliable. 

The estimated $\Pimat$ matrix also recovers coverage and precision well. Table~\ref{tab:lists} reports both quantities, as estimated from the tract hit rates and race shares without individual race labels, next to their ground truths. Estimated coverage is within $0.06$ of its ground truth and estimated precision within $0.16$ for every list. The largest differences are observed for the Hispanic lists and the Census Asian list, where the estimates overstate precision. The Census lists are more precise than the LLM lists for every group, both in the ground truth and the estimates.

\begin{figure}[!t]\centering
\includegraphics[width=\textwidth]{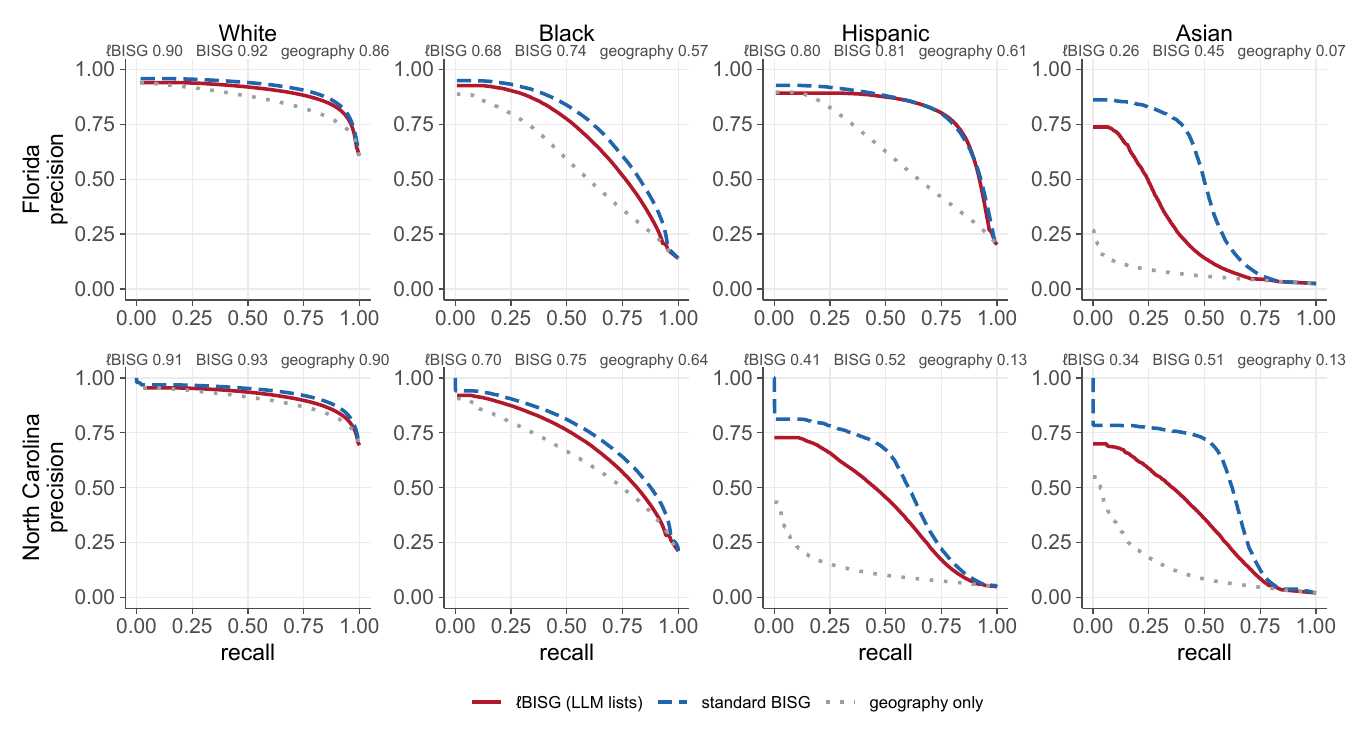}
\caption{Precision and recall for $\ell$BISG (LLM lists, red, solid), standard BISG (blue, dashed) and geography alone (grey, dotted), by state (rows) and race (columns). Average precision is reported above each panel.}
\label{fig:val-pr}
\end{figure}

\begin{figure}[!t]\centering
\includegraphics[width=\textwidth]{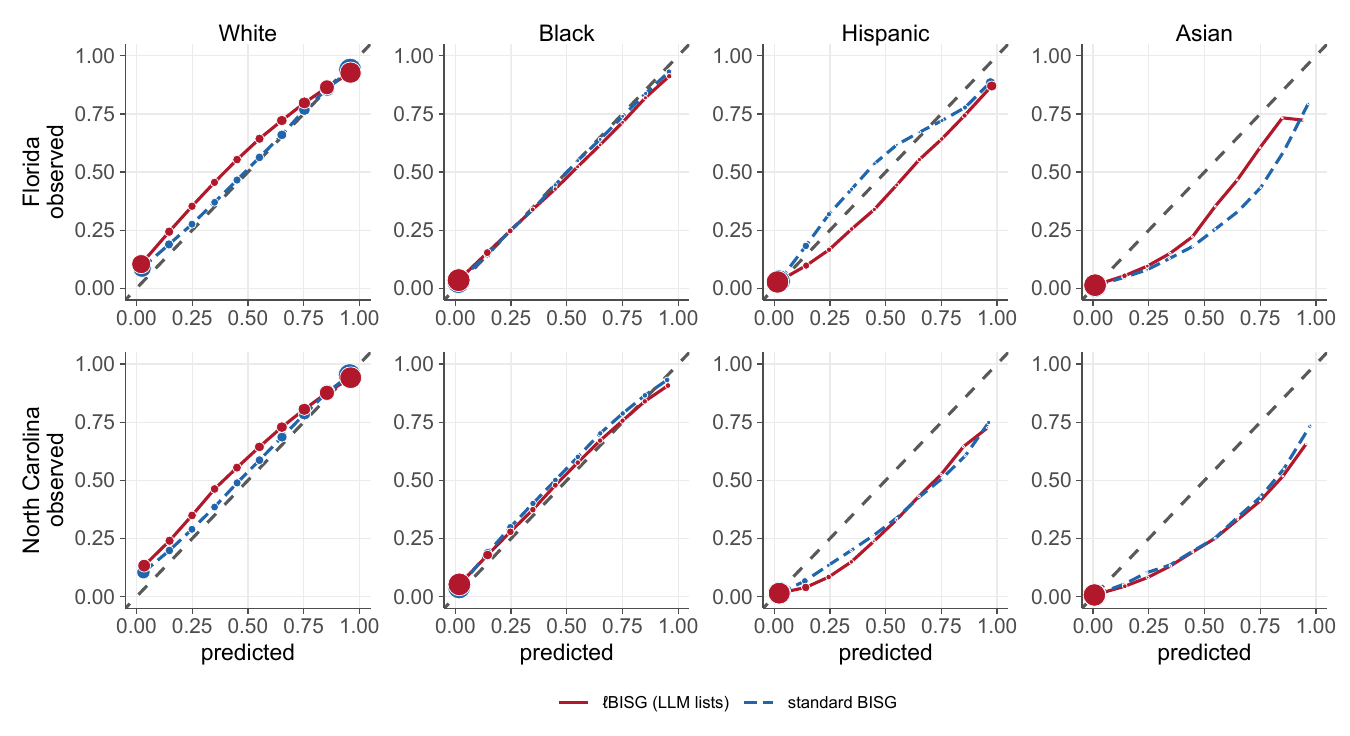}
\caption{Calibration of estimated individual race probabilities based on $\ell$BISG (LLM lists, red, solid) and standard BISG (blue, dashed) for Florida (top row) and North Carolina (bottom row). The dashed diagonal marks perfect agreement between estimated probabilities and observed proportions. The area of each circle is proportional to the number of registrants in the bin.}
\label{fig:val-calibration}
\end{figure}

\paragraph{Results.} We compare the empirical performance of $\ell$BISG with LLM lists and standard BISG for Florida and North Carolina.  As shown in Figure~\ref{fig:val-pr}, $\ell$BISG (red solid line) nearly matches the precision and recall of standard BISG (blue dashed line) for Whites in both states and Hispanics in Florida, despite having no access to a surname frequency table. For Black and Asian voters, standard BISG outperforms $\ell$BISG, by more for Asian voters; standard BISG also performs better for Hispanics in North Carolina. This reflects the lower precision of the LLM name lists for these two groups (Table~\ref{tab:lists}). In all cases, $\ell$BISG is still a substantial improvement on using geography-based probabilities alone (gray dotted line).

Figure~\ref{fig:val-calibration} evaluates calibration, showing how closely the predicted individual race probabilities correspond to observed group frequencies. With the exception of the Hispanic group in Florida, the calibration patterns are similar for $\ell$BISG and standard BISG. Calibration is notably imperfect in the Asian and Hispanic groups, but this is true for either method. For Black voters, both methods have expected calibration error of about $0.02$ or less in Florida and about $0.03$ in North Carolina. For White voters, $\ell$BISG probabilities tend to slightly underpredict this racial group.

We also evaluate the validity of downstream analysis using the individual race probabilities obtained from $\ell$BISG.  We estimate the Democratic share of registrants in each racial group with BIRDiE \citep[see][for details]{McCartan02102025}. BIRDiE takes the $\ell$BISG probabilities as its first-stage estimates and then uses the surname as a proxy predictive of race, while assuming that the surname is independent of the outcome conditional on race and geography. Although the BIRDiE assumption may not hold perfectly, it corrects the attenuation bias of other standard downstream estimation approaches that assume conditional independence between race and outcome conditional on name and geography.\footnote{This latter assumption, which is made in some standard estimation approaches, is especially difficult to justify for Black and white Americans who have similar surnames.}  We fit BIRDiE jointly over all five racial categories at the county level, with partial pooling across counties because the race-by-county cells for the smaller groups hold few registrants. For each racial group, we compute the bias and RMSE over every county, weighting each county via the number of the group's members implied by the geographic prior.

\begin{figure}[t]\centering
\includegraphics[width=\textwidth]{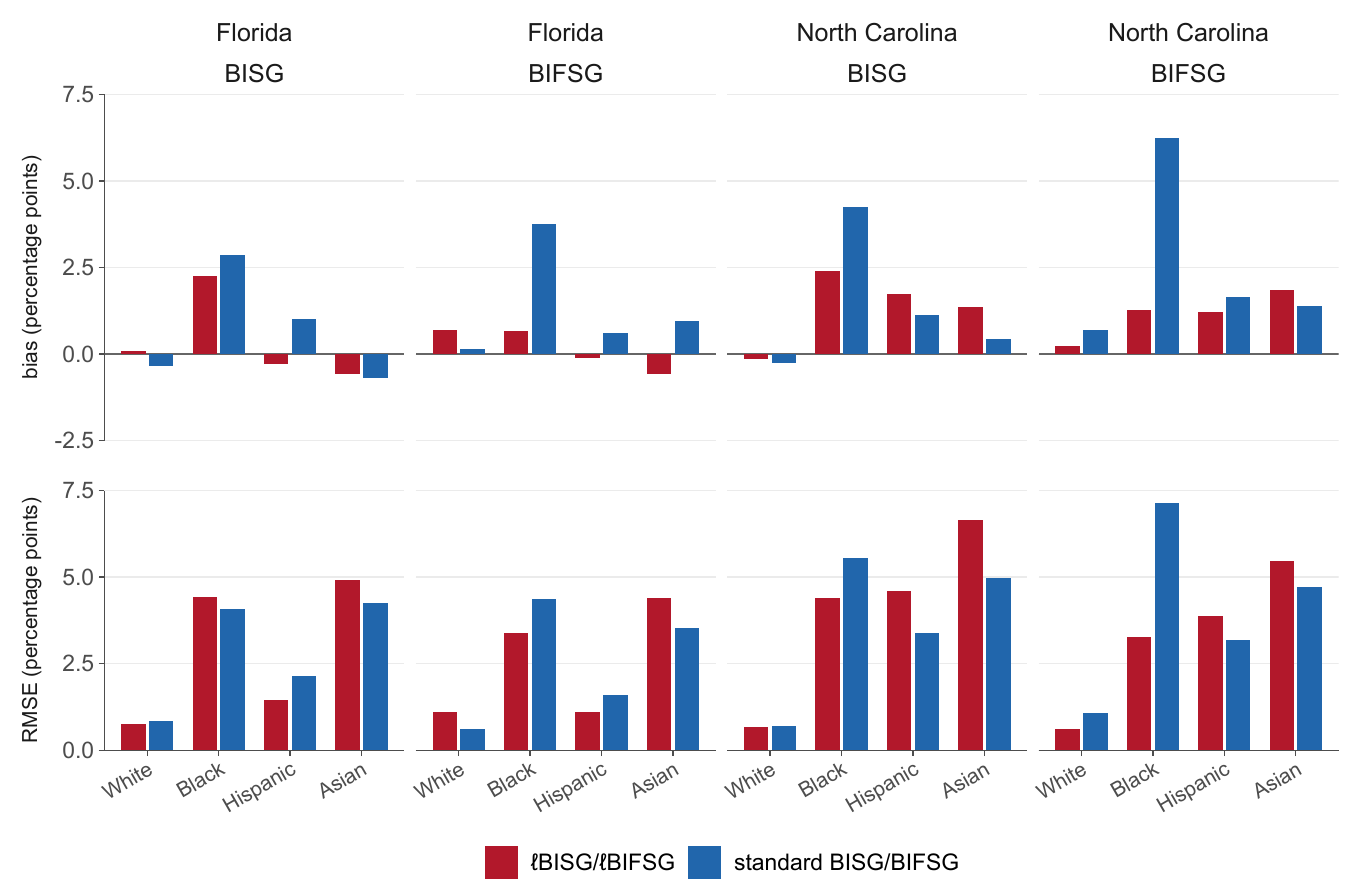}
\caption{County-level race-specific estimates of the Democratic share of registrants. Rows give signed bias (top) and root-mean-square error (bottom), both in percentage points and averaged over counties. The left two columns are Florida and the right two North Carolina, each pair showing the surname-only estimators (BISG) and then those that add first names (BIFSG). Red bars are the $\ell$BISG/$\ell$BIFSG (LLM lists) estimators, while blue bars are the estimates based on the standard BISG/BIFSG.}
\label{fig:val-disparity}
\end{figure}

Figure~\ref{fig:val-disparity} compares the estimated Democratic share of each racial group at the county level against the truth. The top panel shows that the magnitude of bias is smaller for $\ell$BISG (red bars) than for standard BISG (blue bars) for both White and Black registrants in both states.  For Hispanic and Asian registrants, the bias of $\ell$BISG is smaller in Florida, but in North Carolina, where biases are overall larger, there is an increase in bias relative to the standard BISG, again reflecting the low list quality identified above.

The bottom panel shows the root mean squared error (RMSE) has an inconsistent pattern for $\ell$BISG versus standard BISG. Specifically, $\ell$BISG improves on standard BISG for White and Hispanic registrants in Florida. $\ell$BISG is within $0.1$ percentage points (ppts) of standard BISG in county RMSE for White registrants in both states. For Black registrants, $\ell$BISG falls behind standard BISG by $0.4$ ppts in Florida and improves on it by $1.1$ ppts in North Carolina. For Hispanic registrants in North Carolina, $\ell$BISG falls behind standard BISG by $1.2$ ppts. $\ell$BISG also falls behind standard BISG for the rarer Asian group, whose name list is weaker, by $0.7$ ppts in Florida and $1.7$ ppts in North Carolina.

Adding first names ($\ell$BIFSG) lowers county-level error in most cases. In terms of RMSE, $\ell$BIFSG improves on $\ell$BISG for all four groups in North Carolina, by $1.1$ ppts for Black registrants and $1.2$ ppts for Asian registrants, with White RMSE essentially unchanged. Similarly, $\ell$BIFSG improves upon $\ell$BISG for three of the four groups in Florida, by between $0.3$ and $1.1$ ppts. For Black registrants in North Carolina, $\ell$BIFSG outperforms standard BIFSG by $3.9$ ppts. For bias, the addition of first names moves two of the four groups in each state farther from zero, and Black absolute bias decreases in both states, from $2.3$ to $0.7$ ppts in Florida and from $2.4$ to $1.3$ ppts in North Carolina. Standard BIFSG likewise improves upon standard BISG in RMSE for five of the eight state-group pairs, the exceptions being Black registrants in both states and White registrants in North Carolina.

We also repeat the downstream analysis using the Census lists (see Figure~\ref{fig:val-disparity-census} in the Appendix). The results are similar to those based on the LLM lists shown here, with a few moderate differences. For both absolute bias and RMSE, the Census lists yield slightly lower or essentially identical results for all groups in Florida and North Carolina -- with the exception of Asian registrants in Florida, who have notably lower bias but higher RMSE when using the census lists. 

Overall, we find that $\ell$BISG is comparable to standard BISG in accuracy, calibration, and downstream analysis validity, while first names improve performance further through $\ell$BIFSG. Group-specific performance results depend on measurable group-specific list quality. With respect to the LLM lists, the Asian list is weak in precision, and for this group, the downstream analysis based on $\ell$BISG is less accurate than that based on standard BISG.

\subsection{Predicting Asian subgroups using US historical census}\label{sec-val-asian}

We next turn to the US historical census, which provides population-scale, individual-level records decennially from 1850 to 1950 \citep{Nelson2025}. We focus on the 1900 census, which includes, among other information, individual name, race, and country of origin, and enumerates Chinese and Japanese as separate ethnic groups. We validate $\ell$BISG estimates against these two groups as enumerated in the race field of each individual record, regardless of birthplace. 

We consider the challenging setting of Section~\ref{sec-subgroupextension}, in which the geographic prevalence of the target groups (i.e., Chinese and Japanese) is unknown while that of their coarser group (i.e., Asian) is available. This is the setting where name lists are required because we recover the subgroup geographic prevalence from the rates at which the people in a given county have surnames on each list. We do not assume that it is known who is in the Asian group, so we also include a group for each of the other races. This validation set-up mimics a relevant contemporary $\ell$BISG application: Standard BISG is not applicable for more recent Censuses  because neither surname frequency tables nor geographic prevalence information exists (or has been made public) for Asian sub-groups.

\paragraph{Setup.}
We use the de-anonymized (restricted access) 1900 Federal Census. By 1900, approximately $126{,}000$ Chinese and $94{,}000$ Japanese Americans were enumerated living in the US, concentrated in the West and in Hawaii. No other Asian group numbered more than $1{,}000$. We therefore use the Chinese and Japanese populations as the two target subgroups, and the Asian population as the coarse group. The published tables of the 1900 census report the size of the Asian population in every county but do not decompose this by country of origin. Therefore, if we were using only public information, we would know the geographic prevalence of the coarse group but not of the subgroups, which corresponds exactly to the setting described in Section~\ref{sec-subgroupextension}. We apply $\ell$BISG to all $75{,}790{,}807$ records of the census, of every race. The groups are Chinese and Japanese, within the Asian coarse group, the other coarse racial groups are white, Black, Native American (``American Indian''), and a residual group. Like today's Census, the 1900 Census published tables reporting the size of each coarse group in every county. 

We conduct two analyses. First, we consider ``oracle $\ell$BISG'' as a benchmark.  Specifically, we treat the county geography of the two subgroups as known by constructing the geographic prior from the census records themselves and apply the $\ell$BISG method. Second, we conduct a more realistic ``coarse prior $\ell$BISG'' analysis from Section \ref{sec-recovergeography}, where we treat only the geographic prevalence of the coarse groups as known (the Asian, white, Black, American Indian, and other residents of each county). Here, we recover the subgroup shares in each county from the list hit rates by applying the method described in Section~\ref{sec-subgroupextension} under Assumption~\ref{a-groupsep}. Then, we apply the main $\ell$BISG method to an LLM-based name list for each group. Finally, for both analyses, we conduct a downstream analysis to estimate each subgroup's socioeconomic standing at the same county level.

The additional assumption (Assumption~\ref{a-groupsep}) for coarse prior $\ell$BISG is that a person has a surname on a subgroup's list only if they belong to that subgroup (see also Assumption~\ref{a-homog-nonsep}). Chinese and Japanese surnames largely do not overlap with each other (only five surnames appear on both lists), but many white Americans carry a surname on the Chinese list, so this assumption does not hold exactly here. The validation against oracle $\ell$BISG gives us the opportunity to see how much this matters.

For both versions of $\ell$BISG, we construct a name list for each subgroup by prompting an LLM. As in Section~\ref{sec-val-voterfiles}, we also use an LLM, Gemini~3.0 Flash, to assist in writing these prompts. For each subgroup, we ask the model to propose three short descriptions that identify subgroups within that population, then substitute these descriptions into the same skeleton, which asks for surnames ``spelled as they were written in American census and immigration records of that period.'' For example, for the Chinese population, the descriptions are ``Cantonese families from the Pearl River Delta settling in California and the American West,'' ``Taishanese and Sze Yup laborers migrating to the sugar plantations of the Hawaiian Islands,'' and ``Fujianese and Hokkien merchants and sailors establishing communities in Atlantic port cities like New York.''  We form lists for each of the white, Black, and American Indian groups in the same way, with each prompt referring to its population ``in the United States around 1900''; the residual group of other residents has no list. Appendix~\ref{app-prompts} lists the descriptions generated for every group.

For each prompt, we make six separate model calls at temperature $1.0$, requesting $300$ surnames per call, rank the unique surnames by the number of calls that returned them, and retain the top $1{,}000$, breaking ties by the order in which the model first returned each surname. To make Assumption~\ref{a-groupsep} as credible as possible, we then apply the exclusivity filter between the Chinese and Japanese lists, dropping the five surnames that appear on both, and between either of them and the white, Black, and American Indian lists. These three lists are allowed to share surnames with one another because the geographic prevalence of their groups is known and does not need to be recovered. 

After filtering, the Chinese list holds $966$ surnames, the Japanese list $994$, and the white, Black, and American Indian lists $986$, $986$, and $978$. Even so, when comparing with the individual-level ground truth, we find that $81\%$ of the people with a surname on the Chinese list and $29\%$ of those with a surname on the Japanese list are not Asian. Examples of Japanese surnames that overlap with white Americans includes Ohara, with Irish immigrants, and Ide, with English and German immigrants. Various Cantonese romanizations such as Fay, Law, and Hay coincide with common English, Scottish, and German surnames. 

Because the white population is more than $300$ times larger than the Asian population at this time, however, only $0.4\%$ of the people who are not Asian have a surname on the Chinese list, and $0.01\%$ a surname on the Japanese list. Within the Asian population, $0.9\%$ of Japanese records have a surname on the Chinese list, and $0.05\%$ of Chinese records a surname on the Japanese list. Crucially, these numbers mean that the relevant cross probabilities $\Pr(\listmem_{\targlow}=1\mid \targ=\targlow')$, which we treat as $0$ for the method in Section~\ref{sec-subgroupextension}, are in fact small. The low precision $\Pr(\targ=\targlow\mid \listmem_{\targlow}=1)$ plays no direct role in our method.

The remaining steps follow Algorithm~\ref{alg-lbisg}. We embed surnames using the multilingual E5-large model and estimate $\hat{\bm f}$ in the same way as done in Section~\ref{sec-val-voterfiles}. We apply Algorithm~\ref{alg-selectK} to a random sample of $5{,}000{,}000$ records, which returns $K=67{,}500$ under both priors, and then apply Algorithm~\ref{alg-lbisg} to all records.

\begin{table}[!t]\centering
\begin{tabular}{lrrrrrrr}
\hline
 & & \multicolumn{3}{c}{Coverage} & \multicolumn{3}{c}{Precision} \\
List & Records & Truth & Oracle & Coarse & Truth & Oracle & Coarse \\
\hline
Chinese & $114{,}090$ & $0.55$ & $0.65$ & $0.51$ & $0.19$ & $0.21$ & $0.17$ \\
Japanese & $87{,}278$ & $0.26$ & $0.27$ & $0.24$ & $0.71$ & $0.72$ & $0.67$ \\
White & $66{,}834{,}064$ & $0.33$ & $0.32$ & $0.32$ & $0.83$ & $0.80$ & $0.80$ \\
Black & $8{,}717{,}147$ & $0.59$ & $0.74$ & $0.74$ & $0.20$ & $0.25$ & $0.25$ \\
American Indian & $34{,}207$ & $0.31$ & $0.22$ & $0.21$ & $0.00$ & $0.00$ & $0.00$ \\
\hline
\end{tabular}
\caption{Quality of the five LLM lists on the full-count 1900 census. ``Records'' counts the true number of people enumerated in each group. ``Coverage'' is the share of those records with a surname on the group's list, and ``Precision'' the share of all people with a surname on the list who belong to the group. ``Truth'' is computed from the enumerated race and ethnicity. ``Oracle'' is estimated by the method of Section~\ref{sec-listqualitymeas} with the true county prevalence of every group. ``Coarse'' uses only the coarse geography: for the Chinese and Japanese lists, coverage is obtained from the county-level list hit rates and the county Asian share by the method of Section~\ref{sec-recovergeography}, and precision is the estimated coverage times the recovered number of subgroup members, as a share of the people holding a list surname; for the other lists, whose county prevalence is known, it equals the method of Section~\ref{sec-listqualitymeas}.}
\label{tab:asian}
\end{table}

\paragraph{Quality of name lists.} Table~\ref{tab:asian} reports the true coverage and precision for each list computed from the enumerated ethnicity. The Chinese list covers $0.55$ of Chinese records with a precision of $0.19$, and the Japanese list covers $0.26$ of Japanese records with a precision of $0.71$. The precision of the Chinese list is low because most people holding a surname on it are the white Americans described above. The American Indian list has almost no precision because nearly every person holding one of its surnames is white or Black. With the true geographic prevalence of every group, the method of Section~\ref{sec-listqualitymeas} estimates the coverage of every list within $0.15$ of its ground truth and the precision within $0.05$. 

For the coarse prior $\ell$BISG, we cannot estimate the full $\Pimat$ when sub-group geography is unknown, but the subgroup recovery method of Section~\ref{sec-recovergeography} does, under the assumption that $\Pimat$ is diagonal, produce an estimate of the coverage of each list based on the county-level list hit rates and the coarse geography alone. Here, we find that this coverage estimate is within $0.04$ of its ground truth for both lists, and the recovered county subgroup shares correlate with the truth at $0.96$, weighting counties by their Asian population. The estimated precision is within $0.04$ of its ground truth for both lists. 

\begin{figure}[!t]\centering
\includegraphics[width=0.75\textwidth]{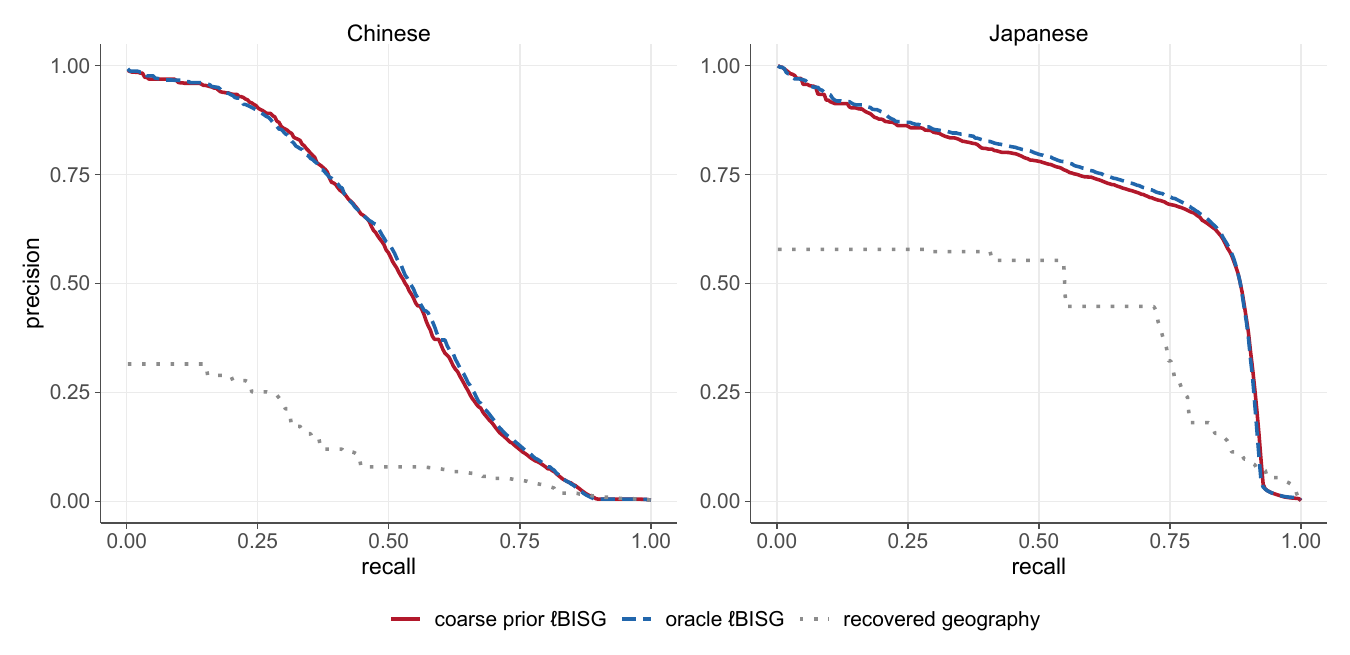}\\[6pt]
\includegraphics[width=0.75\textwidth]{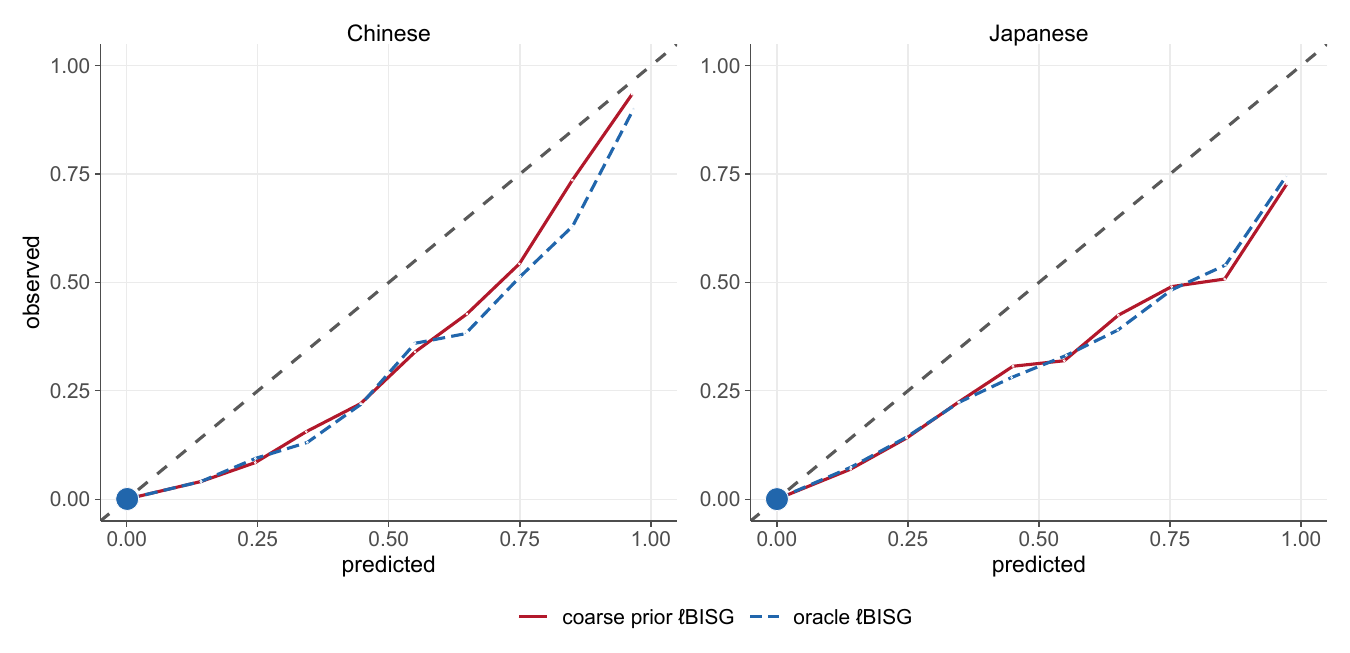}
\caption{Precision and recall (top) and calibration (bottom) over all records of the census for coarse prior $\ell$BISG, which recovers the subgroup geography from the coarse geography and the lists by the method of Section~\ref{sec-subgroupextension} (red, solid); oracle $\ell$BISG, which uses the true subgroup geography (blue, dashed); and, in the top panel, recovered geography, the county subgroup shares (grey, dotted), by subgroup. In the bottom panel, the dashed diagonal marks perfect agreement between estimated probabilities and observed proportions. The area of each circle is proportional to the number of records in the bin.}
\label{fig:val-asian}
\end{figure}

\paragraph{Results.} We compare the empirical performance of coarse prior $\ell$BISG against oracle $\ell$BISG, as well as against using the recovered geography alone. As shown in Figure~\ref{fig:val-asian}, coarse prior $\ell$BISG (red solid line) approaches oracle $\ell$BISG (blue dashed line): average precision over all records is $0.52$ against $0.52$ for Chinese records and $0.72$ against $0.73$ for Japanese records, whereas recovered geography yields only $0.12$ and $0.37$, respectively. Among the Asian records alone, average precision is $0.88$ against $0.89$ for Chinese records and $0.88$ against $0.90$ for Japanese records.

Figure~\ref{fig:val-asian} shows that, under both $\ell$BISG and its oracle version, the two subgroups are over-predicted at high probabilities: among the records assigned a probability near $0.97$ of being Japanese, $0.73$ are Japanese. The expected calibration error over all records is about $0.001$ for both $\ell$BISG and its oracle version because nearly every record receives a probability near zero.

\begin{figure}[!t]\centering
\includegraphics[width=\textwidth]{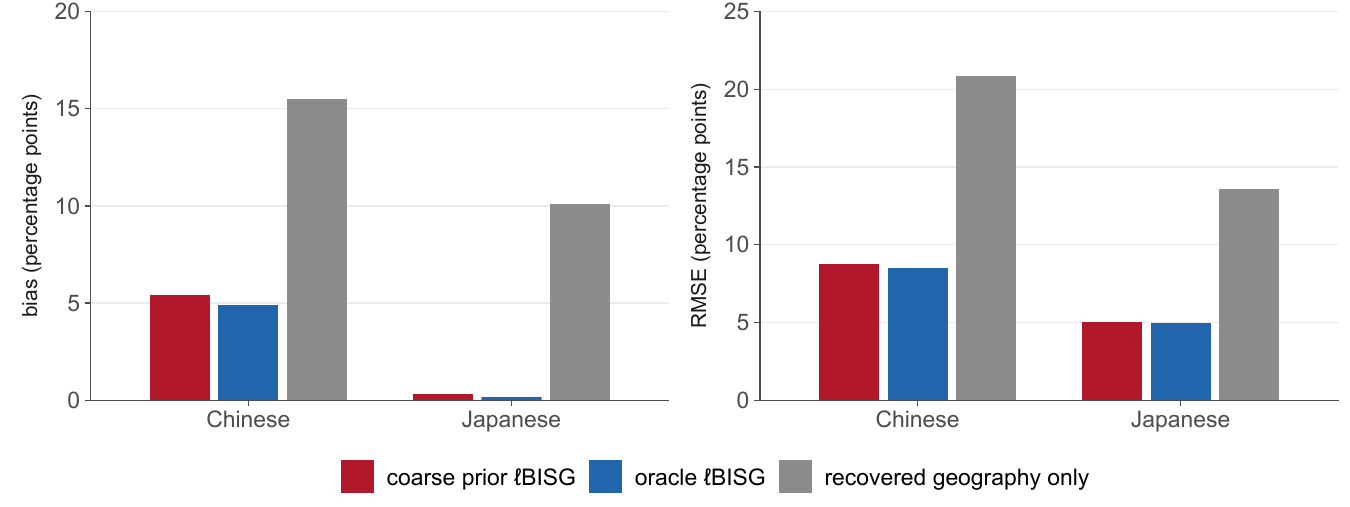}
\caption{County-level subgroup-specific estimates of the share in the top quartile of the 1900 socioeconomic index. Panels give signed bias (left) and root-mean-square error (right), both in percentage points. Red bars are coarse prior $\ell$BISG, blue bars oracle $\ell$BISG, and grey bars recovered geography only.}
\label{fig:asian-birdie}
\end{figure}

We also evaluate the validity of downstream analysis based on the probabilities obtained using $\ell$BISG. We estimate the share of each subgroup in the top quartile of the 1900 socioeconomic index within each county, among the $24{,}235{,}916$ records of every race in the labor force with a recorded index value. The index is the Duncan socioeconomic index provided by IPUMS \citep{Nelson2025}, which scores each occupation by the income and education of its members in 1950. We fit a single jointly estimated BIRDiE model over all six groups with complete pooling \citep{McCartan02102025}, and score every county, weighting each county's error by the subgroup's expected count under the recovered geographic prior. We compare $\ell$BISG against oracle $\ell$BISG and geography alone.

Figure~\ref{fig:asian-birdie} compares the estimated top-quartile share of each subgroup at the county level against the truth. The bias of $\ell$BISG is close to that of oracle $\ell$BISG: $5.4$ against $4.9$ for Chinese records and $0.3$ against $0.2$ for Japanese, whereas using recovered geography alone yields $15.5$ and $10.1$ ppts. The county RMSE of $\ell$BISG is $8.8$ against $8.5$ ppts for Chinese records and $5.0$ against $5.0$ for Japanese, against $20.9$ and $13.6$ for recovered geography only. Overall, we find that coarse prior $\ell$BISG approaches oracle $\ell$BISG in accuracy, calibration, and downstream analysis validity. Thus, this validation exercise demonstrates the applicability of $\ell$BISG to subgroups for which neither a surname table nor a geographic prior exists.

\subsection{Predicting religious sects in Lebanon using voter files}\label{sec-val-lebanon}

Finally, we turn to Lebanon and apply $\ell$BISG outside the United States to predict individuals' religious sect memberships rather than their racial and ethnic identities.

\paragraph{Setup.} We use full-count voter files, containing $3{,}676{,}322$ registrants, which were published by the Lebanese Ministry of Interior and Municipalities through the Directorate General of Civil Status on 1 February 2018. We drop the $14{,}388$ voters whose sectarian memberships are missing. After also requiring usable names and geography, the file contains $3{,}661{,}930$ registrants. Each file contains the full name, location of residence, and self-reported religious sect of each registrant \citep{Dasanaike2026zeroshot}. We consider the seven largest sects, together comprising $96.6\%$ of all registrants, and collapse the remaining $125{,}534$ registrants into the residual group. These seven sects are the Shia, at $29.9\%$ of all registrants, Sunni ($27.1\%$), Maronite ($19.8\%$), Roman Orthodox ($7.0\%$), Druze ($5.5\%$), Roman Catholic ($4.9\%$), and Armenian Orthodox ($2.4\%$). No other sect comprises more than $1\%$ of the roll.

Surnames are more informative about religious sects in Lebanon than about racial groups in the United States because families tend to be confessionally endogamous. However, that informativeness is unevenly distributed. The least distinctive names belong to the smaller Christian sects, whose surnames come from a Levantine Arabic ancestry shared with larger neighbors. 

Unfortunately, the last census in Lebanon was conducted in 1932 under French supervision, so the geographic distribution of sect from external sources is not available. Therefore, for the purposes of this validation exercise, we form geographic priors using the voter file itself. Specifically for each voter, we calculate the share of each sect among the registrants of the voter's ``locality", which is a formal administrative term referring to towns and villages, aggregated from the self-reported sects in the registry itself (there are $1{,}550$ localities). 

Our downstream analysis uses locality-level $\ell$BISG predictions to estimate the age composition of each sect at the district level ($n = 29$). The age composition of Lebanon's sectarian groups is politically consequential because political representation is explicitly allocated along sectarian lines. Based on the 1932 census, which found Christians to constitute 51\% of the population, the 1943 National Pact allocated parliamentary seats to Christians and Muslims in a 55--45 ratio. Following fifteen years of sectarian civil war, the 1989 Taif Agreement revised this allocation to an equal 50--50 split. By that time, however, unofficial estimates placed the Muslim share of the population at roughly 70\% \citep{faour2007religion}.

Because younger cohorts are disproportionately Muslim, a proposal to lower the voting age from $21$ to $18$, endorsed by parliament in 2009, has stalled amid concerns that expanding the electorate to younger voters would place Christian confessions at a substantial numerical disadvantage \citep{faour2007religion}. Although no census has been conducted since 1932, in part to avoid reigniting sectarian conflict over the demographic balance, sectarian affiliation continues to be recorded in the electoral rolls because civil status in Lebanon is administered by confession and parliamentary seats are allocated by sect.

Without a published surname table, we construct lists using an approach similar to the one employed in Section~\ref{sec-val-asian}.  Specifically, we use Gemini~3.0 Flash to propose prompts written in Arabic (see Appendix~\ref{app-prompts} for details). As done in the previous applications, we make six separate model calls per prompt at temperature $1.0$, rank the surnames by the number of calls that returned them, and retain up to $1{,}000$ unique surnames for each sect. The Maronite and Druze lists contain $861$ and $853$ surnames, respectively; the other five contain $1{,}000$.

We embed surnames using the paraphrase-multilingual-mpnet-base-v2 model \citep{reimers-gurevych-2020-making}, which accommodates Arabic script, instead of the E5 encoder used in the previous applications. In Step~4 of Algorithm~\ref{alg-lbisg}, the list-membership models, one per sect list, are fit on all registrants. The models score each surname out-of-sample across five folds that randomly partition the distinct surnames, with the number of epochs selected by cross-validation within each training set. The remaining steps follow Algorithm~\ref{alg-lbisg}, with $K=57{,}567$ clusters chosen by applying Algorithm~\ref{alg-selectK} to all $3{,}661{,}930$ registrants.

\begin{table}[!t]\centering
\begin{tabular}{lrrrrr}
\hline
 & & \multicolumn{2}{c}{Coverage} & \multicolumn{2}{c}{Precision} \\
Sect & Registrants & Ground truth & Estimate & Ground truth & Estimate \\
\hline
Shia & $1{,}096{,}570$ & $0.72$ & $0.72$ & $0.48$ & $0.48$ \\
Sunni & $991{,}158$ & $0.56$ & $0.58$ & $0.35$ & $0.36$ \\
Maronite & $724{,}231$ & $0.75$ & $0.77$ & $0.37$ & $0.38$ \\
Roman Orthodox & $254{,}712$ & $0.64$ & $0.68$ & $0.11$ & $0.11$ \\
Druze & $201{,}756$ & $0.80$ & $0.83$ & $0.11$ & $0.12$ \\
Roman Catholic & $180{,}195$ & $0.61$ & $0.68$ & $0.08$ & $0.08$ \\
Armenian Orthodox & $87{,}774$ & $0.24$ & $0.29$ & $0.77$ & $0.93$ \\
\hline
\end{tabular}
\caption{Quality of the seven LLM sect lists on the Lebanese voter registry. ``Coverage'' is the share of a sect's registrants carrying a surname on that sect's list, and ``Precision'' the share of the registrants carrying a list surname who belong to the sect. Lists contain up to $1{,}000$ unique surnames.}
\label{tab:sects}
\end{table}

\paragraph{Quality of name lists.} We evaluate the quality of LLM sect lists using an approach similar to the one used in the previous applications.  Table~\ref{tab:sects} reports coverage and precision for each of the seven sects, both as computed from the recorded sects and as estimated from the locality system, using the locality sect shares as the geographic prior and no individual labels (Section~\ref{sec-listqualitymeas}). The estimates align closely to their corresponding ground-truths with the only exception being the estimated precision for Armenian Orthodox, which overestimates precision by 16 ppt.

Across the groups, the Armenian lists yield the lowest coverage but the highest precision, while the Druze and Roman Catholic lists hold surnames that cover most registrants of those sects as well as many non-members. The sect lists also overlap more often than any of the American lists: $57.1\%$ of registrants have a surname that belongs to more than one list.
Our overall list quality measure is $D(\Pimat) = 0.36$, which is similar to its ground truth $0.33$ based on individual sectarian memberships, suggesting that the Lebanese lists are similar in quality to the American lists. 

\paragraph{Results.} We compare the empirical performance of $\ell$BISG based on the LLM lists against that of geography alone prediction as a baseline. Aside from constructing the geographic prior, the sect recorded in the voter file is used only for evaluation. As shown in Figure~\ref{fig:val-lebanon-pr}, locality alone (grey dots) already predicts sect well because most localities are confessionally homogeneous. Still, $\ell$BISG (red solid line) improves on locality-alone prediction for every sect, though only marginally for the Shia, and by the most for the Armenian Orthodox, whose average precision rises from $0.48$ under geography alone to $0.73$; the Armenian Orthodox list is the most precise of the seven (Table~\ref{tab:sects}).

\begin{figure}[!t]\centering
\includegraphics[width=\textwidth]{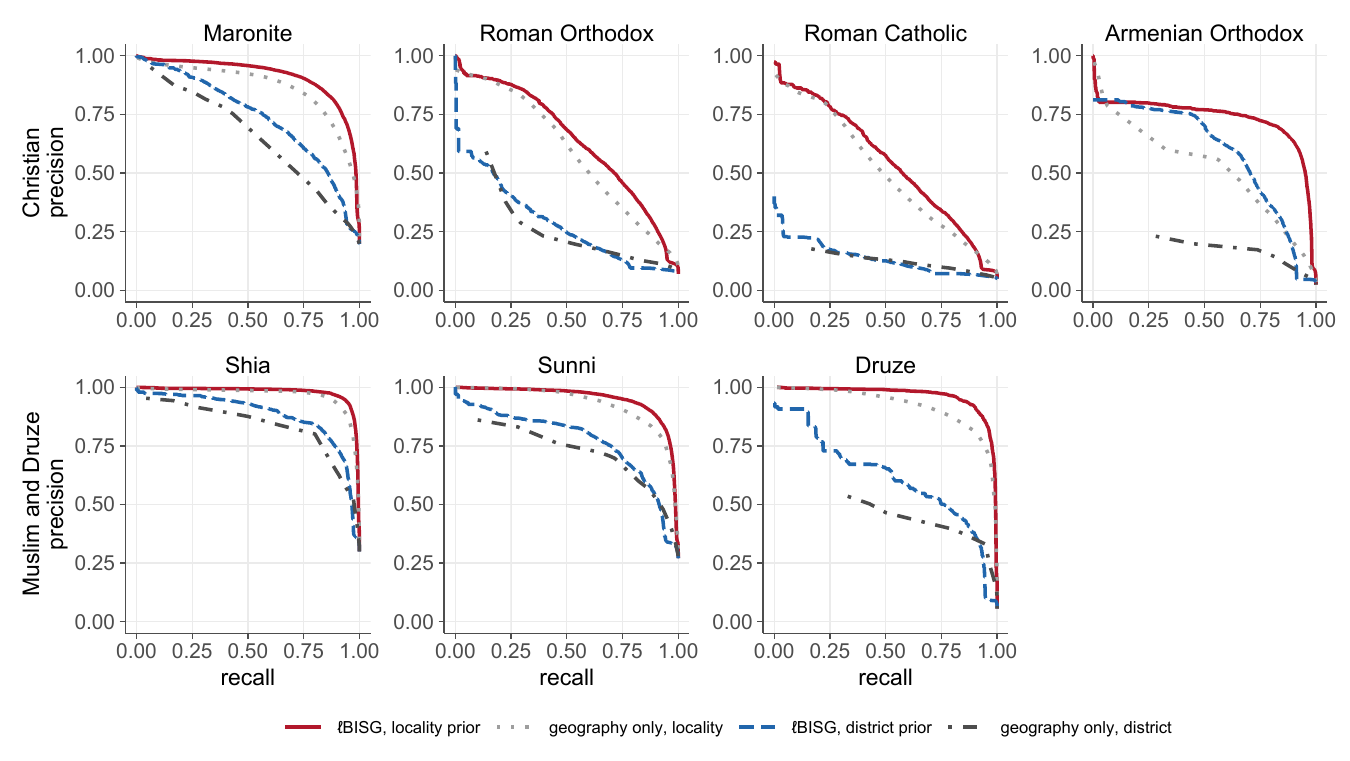}
\caption{Precision and recall for $\ell$BISG and geography alone for the seven principal Lebanese sects, by confession (rows) and sect (columns), under the locality-level prior ($n = 1{,}550$; $\ell$BISG red solid, geography grey dotted) and the district-level prior ($n = 29$; $\ell$BISG blue dashed, geography dark grey dot-dashed).}
\label{fig:val-lebanon-pr}
\end{figure}

The Muslim sects and the Druze are predicted best, with average precision above $0.94$, and the smaller Christian sects worst, at $0.55$ for Roman Catholics and $0.64$ for Roman Orthodox because their surnames are shared with the larger Christian sects. A low-precision list still supports precise predictions. The Roman Catholic list has a precision of only $0.08$ at a recall equal to its coverage of $0.61$ (Table~\ref{tab:sects}), whereas the $\ell$BISG posterior reaches a precision of $0.47$ at the same recall. This is because the posterior also draws on geography and on the other six lists.

Moreover, if we only look at the localities where no single sect comprises more than 90\% of the roll (43\% of $1,550$ localities), $\ell$BISG outperforms geography alone by between $0.05$ (Roman Catholic) and $0.26$ (Armenian Orthodox) in average precision, and by $0.10$ to $0.14$ for the three largest sects.

If we use the more heterogeneous district-level ($n=29$) share as the geographic prior instead of the locality, $\ell$BISG (blue dashed line) outperforms district-alone prediction (dark grey dot-dashed line) by between $0.01$ (Roman Catholic) and $0.41$ (Armenian Orthodox) in average precision. However, the downstream age estimates in Figure~\ref{fig:lebanon-cohort} become less accurate when probabilities are calculated at the district level, with the RMSE rising for six of the seven sects. This is because the coarser prior discards within-district geographic information.

\begin{figure}[!t]\centering
\includegraphics[width=\textwidth]{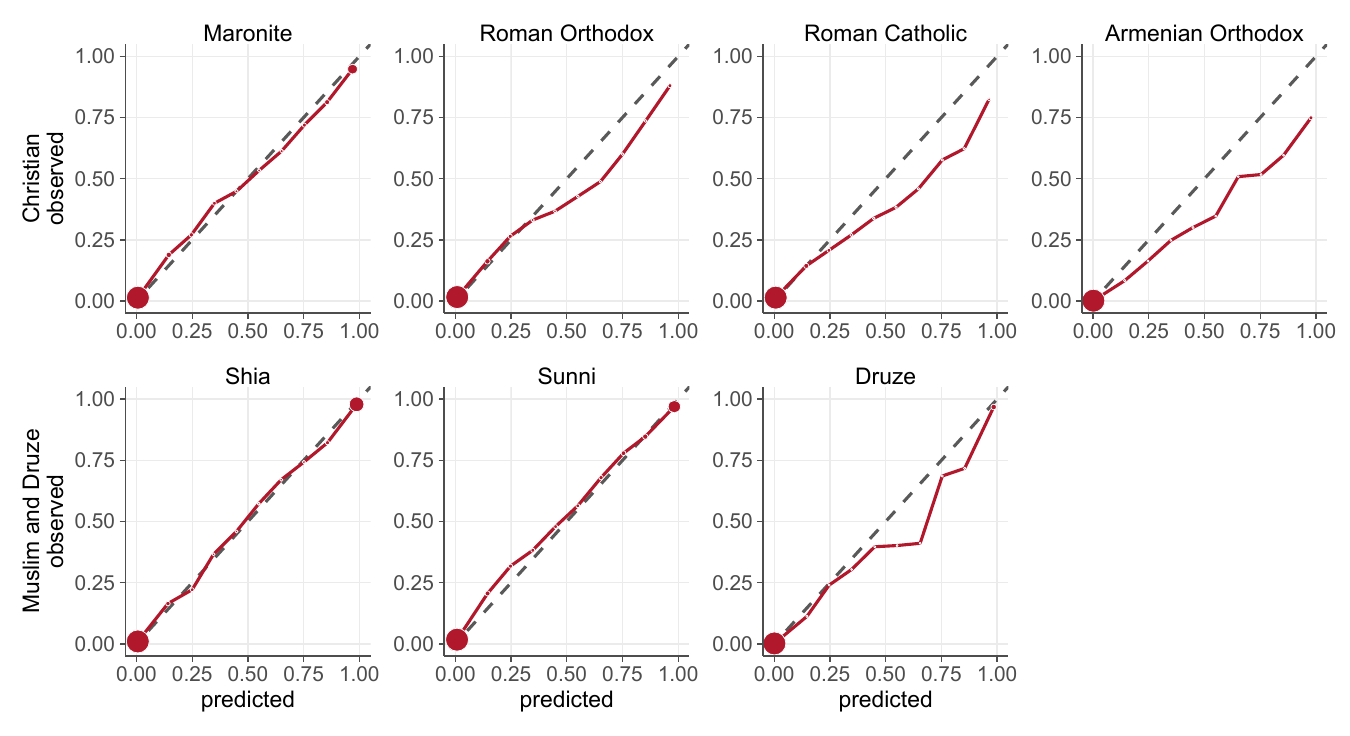}
\caption{Calibration of estimated individual sect probabilities based on $\ell$BISG (LLM lists) for the seven principal Lebanese sects, by confession (rows) and sect (columns). The dashed diagonal marks perfect agreement between estimated probabilities and observed proportions. The area of each circle is proportional to the number of registrants in the bin.}
\label{fig:val-lebanon-calibration}
\end{figure}

Figure~\ref{fig:val-lebanon-calibration} shows that the predicted individual sect probabilities based on $\ell$BISG are well calibrated for the Maronites, the Muslim sects, and the Druze, whose reliability curves lie close to the diagonal. The three smaller Christian sects are over-predicted at high probabilities. Among registrants assigned a probability near $0.98$ of being Armenian Orthodox, $0.75$ are; most of the remainder are Armenian Catholics and Armenian Protestants in the residual class, since the Armenian Orthodox list identifies Armenian surnames rather than the denomination. The expected calibration error is nevertheless low for every sect, from $0.004$ for the Druze to $0.015$ for the Roman Catholics, because most registrants receive a probability near zero for any given sect.

We also evaluate the validity of the downstream analysis of age based on the estimated $\ell$BISG individual sect probabilities under the locality-level prior. The estimand is the share of each sect born in $1985$ or later within each district. This share varies across the sects, ranging from $12.9\%$ among Armenian Orthodox to $28.4\%$ among Shia. We estimate it for each sect within districts using separate target-versus-rest BIRDiE models with a saturated district specification, among the $90.2\%$ of registrants with a recorded birth year. We then compute the bias and RMSE over every sect-by-district pair, weighting each pair by its expected number of sect members under the geographic prior. We evaluate at the district level rather than at the locality level of the prior because most localities are confessionally homogeneous, such that geography alone already recovers the age composition of each sect within most localities. Again, we compare these estimates against using geography alone.

\begin{figure}[t]\centering
\includegraphics[width=\textwidth]{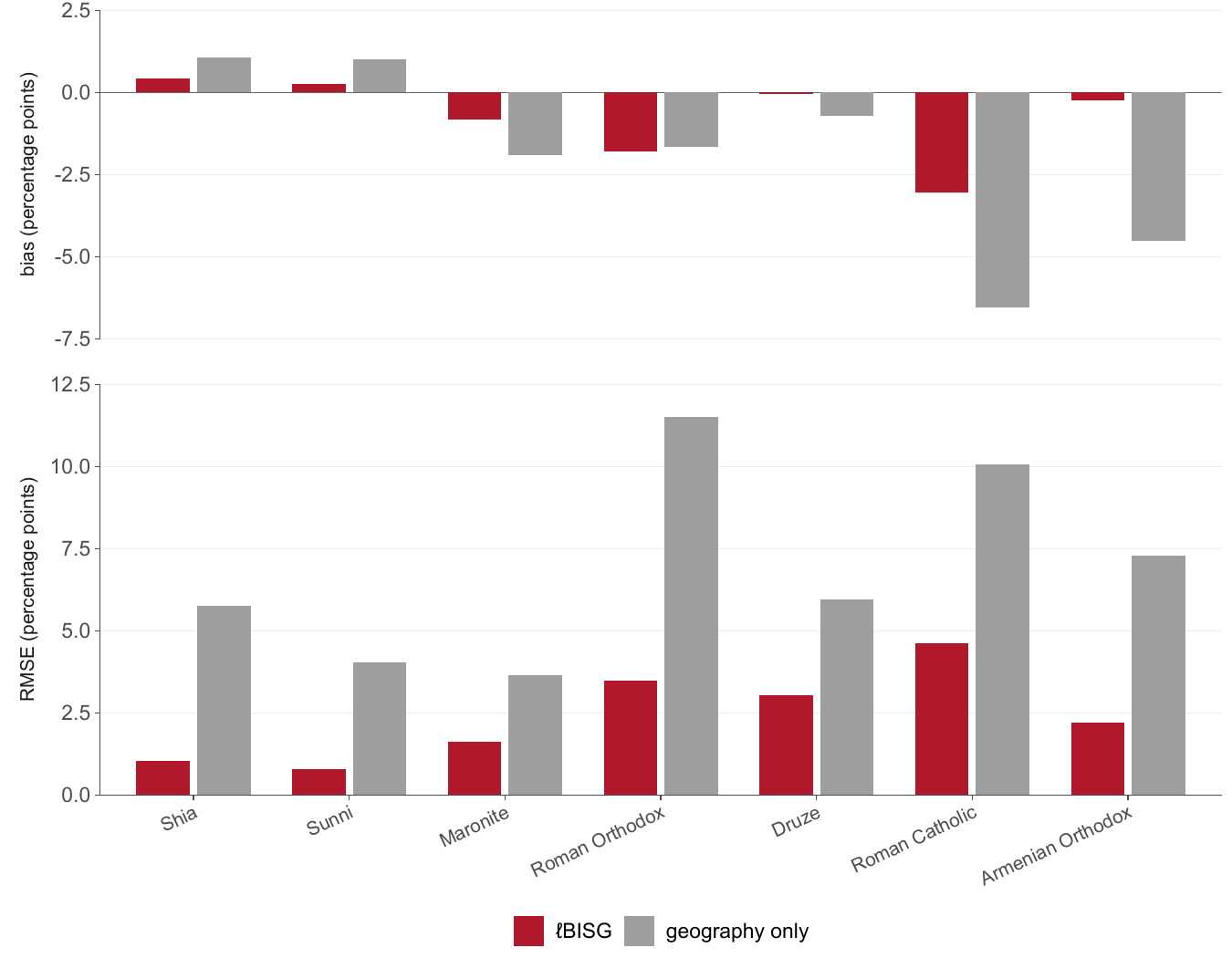}
\caption{Share of each sect born in $1985$ or later, estimated by district. Rows give signed bias
and root-mean-square error against the enumerated sect, both in ppts; red bars are $\ell$BISG
and grey bars geography alone. Sects are ordered by their share of the roll.}
\label{fig:lebanon-cohort}
\end{figure}

Figure~\ref{fig:lebanon-cohort} shows that across the $202$ sect-by-district pairs, $\ell$BISG reduces the overall RMSE of the geographic baseline by about $68\%$, from $5.9$ to $1.9$ ppts. RMSE (bottom panel) is lower for all seven sects, and absolute bias (top panel) is lower for six.

The size of the reduction reflects two separate components: how well geography alone recovers the sect due to residential segregation, and how well surnames identify group members. The Roman Catholics, whose members are older and dispersed as a minority across districts, have a baseline RMSE of $10.1$ ppts, of which $\ell$BISG removes $54\%$. The largest proportional reductions are for the Shia, from $5.8$ to $1.1$ ppts ($82\%$), and the Sunni, from $4.1$ to $0.8$ ppts ($80\%$).

Overall, we find that $\ell$BISG predicts sectarian memberships with average precision between $0.55$ and $0.98$, improves on geography alone for every sect -- though the extent depends on the precision of the initial geographic probabilities -- and reduces the error of the geographic baseline on the district-level estimand by about $68\%$. This validation demonstrates the potential application of $\ell$BISG outside of the United States, to non-English names, and to a non-traditional group membership for which surnames are informative.

\section{Concluding Remarks}

Bayesian Improved Surname Geocoding (BISG) and its variants have become the standard method for predicting individual race in the US.  However, the requirement of race-name frequency tables has severely limited the applicability of BISG beyond this specific context.  Often, these methods have been inapplicable beyond the coarse racial categories defined by the Census Bureau and outside the US, where such information is rarely available. 

In this paper, we overcome this difficulty by developing list-powered BISG ($\ell$BISG), which only requires group-specific name lists that can easily be obtained from an LLM. Importantly, when geographic prevalence information is available, the only new assumption is a rank condition on geographic prevalence that will often be satisfied if the groups of interest have at least some geographic variation. That is, $\ell$BISG only imposes this single additional assumption beyond what is required for the standard BISG method.  

$\ell$BISG works by using a previously unrecognized consequence of the standard BISG assumption that surname and geography are independent given group membership. If a variable $B$ predictive of group membership (such as surname, surname embeddings, or embedding clusters) is not directly related to geography conditional on group membership (and if group membership has a geographic pattern), then $B$ will also exhibit a geographic pattern that indirectly reveals how $B$ relates to group membership. By setting up a system of equations, we are able to recover that relationship, which can in turn be used to produce BISG probabilities. 

Building on our earlier work of embedding-powered BISG \citep[eBISG;][]{DasanaikeImai2026}, $\ell$BISG uses a proxy task of predicting list membership from name embeddings followed by clustering of those predictions to construct such a variable $B$. Under additional assumptions on list quality, we have also shown how to extend $\ell$BISG to settings in which prior information about subgroup geographic prevalence is not available.  

Our empirical validations show that the performance of $\ell$BISG is comparable to the standard BISG in terms of precision, accuracy, calibration, and downstream estimation performance --- more so when lists are of higher quality. This is even though the former only requires group specific name lists. The case where geographic priors are unavailable is much more challenging, but when lists are distinctive enough, we are able to achieve comparable performance to standard BISG. Ultimately, $\ell$BISG represents a substantial improvement over using a geographic prior alone when standard BISG is not applicable.

Because one can use a large-language model to generate an informative name list essentially for any group, $\ell$BISG extends probabilistic inference to fine-grained ethnic groups, to settings outside the United States, and to subgroups within a coarse racial group.  We expect $\ell$BISG to considerably widen the applicability of probabilistic prediction methods that estimate group disparities when their membership is not observed. 

Future research could improve $\ell$BISG in several directions. One direction is to extend our approach to recovering unavailable subgroup geographic prevalence to settings where additional prior information about list quality and overlap is available. Another is to explore alternatives to our combination of list-based prediction and K-means clustering for constructing the low-dimensional summary variable.

As we noted in Section~\ref{sec-keymethod}, in the settings where geographic priors are already available, the $\ell$BISG framework can be applied effectively without lists so long as embeddings are spatially clustered by the group of interest. However, when geographic information is not available, and for diagnostics, lists are essential. In some of our applications, we find that list-guided clustering only moderately improves on direct clustering. Still, it remains an open question how the added value of lists varies across application types and embedding models.

Broadly, this paper offers a general strategy for incorporating information from (LLM-generated) lists and name embeddings into well-calibrated probabilities that support valid downstream inference while accounting for uncertainty in unobserved group membership.

\newpage
\bibliography{references}

@inproceedings{chen2019fairness,
  title     = {Fairness Under Unawareness: Assessing Disparity When Protected Class Is Unobserved},
  author    = {Chen, Jiahao and Kallus, Nathan and Mao, Xiaojie and Svacha, Geoff and Udell, Madeleine},
  booktitle = {Proceedings of the Conference on Fairness, Accountability, and Transparency},
  pages     = {339--348},
  year      = {2019}
}

@article{KurokiPearl2014,
 ISSN = {00063444},
 URL = {http://www.jstor.org/stable/43305623},
 author = {Manabu Kuroki and Judea Pearl},
 journal = {Biometrika},
 number = {2},
 pages = {423--437},
 publisher = {[Oxford University Press, Biometrika Trust]},
 title = {Measurement bias and effect restoration in causal inference},
 urldate = {2023-04-04},
 volume = {101},
 year = {2014}
}

@inproceedings{wood2018predicting,
  title={Predicting Twitter user demographics from names alone},
  author={Wood-Doughty, Zach and Andrews, Nicholas and Marvin, Rebecca and Dredze, Mark},
  booktitle={Proceedings of the second workshop on computational modeling of people’s opinions, personality, and emotions in social media},
  pages={105--111},
  year={2018}
}

@article{faour2007religion,
  title={Religion, demography, and politics in Lebanon},
  author={Faour, Muhammad A},
  journal={Middle Eastern Studies},
  volume={43},
  number={6},
  pages={909--921},
  year={2007},
  publisher={Taylor \& Francis}
}

@article{greengard2025calibrated,
  title={A calibrated BISG for inferring race from surname and geolocation},
  author={Greengard, Philip and Gelman, Andrew},
  journal={Journal of the Royal Statistical Society Series A: Statistics in Society},
  volume={189},
  number={1},
  pages={512--543},
  year={2025},
  publisher={Oxford University Press UK}
}

@article{Imai_etal_2022,
author = {Kosuke Imai  and Santiago Olivella  and Evan T. R. Rosenman },
title = {Addressing census data problems in race imputation via fully Bayesian Improved Surname Geocoding and name supplements},
journal = {Science Advances},
volume = {8},
number = {49},
pages = {eadc9824},
year = {2022},
doi = {10.1126/sciadv.adc9824},
URL = {https://www.science.org/doi/abs/10.1126/sciadv.adc9824},
eprint = {https://www.science.org/doi/pdf/10.1126/sciadv.adc9824},
}

@article{Voicu2018,
author = {Ioan Voicu},
title = {Using First Name Information to Improve Race and Ethnicity Classification},
journal = {Statistics and Public Policy},
volume = {5},
number = {1},
pages = {1--13},
year = {2018},
doi = {10.1080/2330443X.2018.1427012},
URL = {https://doi.org/10.1080/2330443X.2018.1427012},
}

@misc{DasanaikeImai2026,
Author = {Noah Dasanaike and Kosuke Imai},
Title = {Using Embedding Models to Improve Probabilistic Race Prediction},
Year = {2026},
Eprint = {arXiv:2604.22555},
}

@article{McCartan02102025,
author = {Cory McCartan and Robin Fisher and Jacob Goldin and Daniel E. Ho and Kosuke Imai},
title = {Estimating Racial Disparities When Race is Not Observed},
journal = {Journal of the American Statistical Association},
volume = {120},
number = {552},
pages = {2140--2153},
year = {2025},
publisher = {Taylor \& Francis},
doi = {10.1080/01621459.2025.2526695},
URL = {https://doi.org/10.1080/01621459.2025.2526695},
eprint = { https://doi.org/10.1080/01621459.2025.2526695}
}

@article{Nelson2025,
author = {Nelson, Matt A. and Magnuson, Diana L. and Hacker, J. David and Sobek, Matthew and Huynh, Lap and Roberts, Evan and Ruggles, Steven},
title = {New data sources for research on the nineteenth-century {United States}: {IPUMS} full count datasets of the censuses of population 1850--1880},
journal = {Historical Methods: A Journal of Quantitative and Interdisciplinary History},
volume = {58},
number = {4},
pages = {199--213},
year = {2025},
publisher = {Taylor \& Francis}
}

@Manual{wru,
  title = {wru: Who are You? Bayesian Prediction of Racial Category Using Surname, First Name, Middle Name, and Geolocation},
  author = {Kabir Khanna and Brandon Bertelsen and Santiago Olivella and Evan Rosenman and Alexander Rossell Hayes and Kosuke Imai},
  year = {2024},
  note = {R package version 3.0.3},
  url = {https://CRAN.R-project.org/package=wru},
  doi = {10.32614/CRAN.package.wru}
}

@article{greenwald2024regulatory,
  title={Regulatory Arbitrage or Random Errors? {I}mplications of Race Prediction Algorithms in Fair Lending Analysis},
  author={Greenwald, Daniel and Howell, Sabrina T. and Li, Cangyuan and Yimfor, Emmanuel},
  journal={Journal of Financial Economics},
  volume={157},
  pages={103857},
  year={2024}
}

@book{fraga2018turnout,
  title={The Turnout Gap: Race, Ethnicity, and Political Inequality in a Diversifying America},
  author={Fraga, Bernard L.},
  year={2018},
  publisher={Cambridge University Press}
}

@article{adjayegbewonyo2014bisg,
  title={Using the {B}ayesian Improved Surname Geocoding Method ({BISG}) to Create a Working Classification of Race and Ethnicity in a Diverse Managed Care Population: A Validation Study},
  author={Adjaye-Gbewonyo, Dzifa and Bednarczyk, Robert A. and Davis, Rachel L. and Omer, Saad B.},
  journal={Health Services Research},
  volume={49},
  number={1},
  pages={268--283},
  year={2014}
}

@article{fiscella2006geocoding,
  title={Use of Geocoding and Surname Analysis to Estimate Race and Ethnicity},
  author={Fiscella, Kevin and Fremont, Allen M.},
  journal={Health Services Research},
  volume={41},
  number={4p1},
  pages={1482--1500},
  year={2006}
}

@article{elliott2009using,
  title={Using the {C}ensus {B}ureau's Surname List to Improve Estimates of Race/Ethnicity and Associated Disparities},
  author={Elliott, Marc N. and Morrison, Peter A. and Fremont, Allen and McCaffrey, Daniel F. and Pantoja, Philip and Lurie, Nicole},
  journal={Health Services and Outcomes Research Methodology},
  volume={9},
  number={2},
  pages={69--83},
  year={2009}
}

@techreport{cfpb2014proxy,
  title={Using Publicly Available Information to Proxy for Unidentified Race and Ethnicity},
  author={{Consumer Financial Protection Bureau}},
  year={2014},
  institution={CFPB},
  address={Washington, DC}
}

@article{imaikhanna2016,
  title={Improving Ecological Inference by Predicting Individual Ethnicity from Voter Registration Records},
  author={Imai, Kosuke and Khanna, Kabir},
  journal={Political Analysis},
  volume={24},
  number={2},
  pages={263--272},
  year={2016}
}

@article{deluca2024power,
  title={The Power of Characters: Evaluating Machine Learning-Modified {B}ayesian Improved Surname Geocoding Inference of Race in Redistricting},
  author={Curiel, John A. and DeLuca, Kevin},
  journal={State Politics \& Policy Quarterly},
  volume={24},
  number={3},
  pages={300--321},
  year={2024}
}

@article{wang2024e5,
  title={Multilingual {E5} Text Embeddings: A Technical Report},
  author={Wang, Liang and Yang, Nan and Huang, Xiaolong and Yang, Linjun and Majumder, Rangan and Wei, Furu},
  journal={arXiv preprint arXiv:2402.05672},
  year={2024}
}

@article{argylebarber2024,
  title={Misclassification and Bias in Predictions of Individual Ethnicity from Administrative Records},
  author={Argyle, Lisa P. and Barber, Michael J.},
  journal={American Political Science Review},
  volume={118},
  number={2},
  pages={1058--1066},
  year={2024}
}

@article{sood2018predicting,
  title={Predicting Race and Ethnicity from the Sequence of Characters in a Name},
  author={Chintalapati, Rajashekar and Laohaprapanon, Suriyan and Sood, Gaurav},
  journal={arXiv preprint arXiv:1805.02109},
  year={2018}
}

@article{parasurama2021racebert,
  title={race{BERT}: A Transformer-Based Model for Predicting Race and Ethnicity from Names},
  author={Parasurama, Prasanna},
  journal={arXiv preprint arXiv:2112.03807},
  year={2021}
}

@article{jain2022importance,
  title={The Importance of Being {E}rnest, {E}kundayo, or {E}swari: An Interpretable Machine Learning Approach to Name-Based Ethnicity Classification},
  author={Jain, Vaishali and Enamorado, Ted and Rudin, Cynthia},
  journal={Harvard Data Science Review},
  volume={4},
  number={3},
  year={2022}
}

@inproceedings{ye2017nationality,
  title={Nationality Classification Using Name Embeddings},
  author={Ye, Junting and Han, Shuchu and Hu, Yifan and Coskun, Baris and Liu, Meizhu and Qin, Hong and Skiena, Steven},
  booktitle={Proceedings of the 2017 ACM Conference on Information and Knowledge Management},
  pages={1897--1906},
  year={2017}
}

@inproceedings{ye2019secret,
  title={The Secret Lives of Names? {N}ame Embeddings from Social Media},
  author={Ye, Junting and Skiena, Steven},
  booktitle={Proceedings of the 25th ACM SIGKDD International Conference on Knowledge Discovery \& Data Mining},
  pages={3000--3008},
  year={2019}
}

@article{rosenman2023names,
  title={Race and Ethnicity Data for First, Middle, and Surnames},
  author={Rosenman, Evan T. R. and Olivella, Santiago and Imai, Kosuke},
  journal={Scientific Data},
  volume={10},
  pages={299},
  year={2023}
}

@article{leevelez2025,
  title={Measuring Descriptive Representation at Scale: Methods for Predicting the Race and Ethnicity of Public Officials},
  author={Lee, Diana Da In and Velez, Yamil Ricardo},
  journal={British Journal of Political Science},
  volume={55},
  pages={e110},
  year={2025}
}

@techreport{census2016surnames,
  title={Frequently Occurring Surnames from the 2010 Census},
  author={{U.S. Census Bureau}},
  year={2016},
  institution={U.S. Census Bureau},
  address={Washington, DC}
}

@article{himmelfarb1983surnames,
  title={Sampling by Ethnic Surnames: The Case of {A}merican {J}ews},
  author={Himmelfarb, Harold S. and Loar, R. Michael and Mott, Susan H.},
  journal={Public Opinion Quarterly},
  volume={47}, number={2}, pages={247--260}, year={1983}
}

@article{lauderdale2000asian,
  title={Asian {A}merican Ethnic Identification by Surname},
  author={Lauderdale, Diane S. and Kestenbaum, Bert},
  journal={Population Research and Policy Review},
  volume={19}, number={3}, pages={283--300}, year={2000}
}

@article{shah2010surname,
  title={Surname Lists to Identify {S}outh {A}sian and {C}hinese Ethnicity from Secondary Data in {O}ntario, {C}anada: A Validation Study},
  author={Shah, Baiju R. and Chiu, Maria and Amin, Shubarna and Ramani, Meena and Sadry, Shahriar and Tu, Jack V.},
  journal={BMC Medical Research Methodology},
  volume={10}, pages={42}, year={2010}
}

@unpublished{Dasanaike2026zeroshot,
  title={Pre-Trained Language Models as Zero-Shot Tools for Social Science Research},
  author={Dasanaike, Noah},
  year={2026},
  note={Working paper, Harvard University}
}

@article{chasalow2026sampling,
  title={Improving Minority Population Sampling with {BISG} Probabilities: Evidence from a Survey of {J}ewish {A}mericans},
  author={Chasalow, Kyla and Hersh, Eitan and Imai, Kosuke and Royden, Laura},
  journal={arXiv preprint arXiv:2605.05384},
  year={2026}
}

@inproceedings{reimers-gurevych-2020-making,
  title = {Making Monolingual Sentence Embeddings Multilingual using Knowledge Distillation},
  author = {Reimers, Nils and Gurevych, Iryna},
  booktitle = {Proceedings of the 2020 Conference on Empirical Methods in Natural Language Processing},
  year = {2020},
  publisher = {Association for Computational Linguistics}
}

@article{xin2026proxy,
  title = {How Proxy Race Distorts Regression-Based Fairness Audits},
  author = {Xin, Xi and Hooker, Giles and Huang, Fei},
  journal = {arXiv preprint arXiv:2603.17106},
  year = {2026},
}

\newpage 
\appendix

\setcounter{section}{0}
\setcounter{equation}{0}
\setcounter{figure}{0}
\setcounter{table}{0}
\renewcommand {\theequation} {A\arabic{equation}}
\renewcommand {\thefigure} {A\arabic{figure}}
\renewcommand {\thetable} {A\arabic{table}}
\renewcommand {\thesection} {A\arabic{section}}

\begin{center}
    {\bf \LARGE Supplementary Appendix}
\end{center}


\section{Bias caused by hard thresholding}\label{app-threshbias}

\subsection{Bias expression given in Section~\ref{sec-valueofprediction}}

Without loss of generality, we ignore conditioning on geography $\geo=\geolow$ below. A simple approach to imputing the missing group membership would be to set 
\begin{equation}
    \targ^*= \argmax_\targlow \Pr(\targ=\targlow\mid \sur=\surlow)
\end{equation}
and then estimate group means using
\begin{equation}
\frac{\sum_{i=1}^{N}\ind(Y_i=y)\ind(\targ_i^*=\targlow)}{\sum_{i=1}^{N}\ind(\targ_i^*=\targlow)}
\end{equation}
which converges as $N\rightarrow \infty$ to $\Pr(\outcome=\outcomelow\mid \targ^*=r)$. The question is how this compares to $\Pr(\outcome=\outcomelow\mid \targ=\targlow)$. For simplicity, we will suppose $\targ$ is binary, taking values of $0$ or $1$, and without loss of generality, we exclude conditioning on geography from this proof. Let $\mathcal{C}_0$ be the set of surnames that are classified as $0$ under this rule and $\mathcal{C}_1$ be the set classified as $1$. Note also that because $\targ^*$ is a function of $S$, $Y\indep \targ^*\mid R,S$, and define
\begin{equation}
    \mu_{y\mid r,s}:=\Pr(Y=y\mid R=r,S=s) \quad\quad \mu_{y\mid r}:=\Pr(Y=y\mid R=r)
\end{equation}
Then
\begin{align*}
    \Pr(\outcome=\outcomelow\mid \targ^*=1) 
    &= \sum_{\targlow=0}^{1}\sum_{\surlow}\Pr(\outcome=\outcomelow\mid \targ^*=1,\targ=\targlow,\sur=\surlow)\Pr(\targ=\targlow,\sur=\surlow\mid \targ^*=1)\\
    &=\sum_{\targlow=0}^{1}\sum_{\surlow}\mu_{y\mid r,s}\frac{\Pr(\targ^*=1\mid \sur=\surlow,\targ=\targlow)\Pr(\sur=\surlow\mid \targ=\targlow)\Pr(\targ=\targlow)}{\Pr(\targ^*=1)}\\
    &=\sum_{\targlow=0}^{1}\sum_{\surlow}\mu_{y\mid r,s}\ind(\surlow\in \mathcal{C}_1)\frac{\Pr(\sur=\surlow\mid \targ=\targlow)\Pr(\targ=\targlow)}{\Pr(\targ^*=1)}\\
    &=\sum_{\targlow=0}^{1}\sum_{\surlow\in \mathcal{C}_1}\mu_{y\mid r,s}\frac{\Pr(\sur=\surlow\mid \targ=\targlow)\Pr(\targ=\targlow)}{\Pr(\sur\in C_1)}
\end{align*}
Noting in the last line that $\{\targ^*=1\}\leftrightarrow \{\sur\in C_1\}$. Using this, we have
\small 
\begin{align*}
   &\Pr(\outcome=\outcomelow\mid \targ^*=1)-\Pr(\outcome=\outcomelow\mid \targ=1)\\
    &=\sum_{\targlow=0}^{1}\sum_{\surlow\in \mathcal{C}_1}\mu_{y\mid r,s}\frac{\Pr(\sur=\surlow\mid \targ=\targlow)\Pr(\targ=\targlow)}{\Pr(\sur\in \mathcal{C}_1)} - \sum_{\surlow}\mu_{y\mid 1,s}\Pr(\sur=\surlow\mid \targ=1)\\
    &=\sum_{\surlow\in \mathcal{C}_1}\mu_{y\mid 0,s}\frac{\Pr(\sur=\surlow\mid \targ=0)\Pr(\targ=0)}{\Pr(\sur\in \mathcal{C}_1)} \\
    &\quad \quad + \sum_{\surlow\in \mathcal{C}_1}\mu_{y\mid 1,s}\Pr(\sur=\surlow\mid \targ=1)\left[\frac{\Pr(\targ=1)}{\Pr(\sur\in \mathcal{C}_1)}-1\right] - \sum_{\surlow\in \mathcal{C}_0}\mu_{y\mid 1,s}\Pr(\sur=\surlow\mid \targ=1)\\
\end{align*}
\normalsize
If we add $\outcome\indep \sur\mid \targ$, this becomes
\begin{align*}
    &=\frac{\mu_{y\mid 0}}{\Pr(\sur\in \mathcal{C}_1)}\Pr(\sur\in \mathcal{C}_1,\targ=0)\\
    &\quad \quad +\mu_{y\mid 1} \Pr(\sur\in \mathcal{C}_1\mid \targ=1)\left[\frac{\Pr(\targ=1)}{\Pr(\sur\in \mathcal{C}_1)}-1\right] - \mu_{y\mid 1}\Pr(\sur\in \mathcal{C}_0\mid \targ=1)\\ 
    &=\frac{\mu_{y\mid 0}}{\Pr(\sur\in \mathcal{C}_1)}\Pr(\sur\in \mathcal{C}_1,\targ=0)\\
    &\quad \quad +\mu_{y\mid 1} \left[\Pr(\sur\in \mathcal{C}_1\mid \targ=1)\left[\frac{\Pr(\targ=1)}{\Pr(\sur\in \mathcal{C}_1)}-1\right] - \Pr(\sur\in \mathcal{C}_0\mid \targ=1)\right] \\
    &=\mu_{y\mid 0}\Pr(\targ=0\mid \sur\in\mathcal{C}_1) +\mu_{y\mid 1} \left[\Pr(\targ=1\mid \sur\in\mathcal{C}_1) -1\right] \\
    &=\mu_{y\mid 0}\Pr(\targ=0\mid \sur\in\mathcal{C}_1) -\mu_{y\mid 1} \left[\Pr(\targ=0\mid \sur\in\mathcal{C}_1) \right] \\
    &= [\mu_{y\mid 0}-\mu_{y\mid 1}]\Pr(\targ=0\mid \sur\in \mathcal{C}_1)
\end{align*}
If classification is perfect, $\Pr(\targ=0\mid \sur\in\mathcal{C}_1)=0$ and this is $0$. If $Y\indep \targ$ (no group disparity), then the left term becomes $\Pr(\outcome=\outcomelow)-\Pr(\outcome=\outcomelow)=0$. The calculation for $\Pr(\outcome=\outcomelow\mid \targ^*=0)-\Pr(\outcome=\outcomelow\mid \targ=0)$ is identical but flipped
\begin{align*}
    &\Pr(\outcome=\outcomelow\mid \targ^*=0)-\Pr(\outcome=\outcomelow\mid \targ=0)= [\mu_{y\mid 1}-\mu_{y\mid 0}]\Pr(\targ=1\mid \sur\in \mathcal{C}_0)
\end{align*}
\noindent Hence the asymptotic bias in the disparity is
\begin{align*}
&[\Pr(\outcome=\outcomelow\mid \targ^*=1)-\Pr(\outcome=\outcomelow\mid \targ^*=0)]-[\mu_{y\mid 1}-\mu_{y\mid 0}]\\
&= \left([\mu_{y\mid 0}-\mu_{y\mid 1}]\Pr(\targ=0\mid \sur\in \mathcal{C}_1)\right) - \left([\mu_{y\mid 1}-\mu_{y\mid 0}]\Pr(\targ=1\mid \sur\in \mathcal{C}_0)\right) \\ 
&=-[\mu_{y\mid 1}-\mu_{y\mid 0}]
[\Pr(\targ=1\mid \sur\in\mathcal{C}_0)+\Pr(\targ=0\mid \sur\in\mathcal{C}_1)]
\end{align*}

\subsection{Bias without the conditional independence assumption}

If we do not assume $Y\indep S\mid R$, then a more algebraically complicated calculation yields that the asymptotic bias in the disparity estimation is as follows:
\small
\begin{align*}
&\left(\Pr(\outcome=\outcomelow\mid \targ^*=1)-\Pr(\outcome=\outcomelow\mid \targ=1)\right)-\left(\Pr(\outcome=\outcomelow\mid \targ^*=0)-\Pr(\outcome=\outcomelow\mid \targ=0)\right)\\
    &=\sum_{\surlow\in \mathcal{C}_1}\mu_{y\mid 0,s}\frac{\Pr(\sur=\surlow\mid \targ=0)\Pr(\targ=0)}{\Pr(\sur\in \mathcal{C}_1)} \\
    &\quad \quad + \sum_{\surlow\in \mathcal{C}_1}\mu_{y\mid 1,s}\Pr(\sur=\surlow\mid \targ=1)\left[\frac{\Pr(\targ=1)}{\Pr(\sur\in \mathcal{C}_1)}-1\right]- \sum_{\surlow\in \mathcal{C}_0}\mu_{y\mid 1,s}\Pr(\sur=\surlow\mid \targ=1)\\
     &\quad\quad -\sum_{\surlow\in \mathcal{C}_0}\mu_{y\mid 1,s}\frac{\Pr(\sur=\surlow\mid \targ=1)\Pr(\targ=1)}{\Pr(\sur\in \mathcal{C}_0)} \\
    &\quad \quad - \sum_{\surlow\in \mathcal{C}_0}\mu_{y\mid 0,s}\Pr(\sur=\surlow\mid \targ=0)\left[\frac{\Pr(\targ=0)}{\Pr(\sur\in \mathcal{C}_0)}-1\right] + \sum_{\surlow\in \mathcal{C}_1}\mu_{y\mid 0,s}\Pr(\sur=\surlow\mid \targ=0) \\
    &= \left(\frac{\Pr(R=0)}{\Pr(S\in \mathcal{C}_1)} +1\right)\sum_{\surlow\in\mathcal{C}_1}\mu_{y\mid 0,s}\Pr(S=s\mid R=0)-\left(\frac{\Pr(R=1)}{\Pr(S\in \mathcal{C}_0)} +1\right)\sum_{s\in \mathcal{C}_0}\mu_{y\mid 1,s}\Pr(S=s\mid R=1)\\
    &+\left[\frac{\Pr(\targ=1)}{\Pr(\sur\in \mathcal{C}_1)}-1\right]\sum_{\surlow\in \mathcal{C}_1}\mu_{y\mid 1,s}\Pr(\sur=\surlow\mid \targ=1)-\left[\frac{\Pr(\targ=0)}{\Pr(\sur\in \mathcal{C}_0)}-1\right]\sum_{\surlow\in \mathcal{C}_0}\mu_{y\mid 0,s}\Pr(\sur=\surlow\mid \targ=0)
\end{align*}
\normalsize
Letting $\mu_{y\mid r,C_{r'}}=\Pr(Y=y\mid \targ=\targlow,\sur\in\mathcal{C}_{\targlow'})$ this is
\small
\begin{align*}
    &=\left(\frac{\Pr(R=0)}{\Pr(S\in \mathcal{C}_1)} +1\right)\mu_{y\mid 0,\mathcal{C}_1}\Pr(S\in \mathcal{C}_1\mid R=0)\\
    &\quad-\left(\frac{\Pr(R=1)}{\Pr(S\in \mathcal{C}_0)} +1\right)\mu_{y\mid 1,\mathcal{C}_0}\Pr(S\in \mathcal{C}_0\mid R=1)\\
    &+\left[\frac{\Pr(\targ=1)}{\Pr(\sur\in \mathcal{C}_1)}-1\right]\mu_{y\mid 1,\mathcal{C}_1}\Pr(S\in \mathcal{C}_1\mid R=1)\\
    &\quad-\left[\frac{\Pr(\targ=0)}{\Pr(\sur\in \mathcal{C}_0)}-1\right]\mu_{y\mid 0,\mathcal{C}_0}\Pr(S\in \mathcal{C}_0\mid R=0)\\
    &\\ 
    &=\mu_{y\mid 0,\mathcal{C}_1}
      \left[
      \Pr(R=0\mid S\in\mathcal{C}_1)
      +\Pr(S\in\mathcal{C}_1\mid R=0)
      \right]\\
    &\quad-\mu_{y\mid 1,\mathcal{C}_0}
      \left[
      \Pr(R=1\mid S\in\mathcal{C}_0)
      +\Pr(S\in\mathcal{C}_0\mid R=1)
      \right]\\
    &\quad+\mu_{y\mid 1,\mathcal{C}_1}
      \left[
     \Pr(R=1\mid S\in\mathcal{C}_1)
      -\Pr(S\in\mathcal{C}_1\mid R=1)
      \right]\\
    &\quad-\mu_{y\mid 0,\mathcal{C}_0}
      \left[
      \Pr(R=0\mid S\in\mathcal{C}_0)
      -\Pr(S\in\mathcal{C}_0\mid R=0)
      \right]\\ 
     & \\
    &=\mu_{y\mid 0,\mathcal{C}_1}
      \left[
      \Pr(R=0\mid S\in\mathcal{C}_1)
      +\Pr(S\in\mathcal{C}_1\mid R=0)
      \right]\\
    &\quad-\mu_{y\mid 1,\mathcal{C}_0}
      \left[
      \Pr(R=1\mid S\in\mathcal{C}_0)
      +\Pr(S\in\mathcal{C}_0\mid R=1)
      \right]\\
    &\quad+\mu_{y\mid 1,\mathcal{C}_1}
      \left[
      \Pr(S\in\mathcal{C}_0\mid R=1)
      -\Pr(R=0\mid S\in\mathcal{C}_1)
      \right]\\
    &\quad-\mu_{y\mid 0,\mathcal{C}_0}
      \left[
      \Pr(S\in\mathcal{C}_1\mid R=0)-\Pr(R=1\mid S\in\mathcal{C}_0)
      \right]\\ 
    &  \\
    &=\mu_{y\mid 0,\mathcal{C}_1}
      \Pr(R=0\mid S\in\mathcal{C}_1)
      -\mu_{y\mid 1,\mathcal{C}_1}
      \Pr(R=0\mid S\in\mathcal{C}_1)\\
    &\quad+\mu_{y\mid 0,\mathcal{C}_1}
      \Pr(S\in\mathcal{C}_1\mid R=0)
      -\mu_{y\mid 0,\mathcal{C}_0}
      \Pr(S\in\mathcal{C}_1\mid R=0)\\
    &\quad+\mu_{y\mid 1,\mathcal{C}_1}
      \Pr(S\in\mathcal{C}_0\mid R=1)
      -\mu_{y\mid 1,\mathcal{C}_0}
      \Pr(S\in\mathcal{C}_0\mid R=1)\\
    &\quad+\mu_{y\mid 0,\mathcal{C}_0}
      \Pr(R=1\mid S\in\mathcal{C}_0)
      -\mu_{y\mid 1,\mathcal{C}_0}
      \Pr(R=1\mid S\in\mathcal{C}_0)\\
\end{align*}
\normalsize
which gives a final expression
\begin{align*}
&= \Pr(R=0\mid S\in\mathcal{C}_1)
      \left[
      \mu_{y\mid 0,\mathcal{C}_1}
      -\mu_{y\mid 1,\mathcal{C}_1}
      \right] +\Pr(S\in\mathcal{C}_1\mid R=0)
      \left[
      \mu_{y\mid 0,\mathcal{C}_1}
      -\mu_{y\mid 0,\mathcal{C}_0}
      \right]\\
      &\quad +\Pr(S\in\mathcal{C}_0\mid R=1)
      \left[
      \mu_{y\mid 1,\mathcal{C}_1}
      -\mu_{y\mid 1,\mathcal{C}_0}
      \right] +\Pr(R=1\mid S\in\mathcal{C}_0)
      \left[
      \mu_{y\mid 0,\mathcal{C}_0}
      -\mu_{y\mid 1,\mathcal{C}_0}
      \right]
\end{align*}

\section{Using the list score directly}\label{app-rawf}

Using the Florida data, we show that the list score $f_{\targlow,\emblow} = \Pr(\listmem_{\targlow}{=}1\mid \emb=\emblow)$ cannot serve as the race probability on its own.  That is, the probability that a name is on the list does not equal the probability that a person with that name belongs to the group. This is because lists have imperfect precision -- some on the list are not in group $\targlow$ and some off it are in the group -- and because lists ignore base rates.

\begin{figure}[h!]\centering
\includegraphics[width=\textwidth]{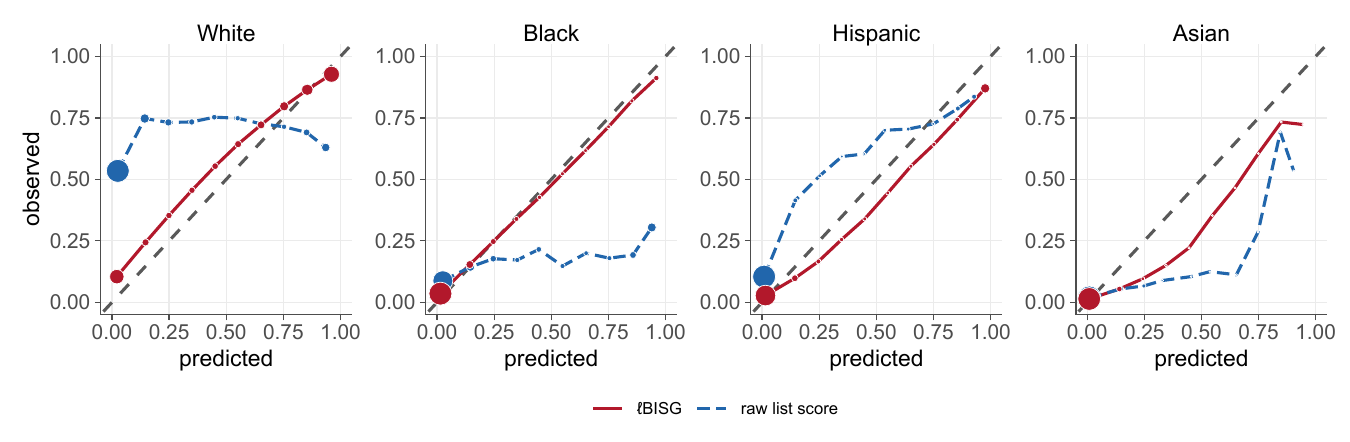}
\caption{Calibration of the raw LLM list score $f_\targlow = P(\listmem_{\targlow}{=}1\mid \emb)$ used directly as a race probability (blue, dashed) against $\ell$BISG (red, solid), by race, on the Florida registrants. The dashed diagonal indicates perfect agreement between estimated probabilities and observed proportions. The area of each circle is proportional to the number of registrants in the bin.}
\label{fig:rawf}
\end{figure}

Figure~\ref{fig:rawf} demonstrates that $f$ (blue dashed line) is not well calibrated. The failure is most dramatic for White, the majority group, whose raw scores substantially underpredict its prevalence. For comparison, Figure~\ref{fig:rawf} also includes the surname-only $\ell$BISG probabilities based on the LLM lists of Section~\ref{sec-val-voterfiles}, which adjust the raw scores using information gained from the geographic prevalence information.

\FloatBarrier

\section{Surname embeddings and the list score}\label{app-embed}

\begin{figure}[!t]\centering
\includegraphics[width=0.75\textwidth]{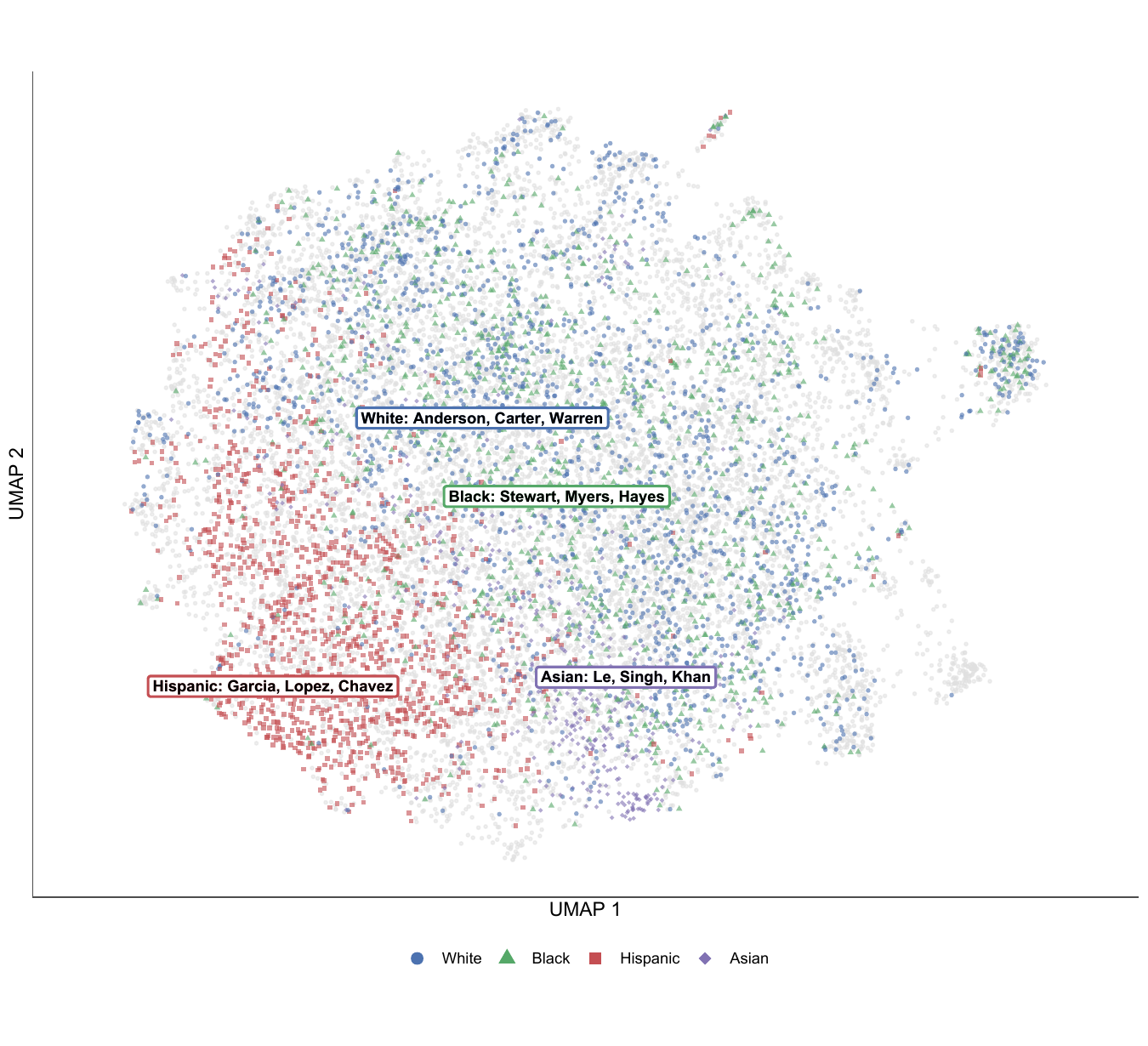}
\caption{Surname embeddings (E5), Florida. A two-dimensional UMAP projection colored by dominant group (i.e., $\arg\max_\targlow f_\targlow$), with one exemplar label per group.}
\label{fig:umap}
\end{figure}

The list score $f_{\targlow,\emblow} = \Pr(\listmem_{\targlow}{=}1\mid \emb=\emblow)$ is a smooth function of the name embedding. In Figure~\ref{fig:umap}, we project the Florida surname embeddings to two dimensions. We color each embedding point $\emblow$ by the dominant predicted group list membership (i.e., $\argmax_\targlow f_{\targlow,\emblow}$). We find that this partitions the space into interpretable regions which we label with a few familiar names for each group (Figure~\ref{fig:umap}). A name that belongs to no list still receives a score learned from other names in its embedding neighborhood.

\begin{figure}[t]\centering
\includegraphics[width=0.62\textwidth]{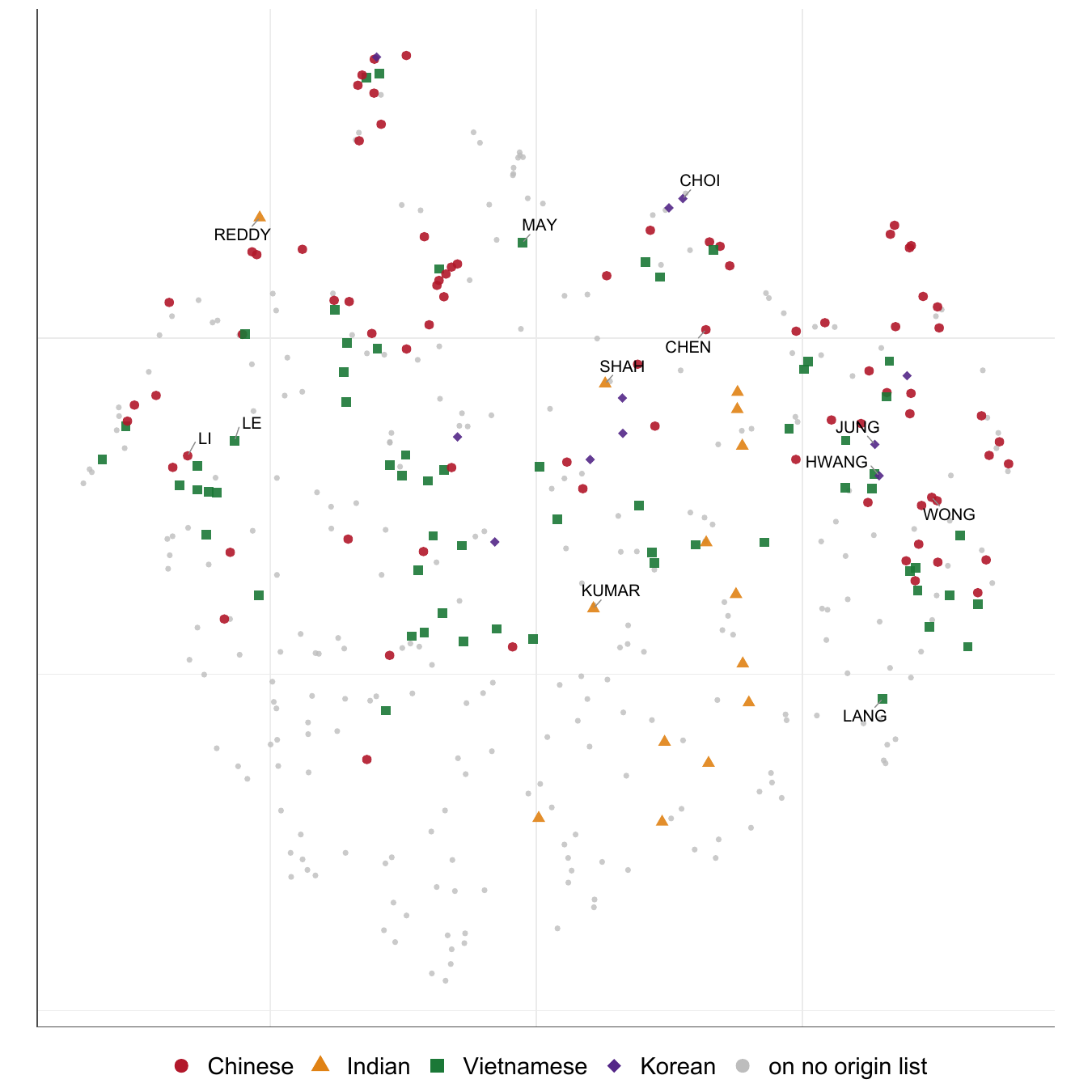}
\caption{The $428$ Asian-dominant Florida surnames, colored by country-of-origin list. Grey points are on no origin list.}
\label{fig:umap-asian}
\end{figure}

Figure~\ref{fig:umap-asian} highlights that within the Asian group, Indian surnames are located apart from East and Southeast Asian ones in the embedding space, while Chinese, Vietnamese, and Korean surnames overlap. This illustrates the value of setting the number of clusters in the $K$-means clustering algorithm to a value larger than $\dimtarg$ -- we may need to pick up multiple Asian clusters.

\FloatBarrier

\section{Empirical results for the choice of the number of clusters}\label{app-selectM}

Figure~\ref{fig:selectM-criteria} reports $Q(K)$ of equation~\eqref{e-georatio} for the surname and first-name models in the one-million-record voter file samples. All four surname models and both LLM first-name models select $K=7{,}500$, and both Census first-name models select $K=5{,}000$.

\begin{figure}[t]\centering
\includegraphics[width=\textwidth]{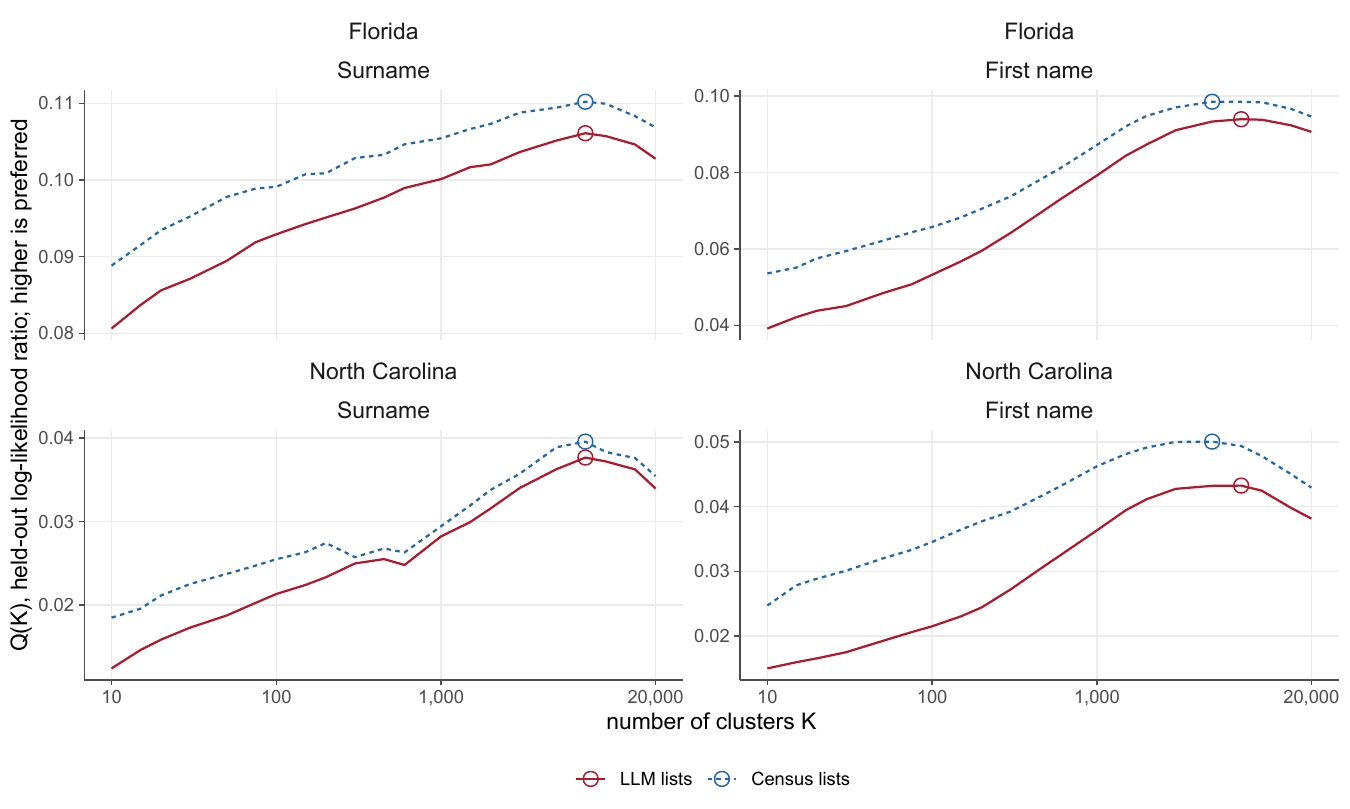}
\caption{The label-free criterion $Q(K)$ for choosing the number of clusters on the one million record Florida and North Carolina samples. Panels facet state and name field; colors distinguish LLM and Census lists. Higher values are preferred. Open circles mark the selected value of $K$. The candidate grid ranges from $10$ to $20{,}000$.}
\label{fig:selectM-criteria}
\end{figure}

Figure~\ref{fig:selectK-truth} compares $Q(K)$ with the ground truth. We use the Florida surname model with the LLM lists and draw random subsamples of $100{,}000$ and $300{,}000$ registrants from the one-million-record sample, which we also show. The ground truth is the PR-AUC and the Brier score of the resulting $\ell$BISG probabilities against self-reported race, averaged over the four groups. At every sample size, $Q(K)$ follows the ground truth, selecting $K=150$, $K=600$, and $K=7{,}500$, where the PR-AUC is respectively highest at $K=100$, $K=450$, and $K=5{,}000$ and the Brier score is lowest at the selected values. The selected value of $K$ is greater in larger samples because more clusters can be estimated reliably from more observations. 

\begin{figure}[!t]\centering
\includegraphics[width=\textwidth]{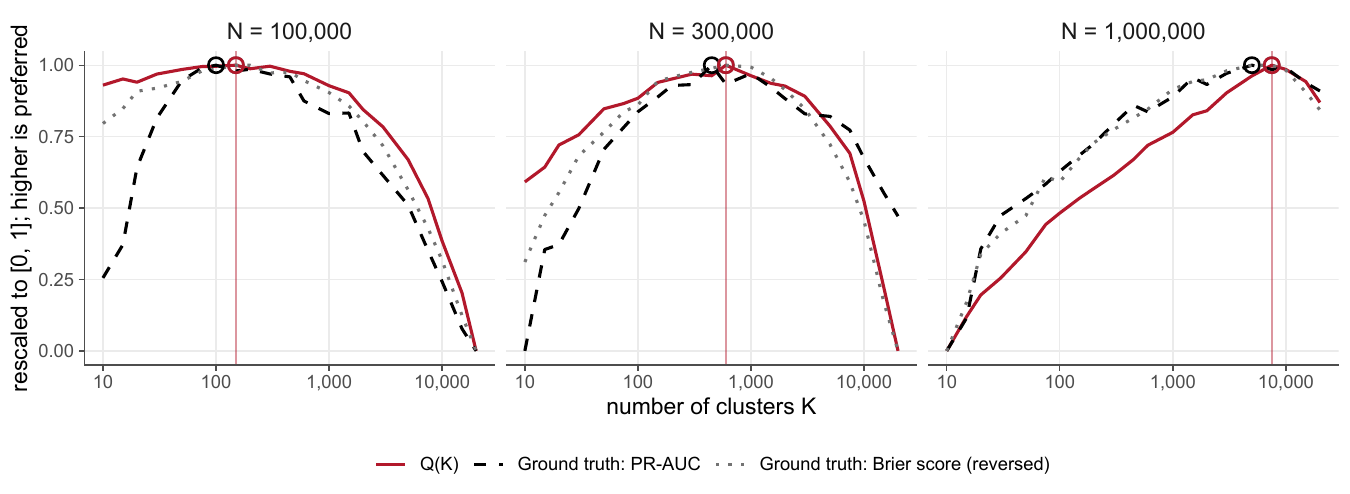}
\caption{The criterion $Q(K)$ of equation~\eqref{e-georatio} and the ground truth criterion for choosing the number of clusters $K$, by sample size, for the Florida surname model with the LLM lists. The ground truth is the PR-AUC (dashed) and the Brier score (dotted) against self-reported race, averaged over the four groups. The Brier score is reversed so that higher values are preferred for every curve. Each curve is rescaled to the unit interval within its panel, open circles denote the largest value of each curve, and the vertical line marks the selected $K$.}
\label{fig:selectK-truth}
\end{figure}

\section{Comparison with K-means on embeddings}
\label{app:cluster-embeddings}

\begin{figure}[!b]\centering
\includegraphics[width=\textwidth]{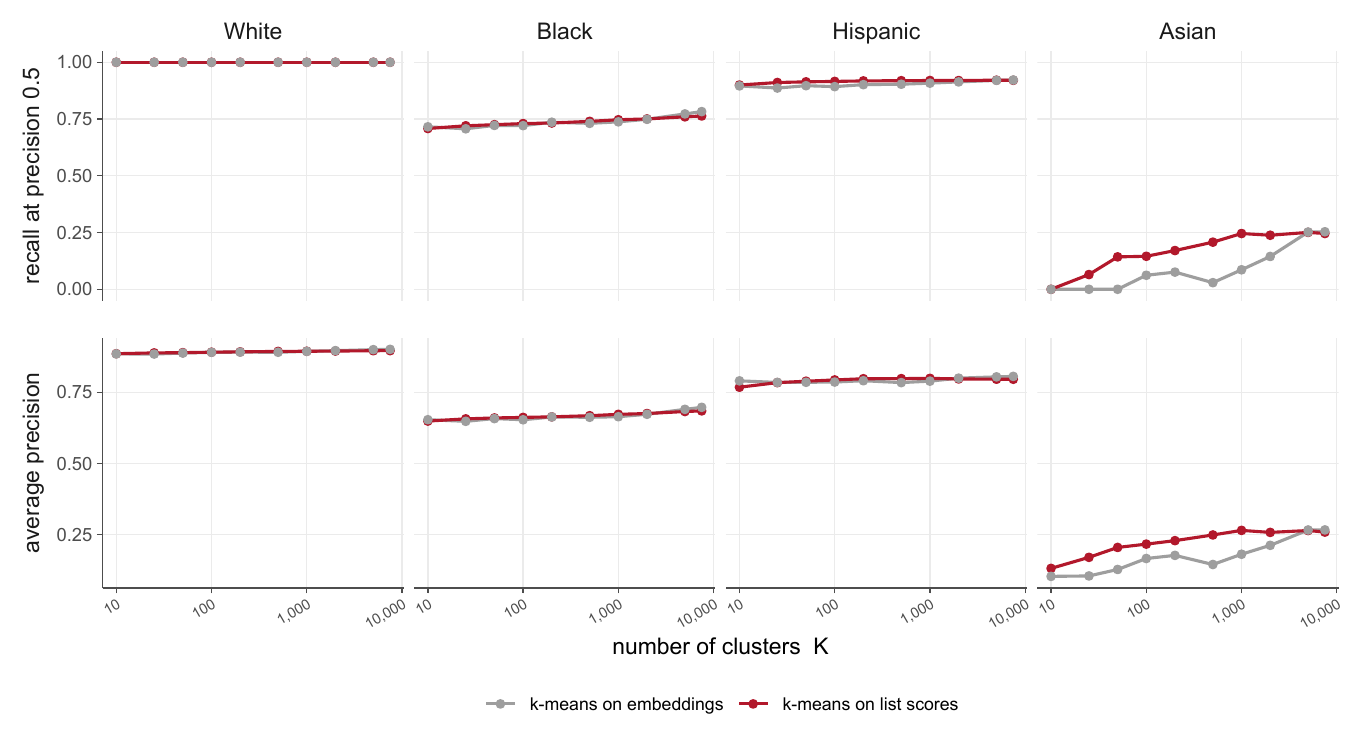}
\caption{Discrimination when the clusters are formed by K-means on the raw E5 surname embeddings (no lists) against K-means on the list scores. Florida sample with the block-level prior, by number of clusters $K$. Top row: recall at a precision of $0.5$. Bottom row: average precision.}
\label{fig:kmeans-emb-lists}
\end{figure}

Figure~\ref{fig:kmeans-emb-lists} compares the recall at a precision of $0.5$ and the average precision of $\ell$BISG under two approaches to the dimension reduction described in Section \ref{sec-dimreduction}. Our proposed approach is K-means clustering on the vectors of log-odds of list scores $f_{e}=(f_{1,e},...,f_{\dimtarg,e})$. We compare this to running K-means clustering on the embeddings directly, which relies more heavily on the embeddings being clustered by group membership even if they come from an embedding model trained for other tasks.

In the Florida example, clustering the list scores yields more informative clusters than clustering the raw embeddings directly for the rare Asian group when the number of clusters is moderate. At $K=500$, the average precision for the Asian group is $0.25$ using the list scores compared to $0.15$ with the raw embeddings. The recall for each approach at a precision of $0.5$ is $0.21$ compared to $0.03$, respectively (Figure~\ref{fig:kmeans-emb-lists}). For the other three groups, the two approaches perform similarly at every selected $K$. At the $K=7{,}500$ selected by Algorithm~\ref{alg-selectK}, both approaches perform similarly for every group. Differential performance under list scores and raw embeddings therefore appears to arise primarily when there are too few observations to estimate many clusters.

\begin{figure}[!t]\centering
\includegraphics[width=\textwidth]{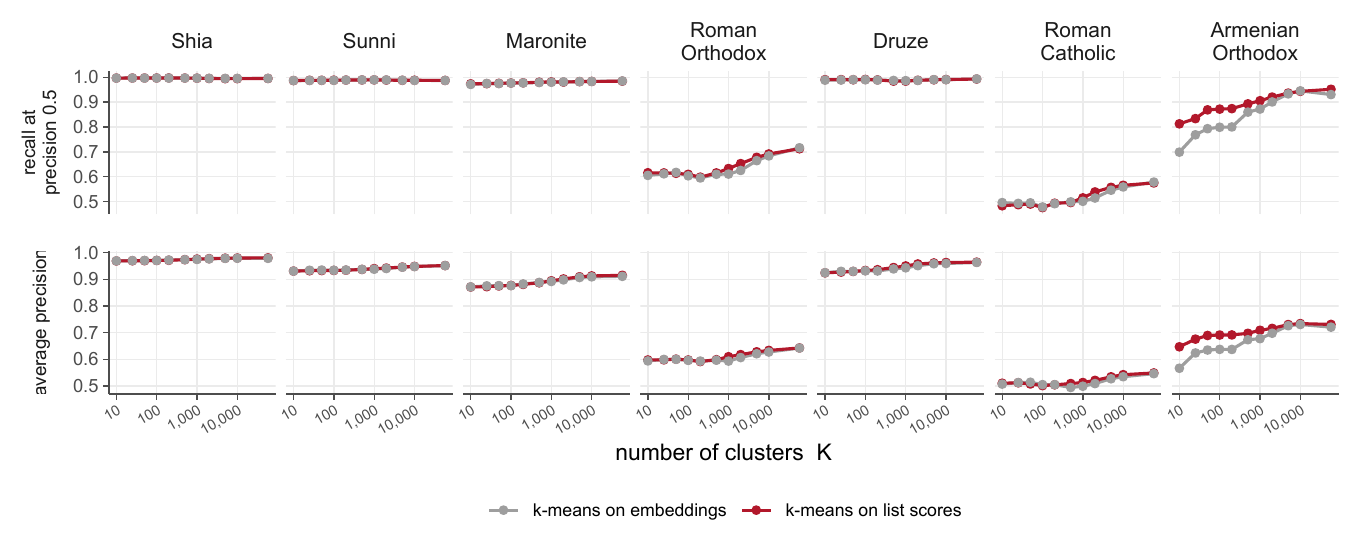}
\caption{As Figure~\ref{fig:kmeans-emb-lists}, for the seven principal Lebanese sects, with the multilingual surname embeddings of Section~\ref{sec-val-lebanon} and the locality-level prior.}
\label{fig:kmeans-emb-lists-lebanon}
\end{figure}

In the Lebanese voter file (Figure~\ref{fig:kmeans-emb-lists-lebanon}), clustering the list scores rather than the raw embeddings directly yields more informative clusters for the Armenian Orthodox, the smallest sect, when the number of clusters is small or moderate. At $K=100$, the average precision for the Armenian Orthodox is $0.69$ using the list scores compared to $0.64$ with the raw embeddings, and the recall at a precision of $0.5$ is $0.87$ compared to $0.80$. For the other six sects, the two approaches perform similarly, and at the $K=57{,}567$ selected by Algorithm~\ref{alg-selectK}, they perform similarly for every sect.

\FloatBarrier

\section{Details for recovery of subgroup geographic probabilities}\label{app-geography-recovery}

In this appendix, we provide details of recovering $\Pr(\targ=\targlow \mid \geo=\geolow)$ as discussed in Section~\ref{sec-recovergeography}. We maintain Assumption~\ref{a-nolistsel} throughout. Assumption~\ref{a-groupsep} is represented by the $\lambda=0$ case of our sensitivity analysis. Sections~\ref{app-geoprevsensitivityresults}~and~\ref{app-subgroup-geoprevsensitivityresults} provide a more detailed description of the systems of equations we solve and the sensitivity analysis results. Section~\ref{app-nonnegative-ls} describes implementation details, while Section~\ref{app-sensitivityproofs} provides proofs.

\subsection{Recovery and sensitivity analysis for geographic prevalence}\label{app-geoprevsensitivityresults}

We start with the regime where $\Pr(\targ=\targlow\mid \geo)$ are entirely unavailable. Stacking main text equations \ref{eq-geo-recover} and \ref{eq-sumto1} over geographies, we can write the system of constraints as
\begin{align}
   \mmat \Pimat &= \betamat \label{eq-georecov-matrixform1}\\
   \mmat \bm{1}_{\dimtarg} &= \bm{1}_{\dimgeo} \label{eq-georecov-matrixform2}
\end{align}
 where $\betamat$ is the full rank (Assumption \ref{a-betarank}) $\dimgeo\times\dimtarg$ matrix of list rates $\Pr(\listmem_{\targlow}=1\mid\geo)$, $\Pimat$ is the $\dimtarg\times\dimtarg$ matrix of within-group list rates $\Pr(\listmem_{\targlow'}=1\mid \targ=\targlow)$, and $\mmat$ is the $\dimgeo\times\dimtarg$ matrix of geographic prevalence $\Pr(\targ=\targlow\mid \geo)$. The matrix $\betamat$ is directly identified from the data but $\Pimat$ and $\mmat$ are unknown. We wish to solve for $\mmat$ but cannot do so without further constraints on $\Pimat$.

As discussed in the main text, one way to simplify $\Pimat$ is Assumption \ref{a-groupsep}, which imposes that $\Pimat$ is diagonal. We now suppose that violations of group separation \ref{a-groupsep} are allowed but are uniform in nature so that for any $\targlow\neq\targlow'$,
\begin{equation}
    \Pr(\listmem_{\targlow'}=1\mid\targ=\targlow) = \lambda 
\end{equation}
for some $\lambda \in [0,1]$. Thus $\lambda$ captures the degree of ``error'' or lack of separation among the lists: if the lists perfectly separate the groups, then $\lambda=0$ (Assumption \ref{a-groupsep}). We leave $\Pr(\listmem_{\targlow}=1\mid \targ=\targlow)$ unrestricted.

Formally, we assume:
\begin{assumption}[Homogeneous non-separation]\label{a-homog-nonsep}
  Let $\Pimat$ be the $\dimtarg\times \dimtarg$ matrix of $\pi_{\targlow',\targlow}=\Pr(\listmem_{\targlow'}=1\mid \targ=\targlow)$ with diagonal $\pi_{\targlow}=\pi_{\targlow,\targlow}$. Assume:
\begin{enumerate}
    \item $\Pimat$ is invertible
    \item $\Pimat = \bm{\Omega} + \lambda \bm{1}_{\dimtarg}\bm{1}_{\dimtarg}^\top$ where $\bm{\Omega}=\text{diag}(\omega_1,...,\omega_{\dimtarg})$ with $\omega_\targlow=\pi_\targlow-\lambda$ and  $\lambda$ satisfying $\lambda < \min_{\targlow}(\pi_{\targlow})$ so that $\omega_{\targlow}>0$ for all $\targlow$.
\end{enumerate}
\end{assumption}

\noindent Strictly speaking, the requirement on $\lambda$ could be relaxed to requiring $1+\sum_{\targlow}\frac{1}{\omega_{\targlow}}\neq 0$ and $\omega_{\targlow}\neq 0$ for all $\targlow$, but the above restriction will often be reasonable as we think of $\lambda$ as representing a kind of error rate that we hope would not be so large as to be larger than the rate of non-errors $\pi_{\targlow}$. In Section \ref{app-sensitivityproofs} we prove that under this assumption, we can solve for $\pi_{\targlow}$ given any known $\lambda$ to get:
\begin{equation}\label{app-eq-georecovery-sensitivityresult}
\pi_{\targlow} =  \lambda + \frac{1-\lambda \bm{1}_{\dimtarg}^\top (\betamat^\top\betamat)^{-1}\betamat^\top\bm{1}_{\dimgeo} }{\left(\betamat^\top\betamat)^{-1}\betamat^\top\bm{1}_{\dimgeo}\right)_{\targlow}}
\end{equation}
Here we assume that $\left(\betamat^\top\betamat)^{-1}\betamat^\top\bm{1}_{\dimgeo}\right)_{\targlow}\neq 0$. If all assumptions used to derive this expression, this must hold since $\pi_r$ is a probability in $[0,1]$. However, in practice, especially if some assumptions hold imperfectly, one should check that the denominator is not $0$ -- if it is, this indicates an assumption violation. The same is true for similar expressions below.

By setting $\lambda=0$ and calculating $\Pr(L_{\targlow}=1\mid \geo)/\pi_{\targlow}$, we obtain our main estimate of $\Pr(R=1\mid \geo)$ from main text equation \ref{eq-georecoverysolution}.

\subsection{Recovery and sensitivity analysis given coarse group geographic prevalence}\label{app-subgroup-geoprevsensitivityresults}

Next, we consider the regime where coarse group prevalence $\Pr(\coarsegrp=\coarsegrplow\mid \geo)$ are available but $\Pr(\targ = \targlow\mid \geo)$ are not. Stacking main text equation~\eqref{eq-subgeo-recover} over geographies and groups gives the following matrix form, 
\begin{equation}\label{app-matrixform-georec}
\betamat = \bm{\Theta}\,\Pimat,   
\end{equation}
where $\betamat$ is the $\dimgeo\times\dimtarg$ matrix of list rates $\Pr(\listmem_{\targlow}=1\mid\geo)$, $\bm{\Theta}$ is the $\dimgeo\times\dimtarg$ matrix of subgroup geography proportions $\Pr(\targ=\targlow\mid\geo)$, and $\Pimat$ is the $\dimtarg\times\dimtarg$ matrix of within-subgroup list rates $\pi_{\targlow',\targlow}=\Pr(\listmem_{\targlow'}=1\mid\targ=\targlow)$. We also index the $\targ$ groups and order each of these matrices so that $\targ=\targlow$ levels belonging to the same $\coarsegrp=\coarsegrplow$ coarse group are arranged in blocks.

Unlike in assumption \ref{a-georank}, we now
let $\mmat$ be the the $\dimgeo\times\dimcoarsegrp$ matrix of \textit{coarse} group probabilities $\Pr(\coarsegrp=\coarsegrplow\mid \geo=\geolow)$. We also let $\bm{U}$ be a $\dimtarg\times \dimcoarsegrp$ block-diagonal indicator matrix with, in each column $\coarsegrplow$, a vector $\bm{1}_{\dimtarg(\coarsegrplow)} $ padded by $0$'s (where $\bm{1}_d$ denotes a $d$-dimensional vector of ones) that encodes the $\targlow$ values such that $\coarsegrplow(\targlow)=\coarsegrplow$. Then, main text equation \eqref{eq-subsumtogroup} can be written as,
\begin{equation}\label{eq-subsumtogroupmatversion}
  \bm{\Theta}\bm{U} = \mmat.
\end{equation}

Under our set-up, both $\betamat$ and $\mmat$ are identifiable and full rank by Assumptions \ref{a-georank} (applied to coarse group $M$) and \ref{a-betarank}.  Our goal is to recover $\bm{\Theta}$, but the key complication is that $\Pimat$ is also unknown. We again relax Assumption \ref{a-groupsep} with Assumption \ref{a-homog-nonsep}. Note that for now, we makes no distinction between the off-diagonal entries for $\targlow,\targlow'$ in the same coarse group or different coarse groups.

Under these assumptions, we prove in appendix \ref{app-sensitivityproofs} that
\begin{align}
\pi_\targlow & = \lambda + \frac{\left(  1- \lambda \bm{1}_{\dimtarg} ^\top (\betamat^\top\betamat)^{-1}\betamat^\top\mmat \bm{1}_{\dimcoarsegrp} \right)}{((\betamat^\top\betamat)^{-1}\betamat^\top \mmat \bm{1}_{\dimcoarsegrp})_{\targlow} }  \label{eq-pi-result}
\end{align}
where if $\lambda =0$, this gives
\begin{equation}
    \pi_\targlow = \frac{1}{((\betamat^\top\betamat)^{-1}\betamat^\top \mmat \bm{1}_{\dimcoarsegrp})_{\targlow}}
\end{equation}

Alternatively, we can relax Assumption \ref{a-homog-nonsep} to allow $\bm{\Pi}$ to have different cross-rates in different coarse groups $\coarsegrplow$. Specifically, suppose we relax part 2 of this assumption to
\begin{equation*}
    \Pr(\listmem_{\targlow} = 1\mid \targ=\targlow) =\left\{\begin{array}{cc}
      \lambda   & \text{if } \targlow\neq \targlow' \text{ and } \coarsegrplow(\targlow)=\coarsegrplow(\targlow') \\
      \gamma   &  \text{if } \targlow\neq \targlow' \text{ and } \coarsegrplow(\targlow)\neq \coarsegrplow(\targlow')
    \end{array}\right.
\end{equation*}

This says that the chance of a member of group $\targlow$ having their name on the list of another group within the same coarser group is $\lambda$ while the chance of their name being on a list for a group in a different coarser group is $\gamma$. Usually, we would expect $\gamma < \lambda$. We might even be willing to assume $\gamma = 0$ so that the coarse groups are separated even if the subgroups are not. 

In matrix form, this new $\Pimat$ can be written
\begin{equation*}
\Pimat = \bm{\Omega}+\bm{U}\bm{C}\bm{U}^\top
\end{equation*}
\noindent where again, $\bm{\Omega}$ is the diagonal matrix of $\pi_\targlow-\lambda$ and
\begin{equation}
    \bm{C} = (\lambda-\gamma) I_{\dimcoarsegrp}  + \gamma \bm{1}_{\dimcoarsegrp}\bm{1}_{\dimcoarsegrp}^\top
\end{equation}

Under this specification of $\Pimat$, we prove two results. First, if $\gamma=0$, then $\Pimat$ has a block diagonal form and all relevant equations are completely separable by $\coarsegrplow$. We then simply have equation \eqref{eq-pi-result} within each $\coarsegrplow$ group. That is, for subgroup $\targlow$ from course group $\coarsegrplow$,
\begin{align}
\pi_\targlow & = \lambda + \frac{\left(  1- \lambda \bm{1}_{\dimtarg(\coarsegrplow)} ^\top (\betamat_{\coarsegrplow}^\top\betamat_{\coarsegrplow})^{-1}\betamat_{\coarsegrplow}^\top\mmat_{\coarsegrplow} \right)}{((\betamat_{\coarsegrplow}^\top\betamat_{\coarsegrplow})^{-1}\betamat_{\coarsegrplow}^\top \mmat_{\coarsegrplow})_{\targlow} }  \label{eq-pi-result-blocked}
\end{align}
\noindent where $\betamat_\coarsegrplow$ is the $\dimgeo\times \dimtarg(\coarsegrplow)$ matrix of $\Pr(\listmem_{\targlow}=1\mid \geo)$ for $\targlow:\coarsegrplow(\targlow)=\coarsegrplow$; where $\bm{M}_\coarsegrplow$ is the vector of $\Pr(\coarsegrp=\coarsegrplow\mid \geo=\geolow)$ over $g$; and where we are solving for elements of $\bm{\Pi}_\coarsegrplow$, the $\dimtarg(\coarsegrplow) \times \dimtarg(\coarsegrplow)$ matrix of $\Pr(\listmem_{\targlow'}=1\mid \targ=\targlow)$ for $\targlow,\targlow'$ in the same coarse group. When also $\lambda=0$, this gives equation \eqref{eq-subgroup-geo-recov} in the main text. More generally, for $\gamma >0$ and $\gamma\neq \lambda$, 
\begin{equation}
   \pi_\targlow = \lambda + \frac{\left(\bm{1}_{\dimcoarsegrp} - \bm{C}\bm{U}^\top(\betamat^\top\betamat)^{-1}\betamat^\top\mmat\bm{1}_{\dimcoarsegrp}\right)_{\coarsegrplow}}{\left((\betamat^\top\betamat)^{-1}\betamat^\top\mmat\bm{1}_{\dimcoarsegrp}\right)_{\targlow}}\label{eq-generalized-sensitivity}
\end{equation} 
Note that the $\gamma=\lambda>0$ case is equation \eqref{eq-pi-result}.

\subsection{Implementation details under $\lambda=0$}
\label{app-nonnegative-ls}

To estimate $\Pr(\targ=\targlow\mid \geo=\geolow)$ in practice, we use non-negative least squares fit separately for each coarse group (as discussed above, the $\gamma=0$ case allows us to partition $\Pimat$ into a block for each coarse group $\coarsegrplow$).
Let $\rho_\targlow(\geolow)=\Pr(\listmem_\targlow = 1 \mid \geo = \geolow)$ and recall that $\pi_\targlow=\Pr(\listmem_\targlow=1\mid \targ=\targlow)$. Under $\lambda=0$, we have $\rho_\targlow(\geolow)=\pi_\targlow\Pr(\targ=\targlow\mid \geo=\geolow)$. Since the subgroups partition the coarse group, it follows that
\begin{equation}   \sum_{\targlow:\coarsegrplow(\targlow)=\coarsegrplow}\frac{\rho_\targlow(\geolow)}{\pi_\targlow} = \Pr(\coarsegrp=\coarsegrplow\mid \geo=\geolow).
\end{equation}
This equation is linear in the inverse coverages $u_\targlow := 1/\pi_\targlow$. For each $\coarsegrplow$, we therefore solve
\begin{equation*}
\hat{\bm{u}} \;=\; \arg\min_{\bm{u}\,\ge\,0}\; \sum_{\geolow} w_{\geolow} \Big( \sum_{\targlow:\coarsegrplow(\targlow)=\coarsegrplow} u_{\targlow} \rho_\targlow(\geolow) - \Pr(\coarsegrp=\coarsegrplow\mid \geo=\geolow) \Big)^{2},
\end{equation*}
where $\bm{u}\geq 0$ is a shorthand for all its elements being non-negative. We weight each geography by its expected number of coarse group members, $w_\geolow = n_\geolow \Pr(\coarsegrp=\coarsegrplow\mid \geo=\geolow)$, where $n_\geolow$ is the number of units in geography $\geolow$. This weight requires only the known coarse geographic distribution and not the unobserved $\targ_i$. The estimated coverage of list $\targlow$ is $\hat{\pi}_\targlow = 1/\hat{u}_\targlow$, meaning we could estimate $\Pr(\targ=\targlow\mid \geo)$ using the following initial estimator,
\begin{equation}
\widehat{\Pr}^{(ini)}(\targ=\targlow\mid \geo) = \hat{u}_\targlow\rho_\targlow(\geolow)=\rho_\targlow(\geolow)/\hat{\pi}_\targlow
\end{equation}
However, while their theoretical versions must sum to $\Pr(\coarsegrp=\coarsegrplow\mid \geo)$, these estimates need not sum to the known coarse group share. To impose this constraint, our final estimates are
\begin{equation}
  \widehat{\Pr}(\targ=\targlow\mid \geo) = \Pr(\coarsegrp=\coarsegrplow\mid \geo=\geolow)\,\frac{\widehat{\Pr}^{(ini)}(\targ=\targlow\mid \geo)}{\sum_{\targlow':\coarsegrplow(\targlow')=\coarsegrplow}\hat{u}_{\targlow'}\widehat{\Pr}^{(1)}(\targ=\targlow'\mid \geo)},
\end{equation}
Because the fit is constrained to be non-negative, a subgroup whose geographic prevalence cannot be identified by the data is equal to zero.

\subsection{Proofs}\label{app-sensitivityproofs}

\textit{AI disclosure: the following proofs were written by hand but copied into ChatGPT 5.6 Luna Medium with a prompt to check for any logical errors or typos. Any changes were then implemented manually. The model also caught one larger algebraic error in the initial logic that required re-working the proof a bit, and the model was used to generate some ideas for useful algebraic simplifications.}

\medskip 
\noindent We start with a useful Lemma.
\begin{lemma}[]\label{lem-piinverse}
If $\Pimat = \bm{\Omega}+\lambda \bm{1}_{\dimtarg}\bm{1}_{\dimtarg}^\top$ for $\dimtarg\times\dimtarg$ invertible diagonal matrix $\bm{\Omega}$ with diagonal entries $\omega_{\targlow}$ and for $\lambda\in \mathbb{R}$ such that $1+\lambda \sum_{r=1}^{\dimtarg}\frac{1}{\omega_{\targlow}}\neq 0$, Then 
\begin{align*}
(\bm{\Omega}+\lambda \bm{1}_{\dimtarg}\bm{1}_{\dimtarg}^\top)^{-1}
\bm{1}_{\dimtarg} &=
\frac{\bm{\Omega}^{-1}\bm{1}_{\dimtarg}}{
1+\lambda\displaystyle\sum_{\targlow=1}^{\dimtarg}
\frac{1}{\omega_{\targlow}}}.
\end{align*}
\end{lemma}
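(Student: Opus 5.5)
The plan is to verify the claimed vector directly rather than computing the full inverse. Set $\bm{v} := \bm{\Omega}^{-1}\bm{1}_{\dimtarg}/(1+\lambda s)$, where $s := \sum_{\targlow}1/\omega_{\targlow} = \bm{1}_{\dimtarg}^\top\bm{\Omega}^{-1}\bm{1}_{\dimtarg}$. We show that $\Pimat\bm{v}=\bm{1}_{\dimtarg}$. Since $\Pimat$ is invertible, this forces $\bm{v}=\Pimat^{-1}\bm{1}_{\dimtarg}$, which is the claim.

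The computation goes as follows. First,
\begin{equation*}
(\bm{\Omega}+\lambda\bm{1}_{\dimtarg}\bm{1}_{\dimtarg}^\top)\bm{\Omega}^{-1}\bm{1}_{\dimtarg} = \bm{1}_{\dimtarg} + \lambda\bm{1}_{\dimtarg}(\bm{1}_{\dimtarg}^\top\bm{\Omega}^{-1}\bm{1}_{\dimtarg}) = (1+\lambda s)\bm{1}_{\dimtarg}.
\end{equation*}
Dividing by the nonzero scalar $1+\lambda s$, which is nonzero by hypothesis, gives $\Pimat\bm{v}=\bm{1}_{\dimtarg}$.

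One point needs care: invertibility of $\Pimat$. If it is not taken as given, it follows from the matrix determinant lemma, $\det\Pimat=\det\bm{\Omega}\,(1+\lambda s)\neq 0$. Both factors are nonzero under the stated hypotheses, namely that $\bm{\Omega}$ is invertible and $1+\lambda s\neq 0$.

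Alternatively, one can invoke the Sherman--Morrison formula,
\begin{equation*}
\Pimat^{-1}=\bm{\Omega}^{-1}-\frac{\lambda\,\bm{\Omega}^{-1}\bm{1}_{\dimtarg}\bm{1}_{\dimtarg}^\top\bm{\Omega}^{-1}}{1+\lambda s},
\end{equation*}
and multiply by $\bm{1}_{\dimtarg}$. This gives $\bm{\Omega}^{-1}\bm{1}_{\dimtarg}\bigl(1-\lambda s/(1+\lambda s)\bigr)$, which simplifies to the same expression. There is no real obstacle: the only step needing attention is confirming that the denominator condition guarantees invertibility.
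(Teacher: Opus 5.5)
Your proof is correct. The paper's proof is exactly your ``alternative'': it writes down the Sherman--Morrison inverse of $\bm{\Omega}+\lambda\bm{1}_{\dimtarg}\bm{1}_{\dimtarg}^\top$, multiplies by $\bm{1}_{\dimtarg}$, and simplifies $1-\lambda s/(1+\lambda s)$ to $1/(1+\lambda s)$, where $s=\sum_{\targlow}1/\omega_{\targlow}$.

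Your primary route is more elementary. A single matrix--vector product shows that the candidate $\bm{\Omega}^{-1}\bm{1}_{\dimtarg}/(1+\lambda s)$ solves $\Pimat\bm{v}=\bm{1}_{\dimtarg}$, and the matrix determinant lemma supplies the uniqueness. Since only the action of $\Pimat^{-1}$ on $\bm{1}_{\dimtarg}$ is needed, this avoids computing the full inverse. It also makes explicit the invertibility of $\Pimat$, which the paper leaves implicit in the Sherman--Morrison hypothesis; given invertible $\bm{\Omega}$, that hypothesis holds exactly when $1+\lambda s\neq 0$.

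The paper's route yields the full inverse formula. However, the later proofs in the appendix use only $\Pimat^{-1}\bm{1}_{\dimtarg}$, so your lighter argument suffices for everything the lemma is used for.
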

\begin{proof}
By the Sherman-Morrison formula
\begin{align*}
(\bm{\Omega}+\lambda \bm{1}_{\dimtarg}\bm{1}_{\dimtarg}^\top)^{-1}
&=\bm{\Omega}^{-1}-\frac{\lambda\,\bm{\Omega}^{-1}\bm{1}_{\dimtarg}\bm{1}_{\dimtarg}^\top
\bm{\Omega}^{-1}
}{1+\lambda\bm{1}_{\dimtarg}^\top
\bm{\Omega}^{-1}\bm{1}_{\dimtarg}
}
\end{align*}
where $\bm{1}_{\dimtarg}^\top\bm{\Omega}^{-1}\bm{1}_{\dimtarg}=\sum_{\targlow}\frac{1}{\omega_{\targlow}}$. This then implies
\begin{align*}
(\bm{\Omega}+\lambda \bm{1}_{\dimtarg}\bm{1}_{\dimtarg}^\top)^{-1}\bm{1}_{\dimtarg}
&=\bm{\Omega}^{-1}\bm{1}_{\dimtarg}-\frac{\lambda\bm{\Omega}^{-1}\bm{1}_{\dimtarg}\left(\sum_{\targlow}\frac{1}{\omega_{\targlow}}\right)
}{1+\lambda \left(\sum_{\targlow}\frac{1}{\omega_{\targlow}}\right)
}\\
&=\bm{\Omega}^{-1}\bm{1}_{\dimtarg}\left(1-\frac{\lambda\left(\sum_{\targlow}\frac{1}{\omega_{\targlow}}\right)
}{1+\lambda \left(\sum_{\targlow}\frac{1}{\omega_{\targlow}}\right)
}\right)\\ 
&=\bm{\Omega}^{-1}\bm{1}_{\dimtarg}\left(\frac{1}{1+\lambda \left(\sum_{\targlow}\frac{1}{\omega_{\targlow}}\right)
}\right)
\end{align*}
\end{proof}

\hrule 

\begin{proof}[\textbf{Proof of equation \eqref{app-eq-georecovery-sensitivityresult}}]
Start $\mmat \Pimat = \betamat$ and $\mmat \bm{1}_{\dimtarg}=\bm{1}_{\dimgeo}$. Rearranging gives:
\begin{align*}
\mmat &= \betamat\Pimat^{-1}\\ 
\betamat\Pimat^{-1}\bm{1}_{\dimtarg} &= \bm{1}_{\dimgeo}\\ 
   \Pimat^{-1}\bm{1}_{\dimtarg} &= (\betamat^\top\betamat)^{-1}\betamat^\top \bm{1}_{\dimgeo}\\
  \frac{
\bm{\Omega}^{-1}\bm{1}_{\dimtarg}
}{
1+\lambda\displaystyle\sum_{\targlow=1}^{\dimtarg}
\frac{1}{\omega_{\targlow}}
}&= (\betamat^\top\betamat)^{-1}\betamat^\top\bm{1}_{\dimgeo} & (\text{Lemma }\ref{lem-piinverse})
\end{align*}
Since $\bm{\Omega}^{-1}\bm{1}_{\dimtarg}$ is simply the column vector of $1/\omega_{\targlow}$ values, this implies that for each $\targlow$,
\begin{equation}\label{eq-prf-keyomegaimp}
    \frac{1}{\omega_{\targlow}} = \left(1 + \lambda\sum_{\targlow=1}^{\dimtarg} \frac{1}{\omega_{\targlow}} \right)\left((\betamat^\top\betamat)^{-1}\betamat^\top\bm{1}_{\dimgeo}\right)_{\targlow}
\end{equation}
Finally, we can solve for the value of $\sum_{\targlow}\frac{1}{\omega_{\targlow}}$ by noting that
\begin{align*}
 \sum_{r=1}^{\dimtarg}\frac{1}{\omega_{\targlow}} = \left(1 + \lambda\sum_{\targlow=1}^{\dimtarg} \frac{1}{\omega_{\targlow}} \right)\bm{1}_{\dimtarg}^\top (\betamat^\top\betamat)^{-1}\betamat^\top\bm{1}_{\dimgeo}
\end{align*}
which, rearranged, implies
\begin{equation*}
   \sum_{r=1}^{\dimtarg} \frac{1}{\omega_{\targlow}} = \frac{\bm{1}_{\dimtarg}^\top (\betamat^\top\betamat)^{-1}\betamat^\top\bm{1}_{\dimgeo}}{1-\lambda \bm{1}_{\dimtarg}^\top (\betamat^\top\betamat)^{-1}\betamat^\top\bm{1}_{\dimgeo}}
\end{equation*}
Overall then, plugging this into equation~\eqref{eq-prf-keyomegaimp} and using $\omega_{\targlow}=\pi_{\targlow}-\lambda$, we have
\begin{align*}
\pi_{\targlow} &= \lambda + \left(1 + \lambda \frac{\bm{1}_{\dimtarg}^\top (\betamat^\top\betamat)^{-1}\betamat^\top\bm{1}_{\dimgeo}}{1-\lambda \bm{1}_{\dimtarg}^\top (\betamat^\top\betamat)^{-1}\betamat^\top\bm{1}_{\dimgeo}} \right)^{-1}\left((\betamat^\top\betamat)^{-1}\betamat^\top\bm{1}_{\dimgeo}\right)_{\targlow}^{-1}\\
&= \lambda + \left(\frac{1}{1-\lambda \bm{1}_{\dimtarg}^\top (\betamat^\top\betamat)^{-1}\betamat^\top\bm{1}_{\dimgeo}} \right)^{-1}\left((\betamat^\top\betamat)^{-1}\betamat^\top\bm{1}_{\dimgeo}\right)_{\targlow}^{-1}\\
&= \lambda + \frac{1-\lambda \bm{1}_{\dimtarg}^\top (\betamat^\top\betamat)^{-1}\betamat^\top\bm{1}_{\dimgeo} }{\left(\betamat^\top\betamat)^{-1}\betamat^\top\bm{1}_{\dimgeo}\right)_{\targlow}}
\end{align*}
\end{proof}

\hrule 

\begin{proof}[\textbf{Proof of equation~\eqref{eq-pi-result}}]
The steps of this proof are similar to the previous, only slightly more complicated. Start with $\betamat= \bm{\Theta}\,\Pimat$ and $\bm{\Theta}\bm{U} = \mmat$. Under the invertibility of $\Pimat$ and $\betamat$ from Assumptions~\ref{a-betarank}~and~\ref{a-homog-nonsep} and the form of $\Pimat$ under Assumption \ref{a-homog-nonsep}, these imply
\begin{align*}
\bm{\Theta} &= \betamat \Pimat^{-1}\\
     \betamat \Pimat^{-1} \bm{U} &= \mmat \\
     \Pimat^{-1} \bm{U} &= (\betamat^\top\betamat)^{-1}\betamat^\top\mmat\\
     \Pimat^{-1} \bm{U}\bm{1}_{\dimcoarsegrp} &= (\betamat^\top\betamat)^{-1}\betamat^\top\mmat \bm{1}_{\dimcoarsegrp}\\
    \Pimat^{-1}\bm{1}_{\dimtarg} &= (\betamat^\top\betamat)^{-1}\betamat^\top\mmat \bm{1}_{\dimcoarsegrp} &(\bm{U}\bm{1}_{\dimcoarsegrp}=\bm{1}_{\dimtarg})\\
\frac{\bm{\Omega}^{-1}\bm{1}_{\dimtarg}}{
1+\lambda\displaystyle\sum_{\targlow=1}^{\dimtarg}
\frac{1}{\omega_{\targlow}}} &= (\betamat^\top\betamat)^{-1}\betamat^\top\mmat\bm{1}_{\dimcoarsegrp}  &(\text{Lemma \ref{lem-piinverse}})
\end{align*}
We then again have
\begin{equation*}
\frac{1}{\omega_{\targlow}} = \left(1 + \lambda\sum_{\targlow'=1}^{\dimtarg} \frac{1}{\omega_{\targlow'}} \right)\left((\betamat^\top\betamat)^{-1}\betamat^\top\mmat \bm{1}_{\dimcoarsegrp} \right)_{\targlow}
\end{equation*}
\noindent and can again solve for $\sum_{\targlow=1}^{\dimtarg} \frac{1}{\omega_{\targlow}}$ in the same way as in the previous proof to get
\begin{align*}
\pi_\targlow & = \lambda + \left(1 + \lambda \frac{\bm{1}_{\dimtarg} ^\top (\betamat^\top\betamat)^{-1}\betamat^\top\mmat \bm{1}_{\dimcoarsegrp}}{1- \lambda \bm{1}_{\dimtarg} ^\top (\betamat^\top\betamat)^{-1}\betamat^\top\mmat \bm{1}_{\dimcoarsegrp}} \right)^{-1} \left\{((\betamat^\top\betamat)^{-1}\betamat^\top \mmat \bm{1}_{\dimcoarsegrp})_{\targlow} \right\}^{-1}\\
& = \lambda + \frac{\left(  1- \lambda \bm{1}_{\dimtarg} ^\top (\betamat^\top\betamat)^{-1}\betamat^\top\mmat \bm{1}_{\dimcoarsegrp} \right)}{((\betamat^\top\betamat)^{-1}\betamat^\top \mmat \bm{1}_{\dimcoarsegrp})_{\targlow} } 
\end{align*}
\end{proof}

\hrule 

\begin{proof}[\textbf{Proof of equation \eqref{eq-pi-result-blocked}}] Suppose $\gamma =0$. Then $\Pimat$ consists of blocks for each coarse group which each takes exactly the form of the overall $\Pimat$ in the previous calculations. Moreover, each $\sum_{\targlow:\coarsegrplow(\targlow)=\coarsegrplow}\Pr(\coarsegrp=\coarsegrplow\mid \geo)=\Pr(\coarsegrp=\coarsegrplow\mid \geo)$ is entirely separate from those for other $\coarsegrplow'$, so the entire system of constraints applies to and can solved only within each coarse group block. Hence the result follows immediately from doing the calculations above within each coarse group block, with $\bm{M}$ now a column vector and $\bm{1}_{\dimcoarsegrp}$ replaced by a scalar $1$. 
\end{proof}

\hrule 

\begin{proof}[\textbf{Proof of equation \eqref{eq-generalized-sensitivity}}]
Suppose $\gamma>0$ and $\gamma\neq \lambda$. We again start with
\begin{align*}
\Pimat^{-1}\bm{U} = (\betamat^\top\betamat)^{-1}\betamat^\top\mmat 
\end{align*}
\noindent Rearranging this equation and substituting in the form of $\bm{\Pi}$ yields
\begin{align*}
    \bm{U} &= \bm{\Pi} (\betamat^\top\betamat)^{-1}\betamat^\top\mmat\\
   \bm{U} &= (\bm{\Omega}+\bm{U}\bm{C}\bm{U}^\top)(\betamat^\top\betamat)^{-1}\betamat^\top\mmat\\
   \bm{U} &= \bm{\Omega}(\betamat^\top\betamat)^{-1}\betamat^\top\mmat + \bm{U}\bm{C}\bm{U}^\top(\betamat^\top\betamat)^{-1}\betamat^\top\mmat\\
      \bm{U}(I_{\dimcoarsegrp} - \bm{C}\bm{U}^\top(\betamat^\top\betamat)^{-1}\betamat^\top\mmat))&= \bm{\Omega}(\betamat^\top\betamat)^{-1}\betamat^\top\mmat
\end{align*}
The left side of this equation consists of entirely known quantities. The right side includes multiplication by a diagonal matrix $\bm{\Omega}$. Consider some $\targlow$ such that $\coarsegrplow=\coarsegrplow(\targlow)$. On the left, row $\targlow$ of $\bm{U}$ is  $e_{\coarsegrplow}$, which denotes the $\coarsegrplow^{th}$ length $\dimcoarsegrp$ standard basis vector. On the right, the fact that $\bm{\Omega}$ is diagonal means that row $\targlow$ corresponds to row $\targlow$ of the matrix $(\betamat^\top\betamat)^{-1}\betamat^\top\mmat$ multiplied by $\omega$. Hence for any column $j=1,...,\dimcoarsegrp$, we have
\begin{align*}
    \omega_{\targlow} \left((\betamat^\top\betamat)^{-1}\betamat^\top\mmat\right)_{\targlow,j}  &= e_{\coarsegrplow}^\top \left(I_{\dimcoarsegrp} - \bm{C}\bm{U}^\top(\betamat^\top\betamat)^{-1}\betamat^\top\mmat\right)_{\cdot  j}\\
    &=\left(I_{\dimcoarsegrp} - \bm{C}\bm{U}^\top(\betamat^\top\betamat)^{-1}\betamat^\top\mmat\right)_{\coarsegrplow,  j}
\end{align*}
Summing each side, this implies
\begin{align*}
    \omega_{\targlow} \left((\betamat^\top\betamat)^{-1}\betamat^\top\mmat\bm{1}_{\dimcoarsegrp}\right)_{\targlow} &=\left(\left(I_{\dimcoarsegrp} - \bm{C}\bm{U}^\top(\betamat^\top\betamat)^{-1}\betamat^\top\mmat\right)\bm{1}_{\dimcoarsegrp}\right)_{\coarsegrplow}\\
     &=\left(\bm{1}_{\dimcoarsegrp} - \bm{C}\bm{U}^\top(\betamat^\top\betamat)^{-1}\betamat^\top\mmat\bm{1}_{\dimcoarsegrp}\right)_{\coarsegrplow}
\end{align*}
As long as the left entry is non-zero, and using $\omega_{\targlow} = \pi_\targlow -\lambda$ this can be rearranged to
\begin{equation}
    \pi_\targlow = \lambda + \frac{\left(\bm{1}_{\dimcoarsegrp} - \bm{C}\bm{U}^\top(\betamat^\top\betamat)^{-1}\betamat^\top\mmat\bm{1}_{\dimcoarsegrp}\right)_{\coarsegrplow}}{\left((\betamat^\top\betamat)^{-1}\betamat^\top\mmat\bm{1}_{\dimcoarsegrp}\right)_{\targlow}}
\end{equation}
\end{proof}

\section{Exact Prompts Used}\label{app-prompts}

All lists are generated by Gemini~3.0 Flash at temperature $1.0$, with six calls for each prompt and $300$ surnames requested per call ($500$ full names in Lebanon), ranked by the number of calls returning each surname. Every prompt substitutes a group description into a fixed skeleton.

\subsection{United States voter files} 

The skeleton is:
\begin{quote}\emph{List \{n\} common SURNAMES (last names only) of \{description\} in the United States. Maximize diversity; avoid famous/celebrity surnames. One surname per line, UPPERCASE, no first names, no numbering, no commentary.}\end{quote}
with the following descriptions:
\begin{itemize}
\item White:
\begin{itemize}
\item ``non-Hispanic White Americans of English or Irish descent''
\item ``non-Hispanic White Americans of German descent''
\item ``non-Hispanic White Americans of Italian, Polish, or other European descent''
\end{itemize}
\item Black:
\begin{itemize}
\item ``Black / African American people (ordinary, everyday)''
\item ``Black Americans from the US South''
\item ``a diverse set of Black Americans, ordinary people and not celebrities''
\end{itemize}
\item Hispanic:
\begin{itemize}
\item ``Mexican Americans''
\item ``Cuban and Puerto Rican Americans''
\item ``Central and South American (Salvadoran, Colombian, Dominican) people in the US''
\end{itemize}
\item Asian:
\begin{itemize}
\item ``Chinese Americans''
\item ``Asian Indian Americans''
\item ``Vietnamese and Filipino Americans''
\item ``Korean and Japanese Americans''
\end{itemize}
\end{itemize}

\subsection{Historical census} 

We use Gemini~3.0 Flash to write a meta prompt that asks the model to write the descriptions itself:
\begin{quote}\small\emph{You are helping build surname lists for \{origin\} immigrants to the United States around 1900. Propose exactly three short descriptions of subgroups that together cover that immigrant population, distinguished by the region they came from or the region of America where they settled. Each description must complete the sentence `list 300 common surnames (last names only) of ...'. Return only the three descriptions, one per line, with no numbering.}\end{quote}
The three descriptions returned for each origin are substituted into the skeleton:
\begin{quote}\emph{List \{n\} common SURNAMES (last names only) of \{description\}. Maximize diversity; avoid famous/celebrity surnames. One surname per line, UPPERCASE, no first names, no numbering, no commentary.}\end{quote}
For the Chinese and Japanese lists of Section~\ref{sec-val-asian}, the meta prompt refers to the population ``in the United States and Hawaii around 1900'', and the skeleton adds ``spelled as they were written in American census and immigration records of that period''. For the white, Black, and American Indian lists of Section~\ref{sec-val-asian}, the meta prompt refers to ``white Americans'', ``Black Americans'', and ``American Indians'' ``in the United States around 1900'', and the descriptions are substituted into the skeleton above. Table~\ref{tab:census-prompts} lists every description used.

\begin{longtable}{p{2.0cm}p{12.6cm}}
\hline Group & Descriptions \\ \hline \endfirsthead
\hline Group & Descriptions \\ \hline \endhead
Chinese & ``Cantonese families from the Pearl River Delta settling in California and the American West'' \\
 & ``Taishanese and Sze Yup laborers migrating to the sugar plantations of the Hawaiian Islands'' \\
 & ``Fujianese and Hokkien merchants and sailors establishing communities in Atlantic port cities like New York'' \\[5pt]
Japanese & ``Japanese immigrants from Hiroshima and Yamaguchi prefectures settling in Hawaii'' \\
 & ``Japanese immigrants from Kyushu and Okinawa settling in the United States and Hawaii'' \\
 & ``Japanese immigrants from Central and Northern Honshu settling in the Pacific Northwest and California'' \\[5pt]
White & ``white Americans in the Northeast and Midwest primarily of British, German, and Irish descent'' \\
 & ``white Americans in the South primarily of English, Scots-Irish, and French descent'' \\
 & ``white Americans in the West and industrial cities primarily of Scandinavian, Central European, and Southern European descent'' \\[5pt]
Black & ``Black Americans residing in the rural Southeastern United States'' \\
 & ``Black Americans living in Northern and Midwestern industrial cities'' \\
 & ``Black Americans in the coastal Lowcountry and Sea Islands regions'' \\[5pt]
American Indian & ``American Indians originally from the Eastern Woodlands and Southeastern United States'' \\
 & ``American Indians from the Great Plains and Great Lakes regions'' \\
 & ``American Indians from the Southwest, Pacific Northwest, and California regions'' \\[5pt]
\hline
\caption{Model-written descriptions for the historical census lists.}\label{tab:census-prompts}
\end{longtable}

\subsection{Lebanon} 
Each prompt embeds an English group description and four to five example full names, held fixed within the sect, into an Arabic skeleton. The skeleton renders in English as:
\begin{quote}\emph{Write \{n\} full names of people from \{descriptor\}. Examples: \{examples\}. Avoid the family names of famous politicians and celebrated families. Write varied, ordinary family names. Write exactly one name per line, with no numbers, no explanation, and no commentary. (Rendered from the Arabic; the original prompts appear in the replication archive.)}\end{quote}
The example names are ordinary full names of the sect. Table~\ref{tab:lebanon-prompts} lists every description used.

\begin{longtable}{p{5.6cm}p{8.9cm}}
\caption{Group descriptions used in the Lebanese sect prompts.}\label{tab:lebanon-prompts}\\
\hline Prompt stem & Completions \\ \hline \endfirsthead
\hline Prompt stem & Completions \\ \hline \endhead
Lebanese Shia Muslim families from [x] & southern villages (Nabatieh, Tyre rural areas — ordinary families, NOT famous religious or political figures) \\
 & Baalbek-Hermel and the northern Beqaa (ordinary villagers) \\
 & the Beirut southern suburbs (Dahieh) — everyday families \\[3pt]
Lebanese Shia Muslim families from [x] (ordinary families, not famous people) & the Nabatieh and Bint Jbeil districts \\
 & Beirut's southern suburbs \\
 & the Baalbek-Hermel countryside \\
 & the Tyre and Sarafand coast \\
 & the Marjeyoun and Khiam area \\
 & the Jezzine and Iqlim al-Tuffah villages \\
 & the Bekaa villages around Hermel \\
 & the Shia families of Baalbek city \\
 & the villages of the Bint Jbeil highlands \\
 & the Sidon and Zahrani hinterland \\
 & the Beirut families of Chiyah and Ghobeiry \\
 & the southern border villages near Aitaroun and Aita al-Shaab \\
 & the villages of Kfar Roummane and Zawtar \\
 & the Shia families of Baalbek's Sharawneh quarter \\[6pt]
Lebanese Sunni Muslim families from [x] & rural Akkar district (northern Lebanon, ordinary villagers — NOT politicians or famous figures) \\
 & Tripoli and the northern coastal region (ordinary residents) \\
 & Beirut and the Beqaa valley (everyday families, not political dynasties) \\[3pt]
Lebanese Sunni Muslim families from [x] (ordinary families, not famous people) & the city of Tripoli and its old quarters \\
 & Beirut's Tariq al-Jadida and Basta \\
 & the Iqlim al-Kharrub and Sidon countryside \\
 & the western Bekaa and Rashaya \\
 & the Akkar plain and Halba \\
 & the Minieh-Dinnieh district \\
 & the old Sidon families \\
 & the Arqoub and Shebaa area \\
 & the Bekaa town of Majdal Anjar and Saadnayel \\
 & the coastal villages between Batroun and Chekka \\
 & the Beirut families of Mazraa and Ras al-Nabaa \\
 & the Hermel and northern Bekaa Sunni villages \\[6pt]
Lebanese Maronite Christian families from [x] & the Kesrouan and Metn districts in Mount Lebanon (ordinary families) \\
 & Bsharri, Zgharta, and the northern mountain region \\
 & the Chouf and southern Mount Lebanon (everyday families) \\[3pt]
Lebanese Maronite Christian families from [x] (ordinary families, not famous people) & the Bsharri and Qadisha valley villages \\
 & the Kesrouan and Jbeil hills \\
 & the Metn and upper Matn villages \\
 & the Zgharta and Koura districts \\
 & the Batroun highlands \\
 & the Jezzine and southern Maronite villages \\
 & the Baabda and Aley Christian villages \\
 & the Ehden and Zawiya families \\
 & the Aakoura and Laqlouq mountain villages \\
 & the Zahle and Bekaa Maronite families \\
 & the Ftouh Kesrouan coastal towns \\
 & the Chouf Maronite villages \\
 & the Tannourine and Hadath al-Jebbeh villages \\
 & the Kfarsghab and Hasroun families \\
 & the Kartaba and Mayfouq villages \\
 & the Ghosta and Harissa families \\
 & the Bikfaya and Dhour Choueir families \\
 & the southern villages of Ain Ebel and Rmeish \\
 & the Deir al-Ahmar and Baalbek Maronite families \\
 & the Douma and Tannourine highland families \\[6pt]
Lebanese Greek (Roman) Orthodox Christian families from [x] & the Koura district in northern Lebanon (ordinary families, NOT famous politicians) \\
 & Beirut's Achrafieh and Ras Beirut quarters \\
 & Tripoli, El Mina, and surrounding northern coastal areas \\[3pt]
Lebanese Greek Orthodox Christian families from [x] (ordinary families, not famous people) & the Koura district villages \\
 & Beirut's Achrafieh and Rmeil \\
 & the Marjeyoun and Hasbaya area \\
 & the Zahle and central Bekaa \\
 & the city of Tripoli's Orthodox families \\
 & the Amioun and Kfaraakka villages \\
 & the Metn Orthodox villages \\
 & the Akkar Orthodox villages \\
 & the Shwayfat and Aley Orthodox families \\
 & the Beirut families of Mousaitbeh and Ras Beirut \\
 & the Batroun and Hamat coastal families \\
 & the Rashaya and Wadi al-Taym Orthodox villages \\
 & the villages of Kousba and Bterram \\
 & the Anfeh and Ras Masqa coastal villages \\
 & the Beit Mery and Baabdat families \\[6pt]
Lebanese Druze families from [x] & the Chouf district villages (ordinary villagers — NOT the Jumblatt, Arslan, or Hamadeh political families, NOT famous figures, ordinary common surnames) \\
 & Aley and Baabda mountain villages (everyday families) \\
 & the Hasbaya and Rashaya districts (southeastern Lebanon) \\[3pt]
Lebanese Druze families from [x] (ordinary families, not famous people) & the Chouf mountain villages \\
 & the Aley and Upper Metn district \\
 & the Hasbaya and Rashaya area \\
 & the Wadi al-Taym villages \\
 & the Baakline and Deir al-Qamar families \\
 & the Shwayfat and Choueifat area \\
 & the Hermon foothill villages \\
 & the Aley district towns of Bhamdoun and Souk al-Gharb \\
 & the Iqlim al-Kharrub Druze villages \\
 & the Mount Lebanon villages of Ain Dara \\
 & the Metn Druze families \\
 & the Rashaya al-Wadi villages \\
 & the Mukhtara and Barouk villages \\
 & the Ain Zhalta and Niha families \\
 & the Abey and Baysour villages \\
 & the Ras al-Metn and Salima families \\
 & the Kfarmatta and Aitat villages \\
 & the Btater and Kayfoun families \\
 & the Jahliyeh and Semqaniyeh villages \\
 & the Aramoun and Bennay families \\
 & the Hasbaya villages of Shwayya and Mimess \\[6pt]
Lebanese Melkite Greek Catholic (Roman Catholic) families from [x] & the Zahle area in the Beqaa valley (ordinary families) \\
 & the Metn district and Beirut \\
 & southern Lebanon (Jezzine area) and other regions \\[3pt]
Lebanese Greek Catholic (Melkite) families from [x] (ordinary families, not famous people) & the Zahle and Furzol area \\
 & the Sidon and Jezzine hinterland \\
 & the Beirut Melkite families \\
 & the Rashaya and western Bekaa \\
 & the Marjeyoun and Deir Mimas villages \\
 & the Baalbek Melkite families \\
 & the Chouf Melkite villages \\
 & the Tyre and southern Melkite families \\
 & the Metn Melkite families \\
 & the Aley and Bhamdoun Melkite families \\
 & the Baabda district Melkite families \\
 & the Bekaa villages around Rayak and Talia \\
 & the Zahle quarters of Maalaka and Karak Nuh \\
 & the Jezzine villages of Roum and Kfarhouna \\
 & the Ras Baalbek and Qaa Melkite families \\[6pt]
\multicolumn{2}{p{14.2cm}}{``Lebanese Armenian Orthodox families living in Bourj Hammoud and Beirut. Surnames are Armenian in origin but spelled in Arabic letters as used in Lebanese voter registries. Provide diverse Armenian family names (NOT just famous figures), and include common Armenian surname suffixes -ian and -oun and -jian rendered in Arabic''} \\[3pt]
Lebanese Armenian Orthodox families from [x] & the Antelias and Anjar areas (Armenian-origin surnames in Arabic script) \\[3pt]
\multicolumn{2}{p{14.2cm}}{``Lebanese Armenian Orthodox families with surnames ending in the typical Armenian patronymic suffixes (-yan, -ian) transliterated to Arabic script''} \\[3pt]
Lebanese Armenian Apostolic families from [x] (ordinary families, not famous people) & the Bourj Hammoud quarter \\
 & the Anjar settlement in the Bekaa \\
 & Beirut's Nor Marash and Nor Sis quarters \\
 & the Antelias and Metn Armenian families \\
 & the Zalka and Jdeideh Armenian families \\
 & the Tripoli Armenian community \\
 & the Ainjar and Bekaa Armenian villages \\
 & the Armenian families of Dora and Nabaa \\
 & the Armenian families descended from Cilicia and Adana \\
 & the Armenian families descended from Marash and Aintab \\
 & the Armenian families descended from Urfa and Kharpert \\
 & the Bourj Hammoud districts of Sanjak and Arax \\[6pt]
\hline
\end{longtable}

\section{Additional Empirical Results}

\subsection{Census-based name list generation}

\begin{table}[H]\centering
\begin{tabular}{lrrrrrrrrr}
\hline
 & & \multicolumn{2}{c}{White} & \multicolumn{2}{c}{Black} & \multicolumn{2}{c}{Hispanic} & \multicolumn{2}{c}{Asian} \\
$\tau$ & Shared & Cov. & Prec. & Cov. & Prec. & Cov. & Prec. & Cov. & Prec. \\
\hline
$0.20$ & $7.0\%$   & $0.28$ & $0.79$ & $0.51$ & $0.37$ & $0.50$ & $0.82$ & $0.42$ & $0.60$ \\
$0.30$ & $0.04\%$  & $0.28$ & $0.79$ & $0.33$ & $0.46$ & $0.50$ & $0.82$ & $0.42$ & $0.64$ \\
$0.40$ & $0.00\%$  & $0.28$ & $0.79$ & $0.17$ & $0.62$ & $0.50$ & $0.82$ & $0.41$ & $0.65$ \\
$0.50$ & $0.00\%$  & $0.28$ & $0.79$ & $0.11$ & $0.75$ & $0.50$ & $0.83$ & $0.40$ & $0.73$ \\
$0.60$ & $0.00\%$  & $0.28$ & $0.79$ & $0.08$ & $0.85$ & $0.49$ & $0.83$ & $0.39$ & $0.74$ \\
\hline
\end{tabular}
\caption{Sensitivity of the Census-based lists to the probability floor $\tau$, where a surname
enters the list for group $\targlow$ when $\Pr(\targ=\targlow\mid \sur=\surlow)\geq\tau$. Coverage and precision are defined as
in Table~\ref{tab:lists}, computed on the Florida file. ``Shared'' is the share of registrants
carrying a surname on more than one of the four lists. The floor used in the paper is $0.30$.}
\label{tab:threshold}
\end{table}

Table~\ref{tab:threshold} reports the Census lists under values of $\tau$ ranging from $0.20$ to $0.60$. The White, Hispanic, and Asian lists are largely unchanged. For these groups, the top $1{,}000$ surnames by the ranking rule clear even the highest value of $\tau$. The Black list is far more sensitive, clearly trading off coverage for precision as $\tau$ rises. $\tau$ also determines the extent to which the group lists have overlapping surnames: at $\tau = 0.20$, $7.0\%$ of registrants have a surname that belongs to two groups, while at $\tau = 0.30$, that share falls to $0.04\%$, and above $\tau = 0.30$, it is zero. We therefore use $\tau = 0.30$, the smallest measured floor at which the lists are close to disjoint.

\subsection{Robustness of the LLM lists}\label{app-listgen}

Figure \ref{fig:listsize} shows the improvement in ROC-AUC and PR-AUC of using $\ell$BISG for two different list generation methods and various list sizes. This is relative to simply using the geographic baseline $\Pr(\targ=\targlow\mid\geo)$. For comparison, the plot also includes the standard BISG probabilities which one would estimate directly if frequency data were available. The figure shows diminishing returns as list size increases in most but not all cases and that performance for the two list generation methods are similar. The lift over geographic baselines is largest for the Hispanic and Asian groups, though in all cases, it is positive. 

\begin{figure}[h]\centering
\includegraphics[width=\textwidth]{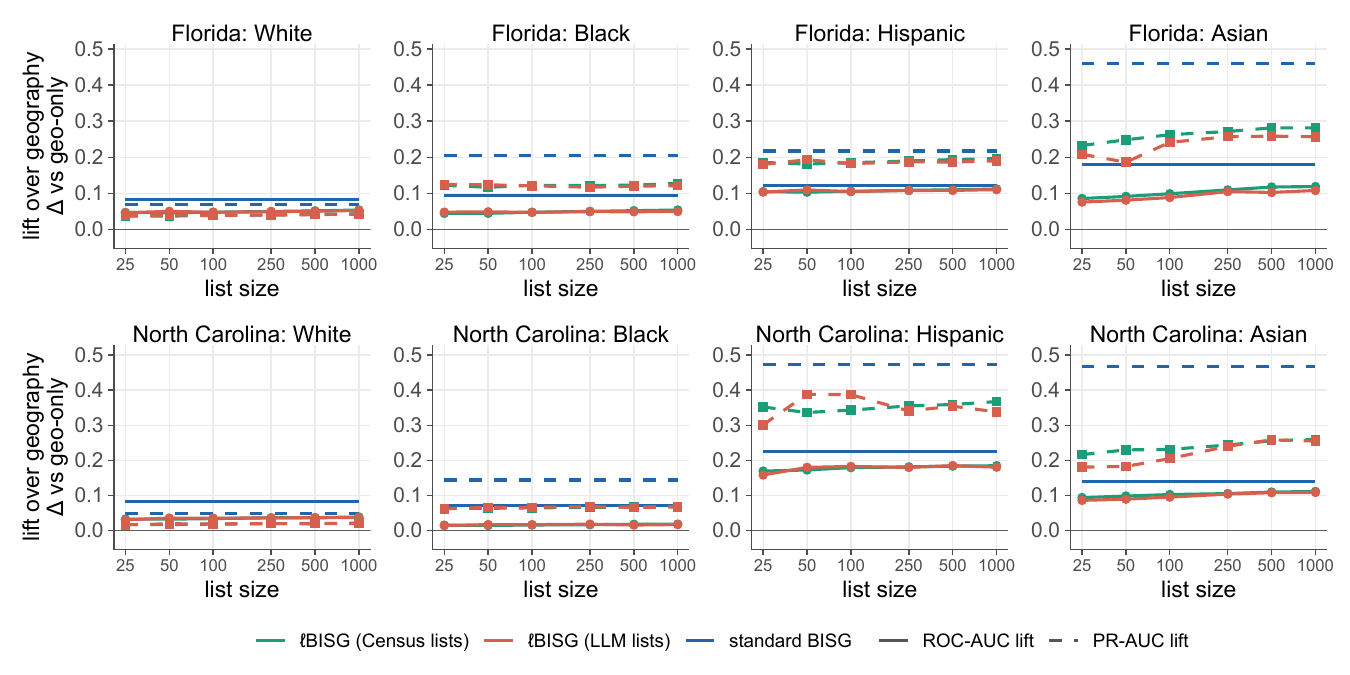}
\caption{Discrimination lift over geography alone, in ROC-AUC and PR-AUC, against the number of surnames per group. Calculated for the Florida (top row) and North Carolina (bottom row) voter files with published Census tract shares as the geographic baseline, for the Census and LLM lists of Table~\ref{tab:lists}. We use separate binary membership targets for each list. Standard BISG is included as a flat reference.}
\label{fig:listsize}
\end{figure}

\begin{figure}[h]\centering
\includegraphics[width=0.86\textwidth]{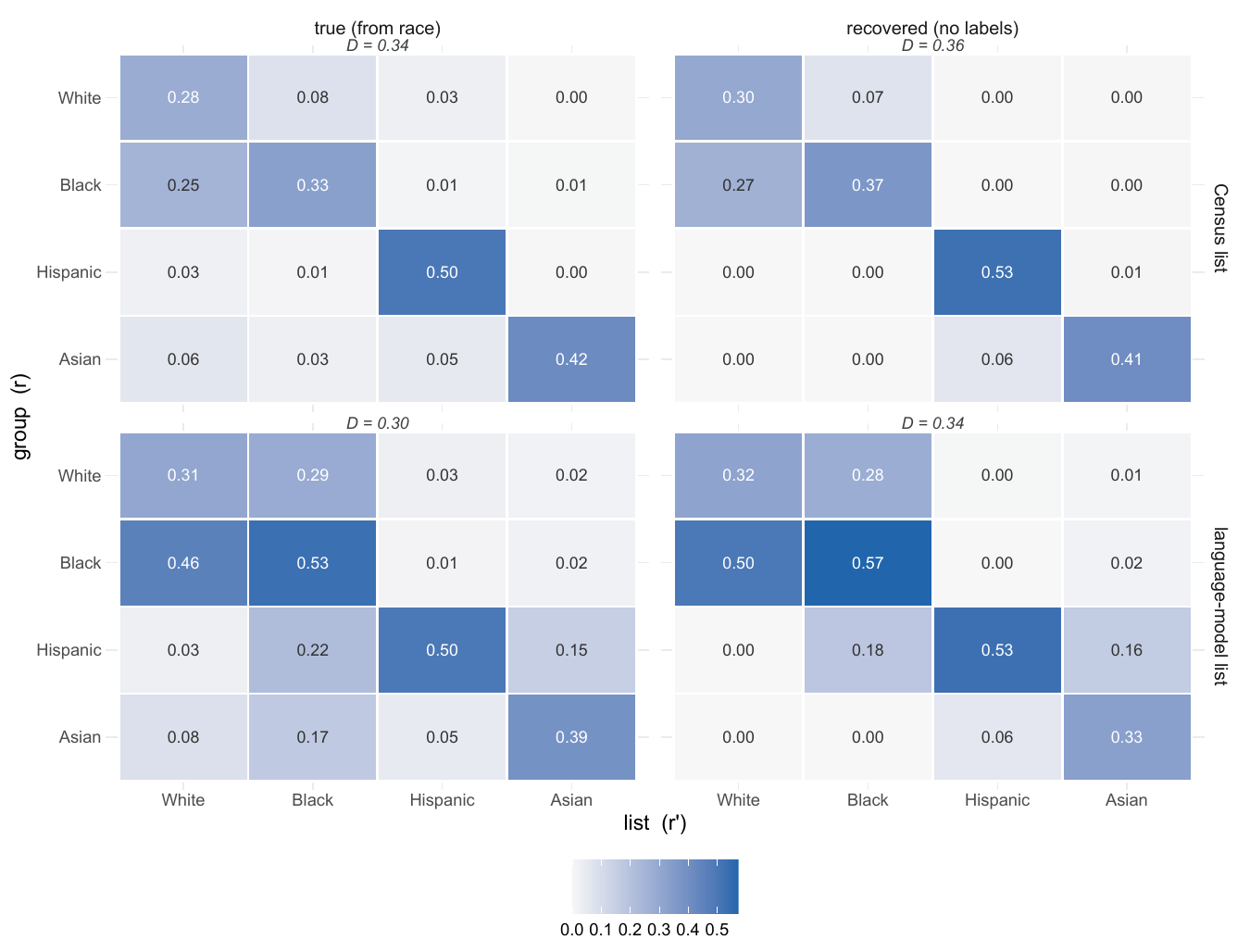}
\caption{List quality on the Florida voter file. Each cell is $\pi_{\targlow,\targlow'}$, the rate at
which members of group $\targlow$ (rows) carry a name on list $\targlow'$ (columns), for the Census
lists (top) and the LLM lists (bottom). The left column computes $\Pimat$ from the self-reported race (i.e., ground-truth). The right column estimates $\Pimat$ from the tract-level list hit rates and the published tract race shares,
with no individual race labels. The number above each panel is $D(\Pimat)$.}
\label{fig:pi-quality}
\end{figure}

\begin{figure}[t]\centering
\includegraphics[width=\textwidth]{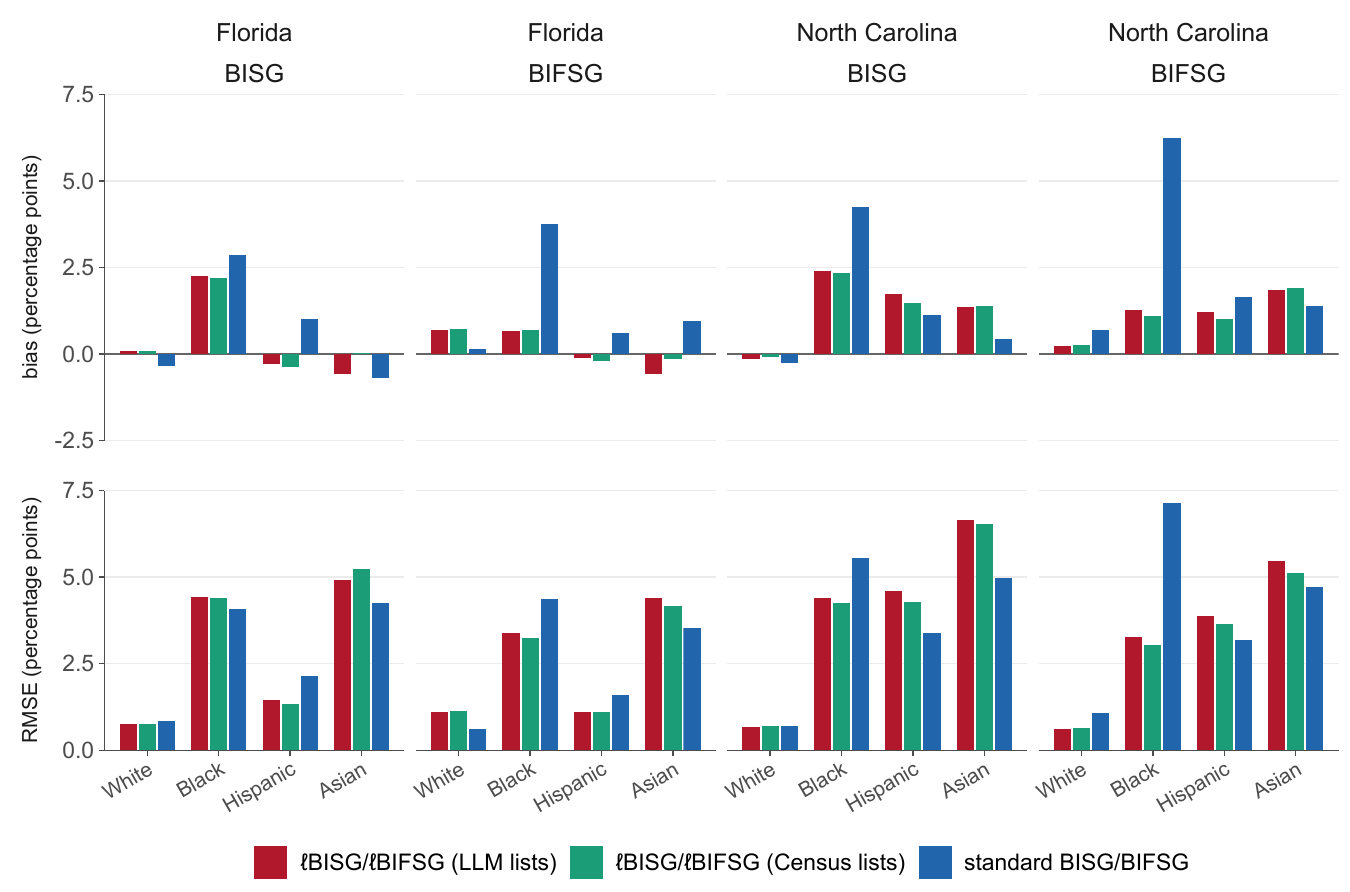}
\caption{County-level race-specific estimates of the Democratic share of registrants with the LLM lists (red), the Census lists (green), and standard BISG/BIFSG (blue), otherwise as Figure~\ref{fig:val-disparity}.}
\label{fig:val-disparity-census}
\end{figure}

\end{document}